\newtheorem{theorem}{\bf Theorem}
\newtheorem{remark}{\bf Remark}
\newtheorem{proposition}{\bf Proposition}
\newtheorem{lemma}{\bf Lemma}
\newtheorem{assumption}{\bf Assumption}
\newcommand{\bx}{{\bf x}}
\newcommand{\by}{{\bf y}}
\newcommand{\bv}{{\bf v}}
\newcommand{\bn}{{\bf n}}
\newcommand{\bm}{{\bf m}}
\newcommand{\bw}{{\bf w}}
\newcommand{\bz}{{\bf z}}
\newcommand{\mc}{\mathcal}
\DeclareMathOperator*{\argmin}{argmin}
\algnewcommand\And{\textbf{and} }
\begin{document}
%header for the first page in arXiv version as IEEE requested
\AddToShipoutPictureBG*{%
  \AtPageUpperLeft{%
    \setlength\unitlength{1in}%
    \hspace*{\dimexpr0.5\paperwidth\relax}%%  change \dimexpr0.5\paperwidth\relax appropriately
    \makebox(0,-0.50)[c]{\normalsize This paper has been accepted for publication in the IEEE Transactions on Robotics.} 
}}
\AddToShipoutPictureBG*{%
  \AtPageUpperLeft{%
    \setlength\unitlength{1in}%
    \hspace*{\dimexpr0.5\paperwidth\relax}%%  change \dimexpr0.5\paperwidth\relax appropriately
    \makebox(0,-0.80)[c]{\normalsize Please cite the paper as: A.R. Pedram, R. Funada, and T. Tanaka,} %
}}
\AddToShipoutPictureBG*{%
  \AtPageUpperLeft{%
    \setlength\unitlength{1in}%
    \hspace*{\dimexpr0.5\paperwidth\relax}%%  change \dimexpr0.5\paperwidth\relax appropriately 
    \makebox(0,-1.10)[c]{\normalsize ``Gaussian Belief Space Path Planning for Minimum Sensing Navigation", IEEE Transactions on Robotics (T-RO), 2023.} %
}}

\title{Gaussian Belief Space Path Planning for \\ Minimum Sensing Navigation}

\author{Ali~Reza~Pedram$^{1}$ \qquad % <-this % stopson the space 
        Riku~Funada$^{2}$ \qquad
        Takashi~Tanaka$^{3}$ 
        %and~Jane~Doe,~\IEEEmembership{Life~Fellow,~IEEE}% <-this % stops a space

\thanks{©2022 IEEE. Personal use of this material is permitted. Permission from IEEE must be obtained for all other uses, in any current or future media, including reprinting/republishing this material for advertising or promotional purposes, creating new collective works, for resale or redistribution to servers or lists, or reuse of any copyrighted component of this work in other works.}
\thanks{*This work is supported by Lockheed Martin Corporation, FOA-AFRL-AFOSR-2019-0003, and JSPS KAKENHI GrandNumber 21K20425.}
\thanks{$^{1}$ Walker Department of Mechanical Engineering, University of Texas at Austin. {\tt\small apedram@utexas.edu}.
        $^2$ Department of Systems and Control, Tokyo Institute of Technology. {\tt\small funada@sc.e.titech.ac.jp}.
         $^{3}$Department of Aerospace Engineering and Engineering Mechanics, University of Texas at Austin.
        {\tt\small ttanaka@utexas.edu}. }%
}

\maketitle

\begin{abstract}
We propose a path planning methodology for a mobile robot navigating through an obstacle-filled environment to generate a reference path that is traceable with moderate sensing efforts.
The desired reference path is characterized as the shortest path in an obstacle-filled Gaussian belief manifold equipped with a novel information-geometric distance function. The distance function we introduce is shown to be an asymmetric quasi-pseudometric and can be interpreted as the minimum information gain required to steer the Gaussian belief.
An RRT*-based numerical solution algorithm is presented to solve the formulated shortest-path problem.  To gain insight into the asymptotic optimality of the proposed algorithm, we show that the considered path length function is continuous with respect to the topology of total variation. 
Simulation results demonstrate that the proposed method is effective in various robot navigation scenarios to reduce sensing costs, such as the required frequency of sensor measurements and the number of sensors that must be operated simultaneously.
\end{abstract}

\begin{IEEEkeywords}
Minimum Sensing Navigation, Belief Space Path Planning, Information Theory, RRT*. 
\end{IEEEkeywords}

\IEEEpeerreviewmaketitle

%\vspace{-0.2cm}
\section{Introduction}
\IEEEPARstart{I}{n} robot motion planning, the reference path generation is often performed independently of the feedback control design for path following.
While such a two-stage procedure leads to a suboptimal policy in general, it simplifies the problem to be solved and the resulting performance loss is acceptable in many applications. 
The two-stage procedure also benefits from powerful geometry-based trajectory generation algorithms (e.g., A* \cite{hart1968formal}, PRM \cite{kavraki1996probabilistic}, RRT \cite{lavalle2001rapidly}), as other factors such as dynamic constraints, stochasticity and uncertainty can often be resolved in the control design stage.

Motion planning is more challenging if the robot's configuration is only partially observable through noisy measurements. 
A common approach to such problems is via the \emph{belief state formalism} \cite{platt2010belief,van2011lqg}. In this approach,  the Bayesian estimate (i.e., a probability distribution) of the robot’s state is considered as a new state, called the \emph{belief state}, whereby the original stochastic optimal control problem with a partially-observable state is converted into an equivalent stochastic optimal control problem with a fully observable state. The belief state formalism makes the aforementioned two-stage motion planning strategy applicable in a similar manner, except that both path generation and tracking are performed in the space of belief states (\emph{belief space}). This approach is powerful especially if the belief space is  representable by a small number of parameters (e.g., Gaussian beliefs, which can be parametrized by mean and covariance only).

We consider the problem of generating a reference path in the Gaussian belief space such that the path length with respect to a particular quasi-pseudometric on the belief manifold is minimized. The quasi-pseudometric we choose is interpreted as the weighted sum of the Euclidean travel distance and the information gain required to steer the belief state. Solving the shortest path problem  therefore means finding a joint sensing and control strategy for a robot to move from a given initial Gaussian belief to a target Gaussian belief while minimizing the weighted sum of the travel distance and the cost of sensing. 

\begin{figure}[t]
\vspace{-0.2cm}
\centering
\includegraphics[trim = 0cm 0cm 0cm 0cm, clip=true, width = 0.65\columnwidth]{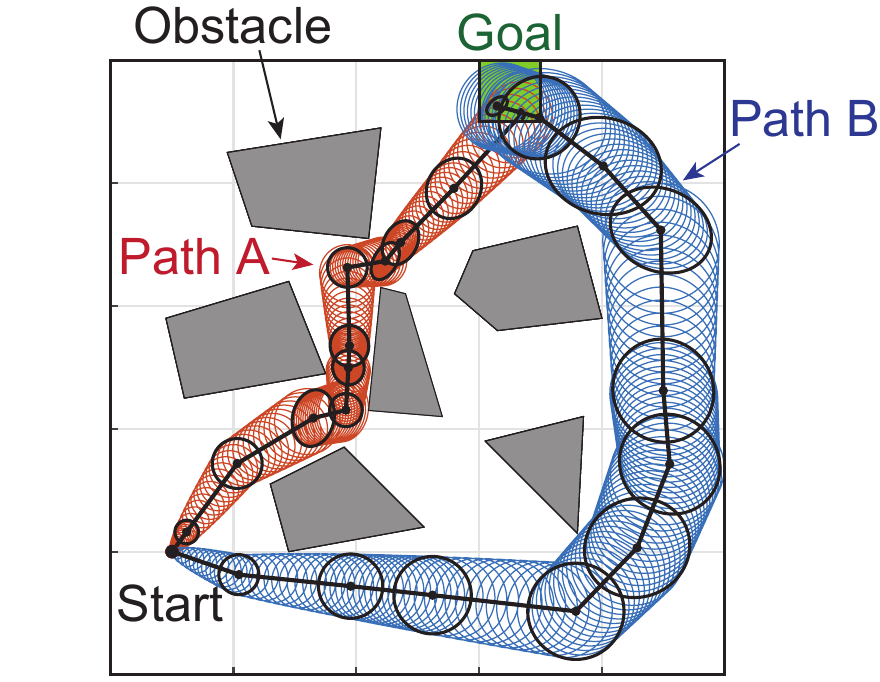}
%\fbox{\includegraphics[trim = 0cm 0cm 0cm 0cm, clip=true, width = 0.65\columnwidth]{}}
\caption{Simulation results of the proposed algorithm. Path A prioritizes to minimize the Euclidean travel distance, while path B prioritize to reduce the information gain required to follow the path. 
}
\label{fig:intro_example}
\end{figure}
\vspace{-0.4cm}
\subsection{Motivation}
The shortest path problem we formulate is motivated by the increasing need for simultaneous perception and action planning in  modern, information-rich autonomy.
Due to the wide availability of low-cost and high-performance sensing devices, obtaining a large amount of sensor data has become easier in many applications. Nevertheless, operating a sensor at its full capacity may not be the best strategy for resource-constrained robots, especially if it drains the robot's scarce power or computational resources with little benefit. 
As sensor modalities increase, how to achieve a given task with minimum perceptual resources (e.g., with reduced sensing frequencies or sensor gains) becomes an increasingly relevant question. 
For instance, planetary rovers need to estimate the wheel slippage using visual odometry (VO) when traversing  harsh and unknown terrains \cite{kilic2021slip}.  However, using VO reduces the navigation speed  as the rover needs to stop frequently to capture images \cite{toupet2020terrain} and drive slowly due to its limited computational capability \cite{li2008characterization}. Particularly, the Mars Science Laboratory rover reaches a maximum speed of $140$\,m/h in blind-drive mode (no VO update) and $45$\,m/h in hazard avoidance mode (VO  update every $10$ meters) \cite{grotzinger2012mars}. Another example is the vision-based navigation of micro aerial vehicles (MAV), where the visual data are sent to a ground station (e.g., see \cite{blosch2010vision}) or are processed by computationally-constrained onboard processors (e.g., see \cite{shen2011autonomous}). In navigation of MAVs, the available resources (computation speed, memory, power, and communication bandwidth) are limited and require specific consideration. These examples showcase the importance of perception effort management in autonomous navigation.

In navigation tasks, required sensing effort critically depends on the geometry of the planned paths. For example, Path A in Fig.~\ref{fig:intro_example} offers a shorter travel distance; however, the sensing effort required to trace it is high as the robot's locational uncertainty needs to be kept small. Depending on the cost of perception, taking a longer path (such as Path B) that is traceable with less sensing cost may be preferable. Thus, we aim to develop a path planning methodology that allows a ``minimum sensing'' navigation that can flexibly comply with the robot’s perceptual resource constraint. 
\vspace{-0.4cm}
\subsection{Related Work}
This subsection provides a non-exhaustive list of related works categorized from the perspective of 1) belief space planning, 2) chance-constrained path planning, 3) information theory in path planning, and 4) controlled sensing.  
\subsubsection{Belief space planning} Belief space path planning for uncertain systems  \cite{lavalle2006planning} has been studied in various forms in the literature.
The work \cite{alterovitz2007stochastic} studied path planning for systems with uncertain dynamics within a fully observable and geometrically known environment. 
The work \cite{agha2014firm} generalized the results of \cite{alterovitz2007stochastic} by incorporating the sensing uncertainty and presented a feedback-based information roadmap. 
The belief-space probabilistic roadmap (BRM) is presented in \cite{prentice2009belief}, wherein a factored form of the covariance matrix is used, leading to efficient posterior belief predictions.
The work \cite{roy1999coastal}
presented the advantage of a path plan (the \emph{coastal navigation} strategy) that best assists the robot's perception during the navigation. The framework of safe path-planning \cite{lambert2003safe, pepy2006safe} is also 
established for path planning in the belief space to provide a planned path with a safety guarantee.

The probability distribution of closed-loop trajectories under linear feedback policies has been characterized in \cite{bry2011rapidly, van2011lqg} which allows the evaluation of the probability of collision with obstacles. The work \cite{van2011lqg} uses the probability of collision to  search for ``safe'' trajectories among the ones generated by RRT.  Instead, \cite{bry2011rapidly} incrementally constructs a graph of ``safe'' trajectories with the aid of RRT*. 

In \cite{van2012motion, van2017motion}, belief space path planning for robots with imperfect state information is studied. 
In \cite{van2012motion}, a belief space variant of stochastic dynamic programming is introduced to find the optimal belief trajectory. 
Alternatively, \cite{van2017motion} used the belief state iterative LQG 
which is shown to have a lower computation complexity and a better numerical stability compared with \cite{van2012motion}. The authors of \cite{sun2016stochastic}
proposed the stochastic extended linear quadratic regulator for motion planning in Gaussian belief space, which simultaneously computes the optimal trajectory and the associated linear control policy for following that trajectory.  Belief space path planning in environments with discontinuities in sensing domains is studied in \cite{patil2014gaussian}.

In addition to generating a nominal path, many of the aforementioned works (e.g., \cite{van2012motion,agha2014firm, van2017motion, sun2016stochastic}) provide local controllers to stabilize the system around the nominal path. These controllers eliminate the need for extensive replanning during the execution. A belief space path-following algorithm is also considered in \cite{platt2010belief},
where the nonlinear stochastic dynamics of the belief state is linearized, and a local LQR controller is used. In \cite{platt2010belief}, a replanning strategy is also proposed to update the reference trajectory when divergence from the planned trajectory is too large to be handled by the LQR controller.

\subsubsection{Chance-constrained path planning} Belief space path planning is closely related to the large body of literature on chance-constrained (CC) path planning \cite{blackmore2006probabilistic,  blackmore2011chance,vitus2011closed}, where the focus is on the development of path planning algorithms under probabilistic safety constraints (see e.g., \cite{ono2008iterative, jasour2015semidefinite}). Basic CC methods for linear-Gaussian systems have been extended to address non-linear and non-Gaussian problems \cite{blackmore2010probabilistic, wang2020non} and to handle the joint chance constraints \cite{ono2015chance}.

While CC formulations often suffer from poor scalability \cite{aoude2013probabilistically}, a method to overcome such difficulties is proposed in \cite{dai2019chance}. The computational complexity of CC algorithms is studied in \cite{da2019collision}.  Alternatively,  \cite{luders2010chance} proposed a sampling-based method called CC-RRT method that allows efficient computation of feasible paths. The CC-RRT algorithm is  generalized in \cite{du2011robot, kothari2013probabilistically}  for CC path planning in dynamic environments.
The works \cite{luders2013robust, liu2014incremental} introduced several variants of CC-RRT* algorithm where the convergence to the optimal trajectory is guaranteed.

\subsubsection{Information theory (IT) in path planning}  Information-theoretic concepts have been utilized in path planning problems by several prior works, albeit differently from our approach in this paper. The work \cite{he2008planning} proposed a modification for BRM \cite{prentice2009belief} in which the states where informative measurements are available (i.e., where a large entropy reduction is expected) are sampled more frequently. Informative path planning is investigated in \cite{levine2013information}, where the sensing agents seek to both maximize the information gathered about the target position (or the environment), quantified by Fisher information matrix, and minimize the cost of traversing to its goal state. 
In \cite{folsom2021scalable}, information-theoretic path planning is studied, where a Mars helicopter uses RRT*-IT to explore and obtain information about the surface of Mars, expressed in terms of the reduction in the standard deviation of the belief terrain type distribution, in a shortest time. 
IT is also used to reduce the computational complexity of the path planning in \cite{larsson2020information}, where it is suggested to obtain abstractions of the search space by the aid of IT and perform the path planning in abstracted representation of the search space.

\subsubsection{Controlled sensing} The works mentioned above assume either no or fixed sensor modalities. In modern autonomy, where variable sensor modality is available, it is becoming increasingly meaningful to model the strategic sensing aspect explicitly in the problem formulation. The partially observable Markov decision process (POMDP) framework is widely used for controlled sensing design \cite{krishnamurthy2016partially}.  
The work \cite{carlone2018attention} proposed a greedy algorithm for strategic sensing in vision-based navigation, where during the path execution, the robot chooses only a small number of landmarks that are most relevant to its task.
The authors of \cite{tzoumas2020lqg} incorporated a restriction on sensing budget  into the optimal control problem,  and proposed an algorithm for control and sensing co-design. The optimal estimation through a network of sensors operated under sensing constraints is explored in \cite{hashemi2020randomized}. 
\vspace{-0.3cm}
\subsection{Proposed Approach}
\label{sec:proposed_approach}

Noticing that belief-space path planning and strategic sensing are inseparable problems, we propose to perform perception planning and motion planning simultaneously. To this end, we propose a two-stage procedure similar to the one discussed above: in the first stage, a reference path in the belief space is generated by an off-line path planner, and in the second stage, the reference path is followed by operating sensors and actuators in real-time. 

In the first stage, we solve a shortest path problem in the belief space. In this shortest path problem, we use a non-Euclidean information-geometric distance function which captures not only the travel distance but also the expected perception cost required to follow the path. Specifically, we identify the cost of transitioning from one Gaussian belief state to another with a weighted sum of the Euclidean distance between their means and the information gain (i.e., the entropy reduction) required to update the belief covariance. The ``distance'' notion introduced this way defines a quasi-pseudometric on the belief manifold, making the shortest path problem on the Gaussian belief space well-defined. We then apply an RRT*-based path planner \cite{karaman2011sampling} to find the shortest path numerically.

Note that we do not incorporate any sensor models nor actuator models in the first stage. Although the idea of estimating perception costs without assuming sensor models may sound unconventional, the aforementioned information-geometric distance is chosen to provide a reasonable guess of the actual perception cost regardless of the types of the sensors used in real-time navigation. The proposed “model-free'' approach is beneficial in applications where sensor models are not available or cumbersome to obtain. This benefit can be understood by invoking why ``model-free'' path planners such as A*, RPM, RRT are popular, even though they do not incorporate the robot’s dynamic constraints (e.g., actuator models) to generate a reference path.

Since the belief path generated in the first stage only serves as a reference trajectory to be tracked, an appropriate trajectory tracking algorithm is needed in the second stage. Notice that, in order to follow a belief path, both the actuators and the sensors must be operated in real-time by an appropriate joint sensing/feedback-control policy. Belief LQR \cite{platt2010belief} is a popular method for such an implementation. In this paper, we consider an event-based sensing strategy similar to \cite{miskowicz2018event} along with feedback controllers to achieve belief path tracking. 
\vspace{-0.1cm}
\subsection{Contribution}
The technical contributions of this paper are as follows:
\begin{itemize}
\item[(a)] 
We formulate  a shortest path problem in a Gaussian belief space with respect to a novel ``distance'' function  $\mathcal{D}$ (defined by \eqref{eq:def_D} below) that  characterizes the weighted sum of the Euclidean travel distance and the sensing cost required for  trajectory following assuming  simple mobile robot dynamics (equation \eqref{eq:euler3} below). 
We first show that $\mathcal{D}$ is a quasi-pseudometric, i.e., $\mathcal{D}$ satisfies the triangle inequality (Theorem~\ref{theo:tria}) but fails to satisfy symmetry and the identity of indiscernibles. We then introduce a path length concept using $\mathcal{D}$, for which the shortest path problem is formulated as \eqref{eq:main_problem}.
\item[(b)] We develop an RRT*-based algorithm for the shortest path problem described in part (a). Besides a basic version (Algorithm~\ref{algo:1}), we also develop a modified algorithm with improved computational efficiency (Algorithm~\ref{algo:2}). 
We also show how the sensing constraints can be incorporated into the developed algorithm.
\item[(c)] We prove that the path length function characterized in the problem formulation in part (a) is continuous with respect to the topology of total variation. This result is critical to prove the asymptotic optimality of the proposed RRT*-based algorithm, although a complete proof must be postponed as future work. 
\item[(d)] The practical usefulness of the proposed motion planning approach is demonstrated by simulation studies. We show that the algorithms in part (b) can be easily combined with the existing ideas for belief path following (e.g., belief LQR with event-based sensing or greedy sensor selection) to efficiently reduce sensing efforts (e.g., frequency of sensing or the number of sensors to be used simultaneously) for both the scenario when the robot's dynamics are close to \eqref{eq:euler3}, and  the scenario when the dynamics are significantly different from it. 
\end{itemize}

In our previous work \cite{pedram2021rationally}, we proposed the distance function $\mathcal{D}$ while the analysis on this metric was limited to 1-D spaces. In this paper, the analyses are extended to $N$-dimensional spaces. To establish these results, we provide novel proofs (summarized in Appendix \ref{ap:zero}, \ref{ap:B}, and \ref{ap:C}) which are significantly different from our previous work.  
We also develop two novel algorithms as described in (b). One of the major improvements is the introduction of the concept of \emph{losslessness}. This concept allows us to develop computationally efficient algorithms and to establish a continuity result in Theorem~\ref{theo:continuity2}. Finally, this paper provides comprehensive simulation results demonstrating the effectiveness of the proposed method for mitigating sensing costs during the path following phase, which was not discussed in the previous work.
\vspace{-0.2cm}
\subsection{Outline of the Paper}
The rest of the paper is organized as follows:
In Section~\ref{sec:prelim}, we summarize basic information-geometric concepts that are necessary to formally state the shortest path problem in Section~\ref{sec:formulation}. 
In Section~\ref{sec:algorithm}, we present the proposed RRT*-based algorithms and prove the continuity of the path length function to shed light on its asymptotic optimality. 
Section~\ref{sec:simulation} presents
simulation studies demonstrating the effectiveness of the proposed planning strategy.
We conclude with a list of future work in Section~\ref{sec:conclusion}.

\subsection{Notation and Convention}
\label{sec:notation}
Vectors and matrices are represented by lower-case and upper-case symbols, respectively. Random variables are denoted by bold symbols such as $\mathbf{x}$. The following notation will be used: $\mathbb{S}^d=\big\{P \in \mathbb{R}^{d \times d} : \text{ $P=P^\top$} \big\}$, $\mathbb{S}^d_{++}=\big\{P \in \mathbb{S}^d : P \succ 0 \big\}$, and $\mathbb{S}^d_{\rho}=\big\{P \in \mathbb{S}^d : P \succeq \rho I \}$  for a real value $\rho > 0$. $\bar{\sigma}(M)$ and $\|M\|_F$ represent the maximum singular value and the Frobenius norm of the matrix $M$, respectively. The vector $2$-norm is denoted by $\|\cdot\|$. $\mathcal{N}(x, P)$ represents a Gaussian random variable with  mean $x$ and covariance of $P$. $\chi^2({\textup{Pr}})$ is the $\textup{Pr}$-th quantile of the Chi-squared distribution. For the simplicity, we use $\chi^2$ in the sequel.
\vspace{-0.6cm}
\section{Preliminaries}
\label{sec:prelim}  
In this section, we introduce an appropriate distance notion on a Gaussian belief space which will be needed to formulate a shortest path problem in Section~\ref{sec:formulation}. We also study the mathematical properties of the introduced distance notion.
\vspace{-0.6cm}
\subsection{Assumed Dynamics}
The distance concept we introduce can be interpreted as a navigation cost for a mobile robot whose location uncertainty (covariance matrix) grows linearly with the Euclidean travel distance when no sensor is used. 
Specifically, suppose that the reference trajectory for the robot is given as a sequence of way points $\{x_k\}_{k=0,1, ... , K}$ in the configuration space $\mathbb{R}^d$, and that the robot is commanded with a constant unit velocity input
\[
v_k := \frac{x_{k+1}-x_k}{\|x_{k+1}-x_k\|}
\]
to move from $x_k$ to $x_{k+1}$. Let $t_k$ be the time that the robot is scheduled to visit the $k$-th way point $x_k$, defined sequentially by $t_{k+1}-t_k=\|x_{k+1}-x_k\|$.
We assume that the actual robot motion is subject to stochastic disturbance.
Let $\bx(t_k)$ be the random vector representing the robot's actual position at time $t_k$. In an open-loop control scenario, it is assumed to satisfy
\begin{equation}
 \label{eq:euler3}
\bx(t_{k+1})=\bx(t_k)+(t_{k+1}-t_k)v_k+\bn_k
\end{equation}
where $\bn_k \sim \mathcal{N}(0, \|x_{k+1}-x_k\|W)$ is  a Gaussian disturbance whose covariance matrix is proportional to the commanded travel distance. In feedback control scenarios, the command input $v_k$ is allowed to be dependent on sensor measurements.

We emphasize that the simple dynamics \eqref{eq:euler3} are assumed solely for the purpose of introducing a distance notion on a Gaussian belief space in the sequel. The algorithm we develop in Section~\ref{sec:algorithm} can be used even if the actual robot dynamics are significantly different from \eqref{eq:euler3}.
This choice is similar to the fact that RRT* for Euclidean distance minimization is widely used even in applications where Euclidean distance in the configuration space does not capture the motion cost accurately.\footnote{However, we also note that there are many works that incorporate non-Euclidean metrics in RRT* to better approximate true motion costs.}
To follow this philosophy, we strategically adopt a simple model \eqref{eq:euler3} and leave more realistic dynamic constraints to be addressed in the path following control phase. 

\subsection{Gaussian Belief Space and Quasi-pseudometric}
\label{subsec:metric}
In Gaussian belief space planning, a reference trajectory is given as a sequence of belief way points $b_k=(x_k,P_k), k=0, 1, ... , K$, where $x_k\in\mathbb{R}^d$ and $P_k\in \mathbb{S}_{++}^d$ are planned mean and covariance of the random vector $\bx(t_k)$.
In the sequel, we call $\mathbb{B}:=\mathbb{R}^d \times \mathbb{S}_{++}^d$ the \emph{Gaussian belief space} or simply the \emph{belief space}.
We first introduce an appropriate directed distance function from a point $b_k=(x_k, P_k)$  to another $b_{k+1}=(x_{k+1}, P_{k+1})$. 
The distance function is interpreted as the cost of steering the Gaussian probability density characterized by $b_k$ to the one characterized by $b_{k+1}$. 
We assume that the distance function is a weighted sum of the travel cost $\mathcal{D}_{\text{travel}}(b_k, b_{k+1})$ and the information cost $\mathcal{D}_{\text{info}}(b_k, b_{k+1})$.

\subsubsection{Travel cost}
We assume that the travel cost is simply the commanded travel distance:
\[
\mathcal{D}_{\text{travel}}(b_k, b_{k+1}):=\|x_{k+1}-x_k\|.
\]

\subsubsection{Information cost}
Assuming that no sensor measurement is utilized while the deterministic control input $v_k$ is applied to \eqref{eq:euler3}, the covariance at time step $k+1$ is computed as
\begin{equation}
\label{eq:p_prior}
\hat{P}_{k+1}:=P_k+\|x_{k+1}-x_k\|W.
\end{equation}
We refer to $\hat{P}_{k+1}$ as the prior covariance at time step $k+1$.
Suppose that the prior covariance is updated to the posterior $P_{k+1}(\preceq \hat{P}_{k+1})$ by a sensor measurement $\by_{k+1}$ at time step $k+1$. (See Section~\ref{sec:formulation_interpret} for a discussion on sensing actions enabling this transition).
The minimum information gain required for this transition is given by the entropy reduction:
\begin{align}
\mathcal{D}_{\text{info}}(b_k, b_{k+1})&= h(\bx_{k+1}|\by_0, \cdots ,\by_k)-h(\bx_{k+1}|\by_0, \cdots, \by_{k+1}) \nonumber \\
&=\frac{1}{2}\log\det \hat{P}_{k+1} - \frac{1}{2}\log\det P_{k+1}. \label{eq:info_gain1}
\end{align} 
Here,  $h(\cdot|\cdot)$ denotes conditional differential entropy. Intuitively, $\mathcal{D}_{\text{info}}(b_k, b_{k+1})$ represents the minimum bits of information required to reduce the uncertainty from $\hat{P}_{k+1}$ to $ P_{k+1}$.

Note that for any physically ``meaningful'' belief update, the inequality $P_{k+1}\preceq \hat{P}_{k+1}$ should be satisfied, as the posterior uncertainty $P_{k+1}$ should be ``smaller'' than the prior  uncertainty $\hat{P}_{k+1}$. 
The posterior uncertainty is smaller because incorporating the information of a measurement $y_{k+1}$ never increases the uncertainty. In the sequel, we say that a transition from $b_k$ to $b_{k+1}$ is \emph{lossless} if the inequality $P_{k+1}\preceq \hat{P}_{k+1}$ is satisfied.
If the transition from $b_k$ to $b_{k+1}$ is lossless, the formula \eqref{eq:info_gain1} takes a non-negative value and hence it can be used in the definition of a (directed) distance from $b_k$ to $b_{k+1}$.
However, in order for the shortest path problem on a Gaussian belief space $\mathbb{B}$ to be well-defined, the distance function must be well-defined for arbitrary pairs $(b_k, b_{k+1})$. To generalize \eqref{eq:info_gain1} to pairs $(b_k, b_{k+1})$ that are not necessarily lossless, we adopt the following definition:
\begin{subequations}
\label{eq:d_info_general0}
\begin{align}
\mathcal{D}_{\text{info}}(b_k, b_{k+1})=\min_{Q_{k+1}\succeq 0} & \  \frac{1}{2}\log\det \hat{P}_{k+1}\!-\!\frac{1}{2}\log\det Q_{k+1} \label{eq:d_info_general}\\
\text{s.t.\ \ } &\quad Q_{k+1} \preceq P_{k+1}, \;\; Q_{k+1} \preceq \hat{P}_{k+1}.\label{eq:d_info_general1}
\end{align}
\end{subequations}
Notice that for any given pair $(P_k, P_{k+1})$, \eqref{eq:d_info_general0} takes a non-negative value, and \eqref{eq:d_info_general0} coincides with \eqref{eq:info_gain1} if  the transition from $b_k$ to $b_{k+1}$ is lossless.
To see why \eqref{eq:d_info_general0} is a natural generalization of \eqref{eq:info_gain1}, consider a two-step procedure $\hat{P}_{k+1}\rightarrow Q_{k+1}\rightarrow P_{k+1}$ to update the prior covariance $\hat{P}_{k+1}$ to the posterior covariance $P_{k+1}$. In the first step, the uncertainty is ``reduced'' from $\hat{P}_{k+1}$ to $Q_{k+1}(\preceq \hat{P}_{k+1})$. The associated information gain is $\frac{1}{2}\log\det \hat{P}_{k+1}-\frac{1}{2}\log\det Q_{k+1}$.
In the second step, the covariance $Q_{k+1}$ is ``increased'' to $P_{k+1}(\succeq Q_{k+1})$.
This step incurs no information cost, since the location uncertainty can be increased simply by ``forgetting'' the prior knowledge. The optimization problem \eqref{eq:d_info_general0} is interpreted as finding the optimal intermediate step $Q_{k+1}$ to minimize the information gain in the first step.

\begin{remark}
The expression \eqref{eq:d_info_general0} characterizes $\mathcal{D}_{\text{info}}(b_k, b_{k+1})$ as a value of convex program (more precisely, the max-det program \cite{vandenberghe1998determinant}).
Lemma~\ref{lemma:explicit} in Appendix~\ref{ap:zero} provides a method to solve \eqref{eq:d_info_general0} directly using the singular value decomposition.
\end{remark}

Since only lossless transitions are physically meaningful, the path planning algorithms we develop in Section~\ref{sec:algorithm} below are designed to produce a sequence of lossless transitions as an output. In Theorem~\ref{theo:loss-lessmod} in Section~\ref{sec:formulation}, we will formally prove that the optimal solution to the shortest path problem can be assumed lossless without loss of generality.

\subsubsection{Total cost}
The total cost to steer the belief state from $b_k=(x_k,P_k)$ to $b_{k+1}=(x_{k+1},P_{k+1})$
is a weighted sum of $\mathcal{D}_{\text{travel}}(b_k, b_{k+1})$ and $\mathcal{D}_{\text{info}}(b_k, b_{k+1})$.
Introducing $\alpha>0$, we define the total cost as
\begin{equation}
\label{eq:def_D}
\mathcal{D}(b_k, b_{k+1}):=  \; \mathcal{D}_{\text{travel}}(b_k, b_{k+1})+\alpha \mathcal{D}_{\text{info}}(b_k, b_{k+1}).
\end{equation}
Throughout this paper, the total cost function \eqref{eq:def_D} serves as a distance metric with which the lengths of the belief paths are measured. Before the shortest path problem is formally formulated in the next section, it is worthwhile to note the following key properties of the function \eqref{eq:def_D}:
\begin{enumerate}[label=(\roman*)]
    \item $\mathcal{D}(b_1, b_2)\geq 0 \;\; \forall b_1, b_2 \in \mathbb{B}$;
    \item $\mathcal{D}(b, b)= 0 \;\; \forall b \in \mathbb{B}$; and
    \item $\mathcal{D}(b_1, b_2)\leq \mathcal{D}(b_1, b_3)+\mathcal{D}(b_3, b_2) \;\; \forall b_1, b_2, b_3 \in \mathbb{B}$.
\end{enumerate}
The first two properties are straightforward to verify. The third property (the triangle inequality) has been shown in \cite{pedram2021rationally} for special cases with $d=1$. 
As the first technical result of this paper, we prove the triangle inequality in full generality as follows:
\begin{theorem}
\label{theo:tria}
In obstacle free space, the optimal path cost between $b_1=(x_1, P_1)$ and $b_2=(x_2,P_2)$ is equal to $\mathcal{D}(b_1, b_2)$ or equivalently
\begin{equation}
 \mathcal{D}(b_1, b_2) \leq \mathcal{D}(b_1, b_{int})+\mathcal{D}(b_{int}, b_2)   
\end{equation}
for any intermediate $b_{int}$.
\end{theorem}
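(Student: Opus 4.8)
The plan is to reduce the stated inequality to one involving the information cost $\mathcal{D}_{\mathrm{info}}$ only, and then to prove that by choosing the auxiliary matrix in the max-det program \eqref{eq:d_info_general0} explicitly. Write $b_1=(x_1,P_1)$, $b_{int}=(x_m,P_m)$, $b_2=(x_2,P_2)$, and set $\ell_1=\|x_m-x_1\|$, $\ell_2=\|x_2-x_m\|$. Since $\mathcal{D}_{\mathrm{travel}}$ is the Euclidean distance between the means, the bound $\|x_2-x_1\|\le\ell_1+\ell_2$ is immediate, so all the content is in the information term. I would first record two auxiliary monotonicity facts: (i) for a fixed posterior $P$, the value of \eqref{eq:d_info_general0} as a function of the prior $\hat P$ is non-decreasing in the Loewner order --- this follows from Lemma~\ref{lemma:explicit}, or directly from invariance of \eqref{eq:d_info_general0} under simultaneous congruence of $(\hat P,P)$, which reduces it to $\tfrac12\sum_i\max\{0,-\log\lambda_i(\hat P^{-1}P)\}$, together with monotonicity of the generalized eigenvalues in $\hat P$; and (ii) for any fixed $C\succeq0$, the map $A\mapsto\log\det(A+C)-\log\det A$ is non-increasing on $\mathbb{S}^d_{++}$, which follows from $\log\det(A+C)-\log\det A=\int_0^1\mathrm{tr}\big((A+tC)^{-1}C\big)\,dt$ and operator antitonicity of the matrix inverse.

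By (i), enlarging the direct prior from $P_1+\|x_2-x_1\|W$ to $P_1+(\ell_1+\ell_2)W$ can only increase $\mathcal{D}_{\mathrm{info}}(b_1,b_2)$; combined with $\|x_2-x_1\|\le\ell_1+\ell_2$ this reduces the theorem to proving $\mathcal{D}_{\mathrm{info}}^{\star}(b_1,b_2)\le\mathcal{D}_{\mathrm{info}}(b_1,b_{int})+\mathcal{D}_{\mathrm{info}}(b_{int},b_2)$, where $\mathcal{D}_{\mathrm{info}}^{\star}$ is \eqref{eq:d_info_general0} computed with the prior $P_1+(\ell_1+\ell_2)W$. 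To dispose of a possibly non-lossless first leg, let $Q_m^{\star}$ minimize \eqref{eq:d_info_general0} for the pair $(b_1,b_{int})$ and replace $b_{int}$ by $\tilde b_{int}=(x_m,Q_m^{\star})$. Then $b_1\to\tilde b_{int}$ is lossless with $\mathcal{D}_{\mathrm{info}}(b_1,\tilde b_{int})=\mathcal{D}_{\mathrm{info}}(b_1,b_{int})$, and along the second leg the prior shrinks from $P_m+\ell_2W$ to $Q_m^{\star}+\ell_2W$, so (i) gives $\mathcal{D}_{\mathrm{info}}(\tilde b_{int},b_2)\le\mathcal{D}_{\mathrm{info}}(b_{int},b_2)$. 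Hence we may assume from the start that the first transition is lossless, i.e.\ $P_m\preceq P_1+\ell_1W$.

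Under losslessness, let $Q_2^{\star}$ solve \eqref{eq:d_info_general0} for $(b_{int},b_2)$, so $Q_2^{\star}\preceq P_2$, $Q_2^{\star}\preceq P_m+\ell_2W$, and $\mathcal{D}_{\mathrm{info}}(b_{int},b_2)=\tfrac12\log\det(P_m+\ell_2W)-\tfrac12\log\det Q_2^{\star}$. Since $P_m\preceq P_1+\ell_1W$, we get $Q_2^{\star}\preceq P_m+\ell_2W\preceq P_1+(\ell_1+\ell_2)W$, so $Q_2^{\star}$ is feasible for the $\mathcal{D}_{\mathrm{info}}^{\star}$ program and $\mathcal{D}_{\mathrm{info}}^{\star}(b_1,b_2)\le\tfrac12\log\det(P_1+(\ell_1+\ell_2)W)-\tfrac12\log\det Q_2^{\star}$. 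Plugging in and cancelling the common term $\tfrac12\log\det Q_2^{\star}$, the desired inequality becomes
\[
\log\det\!\big(P_1+(\ell_1+\ell_2)W\big)-\log\det\!\big(P_1+\ell_1W\big)\;\le\;\log\det\!\big(P_m+\ell_2W\big)-\log\det P_m,
\]
which is precisely fact (ii) (with $C=\ell_2W$), since $P_m\preceq P_1+\ell_1W$. The reformulation ``optimal path cost $=\mathcal{D}(b_1,b_2)$'' then follows: the direct path attains $\mathcal{D}(b_1,b_2)$, and a routine induction on the number of waypoints, using the two-point inequality just established, shows every multi-waypoint path costs at least $\mathcal{D}(b_1,b_2)$.

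The step I expect to be the main obstacle is the non-lossless case. Using $Q_2^{\star}$ directly in the $\mathcal{D}_{\mathrm{info}}^{\star}$ program fails because $P_m+\ell_2W\preceq P_1+(\ell_1+\ell_2)W$ need not hold without losslessness; the fix --- replacing $b_{int}$ by $(x_m,Q_m^{\star})$ --- rests on the observation that the information cost ``sees'' the intermediate belief only through its reduced covariance $Q_m^{\star}$, which is exactly the losslessness reduction discussed above. A secondary point needing care is monotonicity fact (i), where the max-det structure of \eqref{eq:d_info_general0} (equivalently, the explicit SVD solution of Lemma~\ref{lemma:explicit}) is what makes the argument work.
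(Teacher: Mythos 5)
Your proposal is correct and takes essentially the same route as the paper's proof: the same reduction via monotonicity of the information cost in the prior (the paper's inequality (B), Lemma on $f$), the same losslessness reduction of the intermediate covariance (the paper's Proposition restricting to $P_{int}\preceq P_1+W_1$), and the same final step using feasibility of $Q_2^{\star}$ together with monotonicity of $A\mapsto\log\det(A+C)-\log\det A$ (the paper's Proposition that the optimum is attained at $P_{int}=P_1+W_1$). The only differences are cosmetic, e.g.\ you justify the prior-monotonicity through the explicit SVD solution of Lemma~\ref{lemma:explicit} rather than the paper's perturbation argument.
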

\begin{proof}
See Appendix~\ref{ap:B}.
\end{proof}

Theorem~\ref{theo:tria} implies that the shortest path from $(x_1, P_1)$ to $(x_2, P_2)$ is obtained by first making a sensing-free travel from $(x_1, P_1)$ to $(x_2, \hat{P}_2)$ followed by a covariance reduction from $\hat{P}_2$ to $P_2$. In other words, the ``move-and-sense'' strategy is optimal for transitioning in an obstacle-free space.

It is also noteworthy that \eqref{eq:def_D} fails to satisfy symmetry, i.e., $\mathcal{D}(b_1, b_2)\neq \mathcal{D}(b_2, b_1)$ in general. 
Consequently, the notion of the path length we introduce below is direction-dependent. This nature will also be presented in a numarical example in Section~\ref{sec:asymmetry}.
The function \eqref{eq:def_D} also fails to satisfy the identity of indiscernibles since $\mathcal{D}(b_1, b_2)=0$ does not necessarily imply $b_1=b_2$. (Consider $b_1=(x_1, P_1)$ and $b_2=(x_2, P_2)$ with $x_1=x_2$ and $P_1 \preceq P_2$.) 
Due to the lack of symmetry and the identity of indiscernibles, the function \eqref{eq:def_D} fails to be a metric. 
However, with properties (i)-(iii) above, $\mathcal{D}$ is a \emph{quasi-pseudometric} on the belief space $\mathbb{B}$.
A shortest path problem on $\mathbb{B}$ is then well-defined with respect to $\mathcal{D}$, as we discuss in the next section.

Finally, from the perspective of information geometry, the distance function \eqref{eq:def_D} is one among many other alternative choices like the KL-divergence, which is used in the covariance steering problems in  \cite{chen2015optimal, okamoto2018optimal} and references therein. Unfortunately, using KL-divergence from $b_k$ to $b_{k+1}$ as the total cost $\mathcal{D}(b_k, b_{k+1})$ does not lead to a well-defined shortest path problem, since KL-divergence does not satisfy
%Unfortunately, the KL-divergence does not lead to a well-defined shortest path problem as it does not satisfy
the triangle inequality even in the space of Gaussian beliefs\cite{zhang2021properties}. 
Other functions including the Fisher-Rao metric and the square root of the Jensen-Shannon divergence are not attractive candidates for $\mathcal{D}(b_k, b_{k+1})$ due to the lack of closed-form expressions for Gaussian distributions\cite{pinele2020fisher, nielsen2020generalization}.
%The two popular distance functions, Fisher-Rao metric and  square root of Jensen-Shannon divergence cannot be directly used, mainly because the closed-form expressions for these metrics are only known for special cases (and not between two arbitrary Gaussian distributions) \cite{pinele2020fisher, nielsen2020generalization}. This discussion shows the importance of distance function \eqref{eq:def_D} among possible choices.}
%We remark that it is straightforward to use an alternative quasi-pseudo metric in place of \eqref{eq:def_D} should there exist more favorable options.
However, one may consider using an alternative information cost function in place of $\mathcal{D}_{\text{info}}$ in \eqref{eq:def_D}.
For instance, if we adopt the Wasserstein distance or the Hellinger distance \cite{pardo2018statistical}, the information cost can be defined as
\begin{align}
\mathcal{D}_{\text{info,W}}(b_k,\! b_{k+1}\!)
\!=\!\big[{\rm{Tr}}(\hat{P}_{k+1}\!+\!P_{k+1}\!-\!2(P_{k+1}^{{1/2}} \hat{P}_{k+1}P_{k+1}^{{1/2}})^{{1/2}})\big]^{1/2}, \label{eq:info_wass}
\end{align}
and
\begin{align}
    \mathcal{D}_{\text{info,H}}(b_k,b_{k+1})=\big[1-\frac{(\det \hat{P}_{k+1})^{{1/4}}  (\det {P}_{k+1})^{{1/4}} }{\left(\det \left(\frac{\hat{P}_{k+1}+P_{k+1} }{2}\right)\right)^{{1/2}}}\big]^{1/2}, \label{eq:info_Hell}
\end{align}
for $P_{k+1}\preceq\hat{P}_{k+1}$, respectively. %Then, the total cost is defined by using \eqref{eq:info_wass} or \eqref{eq:info_Hell} in place of $\mathcal{D}_{\text{info}}$ in \eqref{eq:def_D}. 
Total costs defined in this manner are different from the direct Wasserstein and Hellinger distances between $b_k$ and $b_{k+1}$, because \eqref{eq:info_wass} and \eqref{eq:info_Hell} quantify the required information for transition $\hat{P}_{k+1}\rightarrow P_{k+1}$, as opposed to the transition $P_k \rightarrow P_{k+1}$. In contrast to the Wasserstein distance and the Hellinger distance themselves, the total costs defined using \eqref{eq:info_wass} and \eqref{eq:info_Hell} do not satisfy the axioms of metrics like symmetry. Further studies on these distance-like functions are left as future work.
%are not metrics in a rigorous mathematical sense. Hence, mathematical properties, like the satisfaction of the triangular inequality, of total cost defined based on these metrics should be studied, which is left as future work.}
%which are metrics in rigorous mathematical sense. Hence, mathematical properties, like satisfaction of the triangular inequality, of total cost defined based on these metrics should be studied, which is left as future work.}

The RRT*-based algorithm we present in Section~\ref{sec:algorithm}
can easily be modified to incorporate different choices. In Subsection~\ref{subsec:compare_metric}, we provide a comparison between different options of distance functions in a sample environment.
\vspace{-0.3cm}
\section{Problem formulation}
\label{sec:formulation}
In this section, we define the length of general paths in the belief space $\mathbb{B}$ using the distance function \eqref{eq:def_D} and formally state the shortest path problem.
\vspace{-0.3cm}
\subsection{Belief Chains and Belief Paths}
In the sequel, we use the term \emph{belief chain} to refer to a sequence of transitions from $b_k=(x_k, P_k)\in\mathbb{B}$ to $b_{k+1}=(x_{k+1}, P_{k+1})\in\mathbb{B}$, $k=0, 1, 2, ... , K-1$, where $K$ is a finite integer. 
We also use the term \emph{belief path} to refer to a function $\gamma: [0,T]\rightarrow \mathbb{B}$, $\gamma(t)=b(t)$ with $b(t)=(x(t), P(t))$.

The origin and the end point of the path $\gamma$ are denoted by $\gamma(0)$ and $\gamma(T)$, respectively. 
The parameter $t$ is often referred to as \emph{time}, but we remark that $t$ does not necessarily correspond to the physical time. The time of arrival of the robot at the end point depends on the the length of the path and the travel speed of the robot. 

\subsubsection{Lossless chains and paths}
Recall that a transition from $b_k=(x_k, P_k)$ to $b_{k+1}=(x_{k+1}, P_{k+1})$ is said to be \emph{lossless} if
\begin{equation}
\label{eq:lossless_def}
P_{k+1} \preceq \hat{P}_{k+1}(:=P_k+\|x_{k+1}-x_k\|W).
\end{equation}
If every transition in the belief chain $\{(x_k, P_k)\}_{k=0,1, ... , K}$ is lossless, we say that the belief chain is lossless.

Let $\gamma: [0,T]\rightarrow \mathbb{B}$, $\gamma(t)=(x(t), P(t))$ be a belief path. 
The \emph{travel length} of the path $\gamma$ from time $t=t_a$ to time $t=t_b$ is defined as $\ell(x[t_a, t_b])=\sup_{\mathcal{P}} \sum_{k=0}^{K-1} \|x(t_k)-x(t_{k+1})\|$,
% \[
% \ell(x[t_a, t_b])=\sup_{\mathcal{P}} \sum_{k=0}^{K-1} \|x(t_k)-x(t_{k+1})\|,
% \]
where the supremum is over the space of all partitions $\mathcal{P}=(t_a=t_0<t_1<\cdots < t_K=t_b), K\in\mathbb{N}$. We say that a path $\gamma$ is \emph{lossless} if the condition
\begin{equation}
P(t_b) \preceq P(t_a)+\ell(x[t_a, t_b])W
\end{equation}
holds for all $0\leq t_a < t_b \leq T$. A path $\gamma(t)= (x(t),P(t))$ is said to be \emph{finitely lossless} if there exists a finite partition $\mathcal{P}=(0=t_0<t_1<\dots<t_K=T)$ such that for any refinement $\mathcal{P}'=(0=t'_0<t'_1<\dots<t'_{K'}=T)$ of $\mathcal{P}$ (i.e., $\mathcal{P}' \supseteq \mathcal{P}$), the belief chain $\{(x_{k'},P_{k'})\}_{k'=0, 1,\dots, K'}$ is lossless.

\subsubsection{Collision-free chains and paths}
Let $\mathcal{X}^{l}_{\text{obs}}\subset \mathbb{R}^d$ be a closed convex subset representing the obstacle $l \in \{1, \dots, M\}$. 
Consider a robot moving from a way point $x_k\in \mathbb{R}^d$ to $x_{k+1}\in \mathbb{R}^d$. Using $0\leq \lambda \leq 1$, the line segment connecting $x_k$ and $x_{k+1}$ is parametrized as $x[\lambda]=(1-\lambda)x_k+\lambda x_{k+1}$
% \[
% x[\lambda]=(1-\lambda)x_k+\lambda x_{k+1}.
% \]
Assuming that the robot's initial covariance is $P_k$, the evolution of the covariance matrix subject to the model  \eqref{eq:euler3} is written as
$P[\lambda]=P_k+\lambda\|x_{k+1}-x_k\|W$.
% \[
% P[\lambda]=P_k+\lambda\|x_{k+1}-x_k\|W.
% \]
For a fixed confidence level parameter $\chi^2>0$, we say that the transition from $x_k$ to $x_{k+1}$ with initial covariance $P_k$ is \emph{collision-free} if
\begin{align}
&(x[\lambda]-x_{\text{obs}})^\top P[\lambda]^{-1}(x[\lambda]-x_{\text{obs}}) \geq \chi^2 \nonumber \\
&\forall \lambda\in [0,1], \quad \forall x_{\text{obs}}\in \mathcal{X}^l_{\text{obs}}, \quad \forall l \in\{1, \dots, L\}.
\end{align}
\begin{remark}
We say that a collision with obstacle $l$ is detected when 
\begin{equation}
(x[\lambda]-x_{\text{obs}})^\top P[\lambda]^{-1}(x[\lambda]-x_{\text{obs}}) < \chi^2\quad
\end{equation}
for some $ \lambda\in [0,1]$ and $x_{\text{obs}}\in \mathcal{X}^l_{\text{obs}}$. Collision detection can be formulated as a feasibility problem
\begin{align}
\label{eq:collision_checker}
&\begin{bmatrix}
\chi^2 &  (1-\lambda)x_k^\top+\lambda x_{k+1}^\top-x_{\text{obs}}^\top  \\
(1-\lambda)x_k+\lambda x_{k+1}-x_{\text{obs}} & P_k+\lambda\|x_{k+1}-x_k\|W
\end{bmatrix}\succ 0, \nonumber \\
&0\leq \lambda \leq 1,  \quad x_{\text{obs}}\in \mathcal{X}^l_{\text{obs}}, 
\end{align}
which is a convex program for each convex obstacle $\mathcal{X}^l_{\text{obs}}$.
 \qed
\end{remark}
We say that a belief chain $\{(x_k, P_k)\}_{k\!=\!0, 1, ... , K-1}$ is \emph{collision-free} if for each $k=0, 1, ... , K-1$, the transition from $x_k$ to $x_{k+1}$ with the initial covariance $P_k$ is collision-free. We say that a belief path $\gamma\!:\![0,T]\!\rightarrow \!\mathbb{B}$, $\gamma(t)\!=\!(x(t), P(t))$ is \emph{collision-free} if
\begin{equation}
\begin{split}
&(x(t)-x_{\text{obs}})^\top P^{-1}(t)(x(t)-x_{\text{obs}}) \geq \chi^2, \;\;\\
& \quad \forall t\in [0, T], \;\; \forall x_{\text{obs}}\in \mathcal{X}^l_{\text{obs}}, \ \  \forall l \in \{1, \dots, L\}.
\end{split}
\end{equation}
\vspace{-0.8cm}
\subsection{Path Length}
Let $\gamma: [0,T]\rightarrow \mathbb{B}$, $\gamma(t)=(x(t), P(t))$ be a path, and $\mathcal{P}=(0=t_0<t_1<\cdots < t_K=T)$ be a partition.
The length of the path $\gamma$ with respect to the partition $\mathcal{P}$ is defined as
$c(\gamma;\mathcal{P})=\sum_{k=0}^{K-1} \mathcal{D}(\gamma(t_k), \gamma(t_{k+1}))$,
% \begin{equation}
% c(\gamma;\mathcal{P})=\sum_{k=0}^{K-1} \mathcal{D}(\gamma(t_k), \gamma(t_{k+1}))
% \end{equation}
where the function $\mathcal{D}$ is defined by \eqref{eq:def_D}.
The length of a path $\gamma$ is defined as the supremum of $c(\gamma;\mathcal{P})$ over all partitions
\begin{equation}
\label{eq:def_path_length}
c(\gamma):=\sup_\mathcal{P} c(\gamma;\mathcal{P}).
\end{equation}
The definition \eqref{eq:def_path_length} means that for each path with a finite length, there exists a sequence of partitions
$\{\mathcal{P}_i\}_{i\in\mathbb{N}}$ such that
$\lim_{i \rightarrow \infty} c(\gamma; \mathcal{P}_i) =c(\gamma)$.
\begin{remark}
If $\gamma(t)$ is  differentiable, then the losslessness condition  \eqref{eq:lossless_def} is equivalent to
$W\frac{d}{dt}x(t) \succeq \frac{d}{dt} P(t), \forall t \in [0, T]$. In this case, the path length can be expressed as:
 \begin{align*}
     c(\gamma) \!= \!\!\int_{0}^T \left[ \left\|\frac{d}{dt}x(t)\right\|+ \frac{\alpha}{2} Tr\Big( \big(W-\frac{d}{dt}P(t)\big)P^{-1}(t)\Big)\right] dt.
 \end{align*}
\end{remark}
\vspace{-0.5cm}
\subsection{Topology on the Path Space}
The proofs of asymptotic optimality of the original RRT* algorithm \cite{karaman2010incremental,karaman2011sampling} critically depends on the continuity of the path length function $c(\gamma)$. 
In this subsection, we introduce an appropriate topology on the space of belief paths $\gamma: [0,T]\rightarrow \mathbb{B}$ with respect to which the path length function $c(\gamma)$ is shown to be  continuous in Theorem~\ref{theo:continuity2} below.

The space of all belief paths $\gamma: [0,T]\rightarrow \mathbb{B}$ %(= \mathbb{R}^d\times \mathbb{S}_{++}^d)
 can be thought of as an open subset (convex cone) of the space of \emph{generalized paths} $\gamma: [0,T]\rightarrow \mathbb{R}^d\times \mathbb{S}^d$. The space of generalized paths is a vector space on which addition and scalar multiplication are defined as  $(\gamma_1+\gamma_2)(t)=(x_1(t)+x_2(t), P_1(t)+P_2(t))$ and $a \gamma(t)=(a x(t), a P(t))$ for $a\in\mathbb{R}$, respectively. 
Let $\mathcal{P}=(0=t_0<t_1<\cdots < t_K=T)$ be a partition. The variation of a generalized path $\gamma$ with respect to $\mathcal{P}$ is defined as $V(\gamma; \mathcal{P}):=\|x(0)\|\bar{\sigma}(W)+\bar{\sigma}(P(0)) +\sum_{k=0}^{K-1}\big[\|x(t_{k+1})-x(t_k)\|\bar{\sigma}(W)+\bar{\sigma}(P(t_{k+1})-P(t_k))\big]$.
% \begin{align*}
% V(\gamma; \mathcal{P})&:=\|x(0)\|\bar{\sigma}(W)+\bar{\sigma}(P(0)) \\
% &+\sum_{k=0}^{K-1}\Bigl[\|x(t_{k+1})-x(t_k)\|\bar{\sigma}(W)+\bar{\sigma}(P(t_{k+1})-P(t_k))\Bigr].
% \end{align*}
%where $\|\cdot\|$ is the vector 2-norm and $\bar{\sigma}(\cdot)$ is the maximum singular value of a matrix.
The total variation of a generalized path $\gamma$  is defined as $|\gamma|_{\text{TV}}:=\sup_{\mathcal{P}}V(\gamma; \mathcal{P})$.
% \[
% |\gamma|_{\text{TV}}:=\sup_{\mathcal{P}}V(\gamma; \mathcal{P}).
% \]
Notice that $|\cdot|_{\text{TV}}$ defines a norm on the space of generalized paths. If we introduce $\|\gamma\|_\infty:=\sup_{t\in[0,T]} \|x(t)\|\bar{\sigma}(W)+\bar{\sigma}(P(t))$
% \[
% \|\gamma\|_\infty:=\sup_{t\in[0,T]} \|x(t)\|\bar{\sigma}(W)+\bar{\sigma}(P(t))
% \]
then $\|\gamma\|_\infty \leq |\gamma|_{\text{TV}}$ holds \cite[Lemma 13.2]{carothers2000real}.

In what follows, we assume  the topology of total variation metric $|\gamma_1-\gamma_2|_{\text{TV}}$ on the space of generalized paths $\gamma\!:\! [0,T]\!\rightarrow\!\mathbb{R}^d\!\times\!\mathbb{S}^d$, which is then inherited to the space of belief paths $\gamma\!:\![0,T]\rightarrow\! \mathbb{B}(=\mathbb{R}^d\times \mathbb{S}_{++}^d)$. We denote by $\mathcal{BV}[0, T]$ the space of belief paths $\gamma\!:\![0,T]\!\rightarrow\!\mathbb{B}$ such that $|\gamma|_{\text{TV}}<\infty$.
\vspace{-0.4cm}
\subsection{The Shortest Belief Path Problem}
Let $b_0=(x_0, P_0)\in \mathbb{B}$ be a given initial belief state,  $\mathcal{B}_{\text{target}} \subset \mathbb{B}$ be a given closed subset representing the target belief region, and 
$\mathcal{X}^{l}_{\text{obs}}\subset \mathbb{R}^d$ be the given obstacle $l \in\{1, \dots, L\}$. Given a confidence level parameter $\chi^2>0$, the shortest path problem is formulated as 
\begin{equation}
\label{eq:main_problem}
\begin{split}
\!\!\min_{\gamma \in \mathcal{BV}[0, T]} \;\; & c(\gamma) \\
\text{s.t.}\;\;\;\; \; \;  & \gamma(0)=b_0, \; \gamma(T)\in \mathcal{B}_{\text{target}} \\
& (x(t)-x_{\text{obs}})^\top P^{-1}(t)(x(t)-x_{\text{obs}}) \geq \chi^2 \\
& \forall t\in [0, T], \;\; \forall x_{\text{obs}}\in \mathcal{X}^{l}_{\text{obs}}, \ \  \forall l \in \{1, \dots, L\}.
\end{split}
\end{equation}
We make the following mild assumption which will be needed in the development of Section~\ref{sec:algorithm}.
\begin{assumption}
There exists a feasible path $\gamma(t)=(x(t), P(t))$ for \eqref{eq:main_problem} such that $P(t)\in \mathbb{S}_\rho^d$ and $\rm{Tr}(P(t))\leq R$ for all $t\in[0,T]$, where $R>0$ and $\rho>0$ are constants.
\end{assumption}

The next theorem also plays a key role in the development of our algorithm in Section~\ref{sec:algorithm}.
\begin{theorem}
\label{theo:loss-lessmod}
For any collision-free belief chain $\{b_k = (x_k, P_k)\}_{k=0, 1, ... , K-1}$, there exist a collision-free and lossless chain  $\{b'_k=(x'_k, P'_k )\}_{k=0, 1, ... , K-1}$ with $x_k=x'_k$ and $P'_k \preceq P_k$ for $k=0, \dots, K$ that has a shorter (or equal) length in that $\sum_{k=1}^{K-1} \mathcal{D}(b'_k,b'_{k+1}) \leq \sum_{k=1}^{K-1} \mathcal{D}(b_k, b_{k+1}).$
\end{theorem}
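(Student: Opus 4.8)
\emph{Overall plan and construction.} I would produce $\{b'_k\}$ by a single forward sweep over the original chain in which each posterior covariance is replaced by the largest covariance that is still no larger than the original one and is consistent with losslessness relative to the already-modified prior; I then check well-posedness, losslessness, collision-freeness, and the length bound in turn. Concretely: set $x'_k := x_k$ for all $k$ and $P'_0 := P_0$; inductively, given $P'_k \succ 0$, let $\hat P'_{k+1} := P'_k + \|x_{k+1}-x_k\|W$ be the associated prior (cf.\ \eqref{eq:p_prior}) and take $P'_{k+1}$ to be a maximizer of $\log\det Q$ over $\{Q : Q \succeq 0,\ Q \preceq P_{k+1},\ Q \preceq \hat P'_{k+1}\}$. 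Since $P_{k+1},\hat P'_{k+1}\in\mathbb{S}^d_{++}$, the feasible set is compact and contains positive definite points, so a maximizer exists and is positive definite (Lemma~\ref{lemma:explicit} even gives it in closed form via an SVD); hence $b'_{k+1}\in\mathbb{B}$ and the recursion is well-defined.

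\emph{Feasibility and losslessness.} The constraint $Q \preceq P_{k+1}$ gives $P'_{k+1}\preceq P_{k+1}$, so with $P'_0=P_0$ we get $P'_k\preceq P_k$ for all $k$ by induction, while the constraint $Q\preceq\hat P'_{k+1}$ gives $P'_{k+1}\preceq P'_k+\|x_{k+1}-x_k\|W$, i.e.\ every transition of $\{b'_k\}$ is lossless. For collision-freeness I use $P'_k\preceq P_k$ again: along the $k$-th segment, $P'_k+\lambda\|x_{k+1}-x_k\|W\preceq P_k+\lambda\|x_{k+1}-x_k\|W$ for all $\lambda\in[0,1]$, so inverting reverses the order and the quadratic form $(x[\lambda]-x_{\text{obs}})^\top(\cdot)^{-1}(x[\lambda]-x_{\text{obs}})$ only increases; as the original chain is collision-free (value $\geq\chi^2$), so is $\{b'_k\}$.

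\emph{Length.} The travel terms coincide since $x'_k=x_k$, so it suffices to show $\mathcal{D}_{\text{info}}(b'_k,b'_{k+1})\leq\mathcal{D}_{\text{info}}(b_k,b_{k+1})$ for each $k$. Losslessness makes the constraint $Q\preceq\hat P'_{k+1}$ in \eqref{eq:d_info_general0} redundant, so $\mathcal{D}_{\text{info}}(b'_k,b'_{k+1})=\tfrac12\log\det\hat P'_{k+1}-\tfrac12\log\det P'_{k+1}$; and since $P'_{k+1}$ is, by construction, the $\log\det$-maximal feasible point of \eqref{eq:d_info_general0} for the pair $((x_k,P'_k),(x_{k+1},P_{k+1}))$, this number equals $\mathcal{D}_{\text{info}}\!\bigl((x_k,P'_k),(x_{k+1},P_{k+1})\bigr)$. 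Moreover $\hat P'_{k+1}=P'_k+\|x_{k+1}-x_k\|W\preceq P_k+\|x_{k+1}-x_k\|W=\hat P_{k+1}$. Hence the per-step inequality follows once we know that $\mathcal{D}_{\text{info}}$, with its posterior held fixed, is non-decreasing in the prior covariance; summing over all transitions then gives $\sum_k\mathcal{D}(b'_k,b'_{k+1})\leq\sum_k\mathcal{D}(b_k,b_{k+1})$.

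\emph{Expected main obstacle.} The crux is exactly this monotonicity of $\mathcal{D}_{\text{info}}$ in its prior argument, and I would derive it from the closed form of \eqref{eq:d_info_general0} in Lemma~\ref{lemma:explicit}: for $\hat P,P\in\mathbb{S}^d_{++}$, with $M:=P^{-1/2}\hat P P^{-1/2}$, the value of \eqref{eq:d_info_general0} equals $\tfrac12\sum_i\bigl(\log\lambda_i(M)\bigr)^{+}$, the sum of the positive parts of the logarithms of the generalized eigenvalues of the pencil $(\hat P,P)$. The map $\hat P\mapsto P^{-1/2}\hat P P^{-1/2}$ is monotone for $\preceq$, each $\lambda_i(\cdot)$ is monotone by Weyl's inequality, and $t\mapsto(\log t)^{+}$ is non-decreasing, so $\hat P\mapsto\tfrac12\sum_i(\log\lambda_i(M))^{+}$ is non-decreasing, which is what is needed. (Equivalently, since this value can also be written $\tfrac12\sum_i(\log\lambda_i(\hat P^{-1}P))^{-}$ and $t\mapsto(\log t)^-$ is non-increasing, it suffices to note the Courant--Fischer fact that $\hat P'\preceq\hat P$ forces $\lambda_i(\hat P'^{-1}P)\geq\lambda_i(\hat P^{-1}P)$ for every $i$.)
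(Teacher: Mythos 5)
Your construction is exactly the paper's: in a forward sweep you replace each $P_{k+1}$ by the minimizer of \eqref{eq:d_info_general0} for the transition from the already-modified predecessor, and your collision-freeness argument (shrinking the covariance along each segment only increases the quadratic form, so the $\chi^2$ bound is preserved) is the same observation the paper makes in passing. Where you genuinely diverge is the length bound. The paper accounts for it one node-modification at a time: since $P'_k$ is the optimizer of \eqref{eq:d_info_general0}, the cost of the incoming edge is unchanged, and for the outgoing edge it invokes the triangle inequality (Theorem~\ref{theo:tria}) together with $\mathcal{D}(b'_k,b_k)=0$ to conclude $\mathcal{D}(b'_k,b_{k+1})\leq\mathcal{D}(b_k,b_{k+1})$. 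You instead prove the edge-wise inequality $\mathcal{D}(b'_k,b'_{k+1})\leq\mathcal{D}(b_k,b_{k+1})$ directly from monotonicity of $\mathcal{D}_{\text{info}}$ in the prior covariance with the posterior held fixed, which you obtain from the closed form of Lemma~\ref{lemma:explicit}, namely the value $\tfrac12\sum_i\bigl(\log\lambda_i(P^{-1/2}\hat{P}P^{-1/2})\bigr)^{+}$, combined with congruence and eigenvalue monotonicity; this argument is correct. Note that the monotonicity you isolate as the ``main obstacle'' is precisely Lemma~\ref{lem:app} in Appendix~\ref{ap:B} (take $X=\hat{P}_{k+1}$, $Y=P_{k+1}$), so you could simply cite it rather than re-derive it spectrally, though your derivation is a valid alternative proof of that lemma. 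The trade-off: the paper's route is shorter because it reuses the already-established Theorem~\ref{theo:tria}, whereas yours is self-contained at the level of \eqref{eq:d_info_general0}, avoids the triangle inequality entirely, and makes the per-edge comparison between the final and original chains explicit; your supporting checks (existence and positive definiteness of the max-det point, losslessness, and $P'_k\preceq P_k$ by induction) are all sound.
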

\begin{proof}
See Appendix~\ref{ap:C}.
\end{proof}
\vspace{-0.2cm}
Theorem~\ref{theo:loss-lessmod} implies that 
the shortest path problem \eqref{eq:main_problem} always admits a ``physically meaningful'' path as an optimal solution. We also use Theorem~\ref{theo:loss-lessmod} to restrict the search for an optimal solution to the space of lossless paths in the algorithms we develop in the sequel.
\vspace{-0.45cm}
\subsection{Interpretation}
\label{sec:formulation_interpret}
By tuning the parameter $\alpha>0$, the shortest path problem formulation \eqref{eq:main_problem} is able to incorporate information cost at various degrees. As we will show in Section~\ref{sec:simulation_alpha}, different choices of $\alpha$ lead to qualitatively different optimal paths.
Notice that the proposed problem formulation \eqref{eq:main_problem} is purely geometric and does not use any particular sensor models to estimate sensing costs. This makes our motion planning strategy model-agnostic and widely applicable to scenarios where information cost for path following is concerned but actual models of sensors are not available or too complex to be utilized. 

Another advantage of the proposed approach is that, from the obtained belief path $(x(t), P(t))$, one can \emph{synthesize} a sensing strategy under which the planned covariance $P(t)$ is obtained as an outcome of Bayesian filtering.
To see this, assume that a desired belief path is given by a lossless belief chain $\{(x_k, P_k)\}_{k=0,1, ... , K}$. At every way point, the prior covariance 
\begin{equation}
\label{eq:sensor_reconstruction1}
    \hat{P}_k=P_{k-1}+\|x_k-x_{k-1}\|W
\end{equation}
needs to be updated to a posterior covariance $P_k$.
Such a belief update occurs as a consequence of a linear measurement 
\begin{equation}
\label{eq:sensor_reconstruction2}
    \by_{k}=C_k\bx_{k}+\bv_{k}
\end{equation}
with Gaussian noise $\bv_{k}\sim\mathcal{N}(0, V_k)$, provided that $C_k$ and $V_k$ are chosen to satisfy
\begin{equation}
\label{eq:sensor_reconstruction3}
C_k^\top V_k^{-1}C_k=P_{k}^{-1}-\hat{P}_{k}^{-1}.
\end{equation}
Notice that \eqref{eq:sensor_reconstruction3} together with \eqref{eq:sensor_reconstruction1} are the standard Riccati recursion for Kalman filtering.
Moreover, it can be shown that the linear sensing strategy \eqref{eq:sensor_reconstruction2} incurs the designated information gain (i.e., the equality \eqref{eq:info_gain1} holds), and is information-theoretically optimal in the sense that no other sensing strategy, including nonlinear ones, allows the covariance update from $\hat{P}_k$ to $P_k$ with less information gain.
In other words, \eqref{eq:sensor_reconstruction2} for $k=0,1, ... , K$ provides an optimal sensing strategy that perceives ``minimum yet critical'' information from the environment to perform the path following task.
References \cite{tanaka2015sdp,tanaka2016semidefinite,tanaka2017lqg} elaborate on an information-theoretic interpretation of the sensing mechanism  \eqref{eq:sensor_reconstruction2} as an optimal source-coder (data-compressor) for networked LQG control systems.

While information-theoretically optimal, the sensing strategy \eqref{eq:sensor_reconstruction2} may not be feasible in reality if the robot is not equipped with an adequate set of sensors.
In such cases, the planned sequence of covariance matrices $\{P_k\}_{k=0,1, ... , K}$ cannot be traced exactly. Even if the sensing strategy \eqref{eq:sensor_reconstruction2} is feasible (and thus the planned sequence of covariance matrices $\{P_k\}_{k=0,1, ... , K}$ is traceable), the realization of the mean $\hat{\bx}_k=\mathbb{E}[\bx_k|\by_0, \cdots , \by_k]$ inevitably deviates from the planned trajectory $\{x_k\}_{k=0,1, ... , K}$ because $\hat{\bx}_k$ is a random process whose realization depends on the realization of $\by_k$.

For these reasons, the belief path we obtain by solving \eqref{eq:main_problem} can only serve as a reference trajectory to be tracked in the real-time implementations. Even though the optimal belief path cannot be traced perfectly, various approaches can be taken to design a joint sensing and control policies for trajectory tracking.
In Section~\ref{sec:simulation}, we present simulation results showing that such a joint sensing and control strategy helps the robot to mitigate sensing cost (e.g.,  the frequency of sensing actions, the number of sensors that must be used simultaneously) during the path following phase.
\vspace{-0.4cm}
\section{Algorithm}
\label{sec:algorithm}
We utilize the RRT* algorithm \cite{karaman2011sampling} as a numerical method to solve the shortest path problem \eqref{eq:main_problem}. 
In this section, we develop three different variations of the algorithm. While they operate differently, they are basically the same in that they all incrementally construct directed graphs $G=(B,E)$ with randomly sampled belief nodes $B$ and edges $E$. 
To shed light on the asymptotic optimality of the proposed algorithms, we show that the path length function \eqref{eq:def_path_length} is continuous.

 \begin{figure*}[t!]
    \centering
    \subfloat[]{\includegraphics[clip,width=4.1cm]{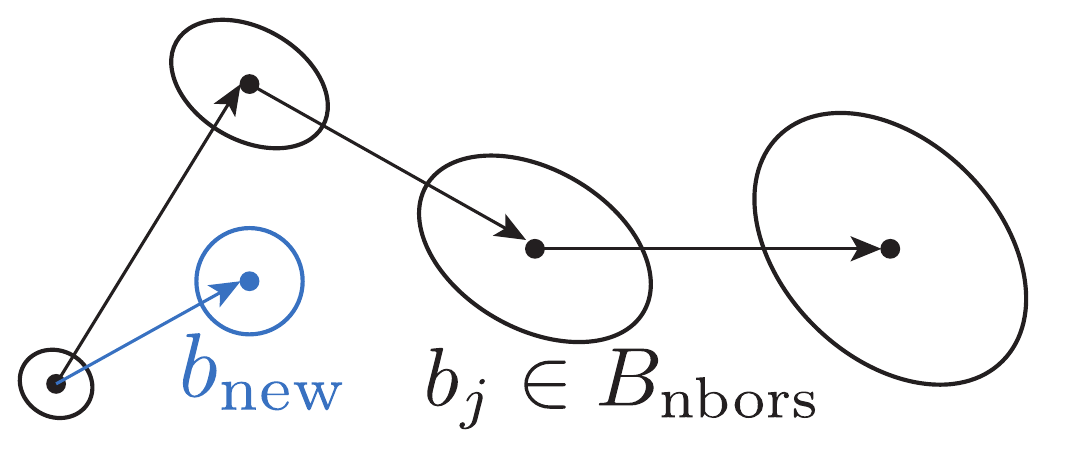}
    \label{fig:LossLess_1}} \quad
    \subfloat[]{\includegraphics[clip,width=4.3cm]{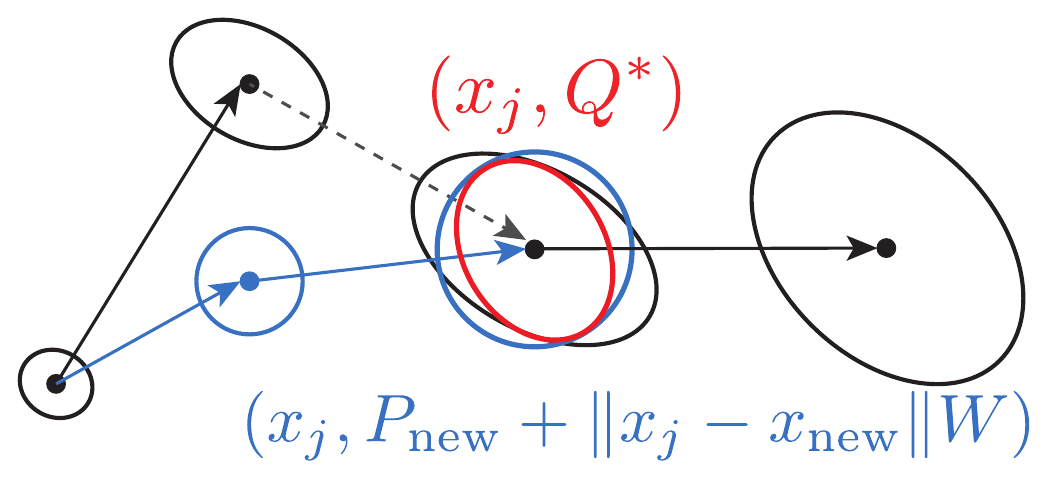}
    \label{fig:LossLess2}} \quad
    \subfloat[]{\includegraphics[clip,width=4.3cm]{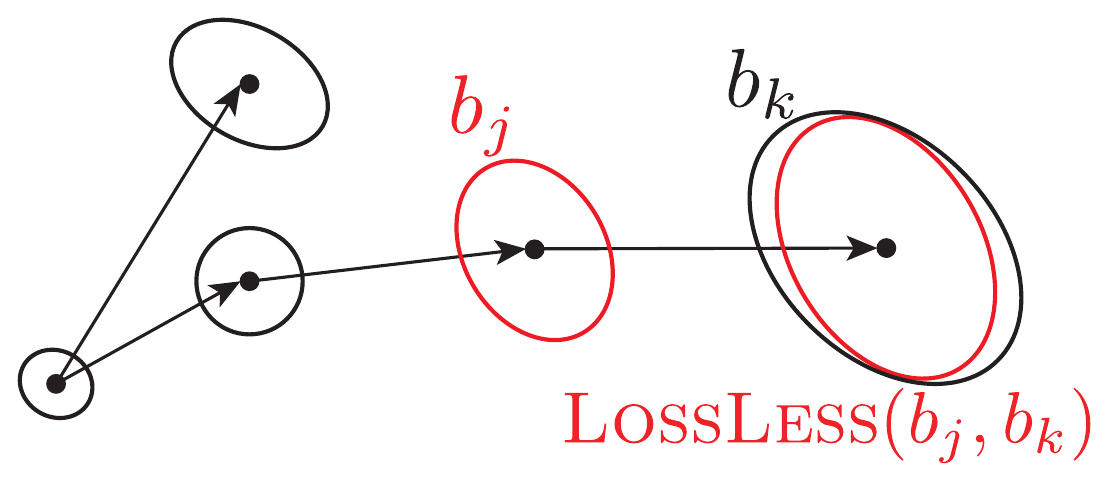}
    \label{fig:LossLess3}} \quad
    \subfloat[]{\includegraphics[clip,width=4cm]{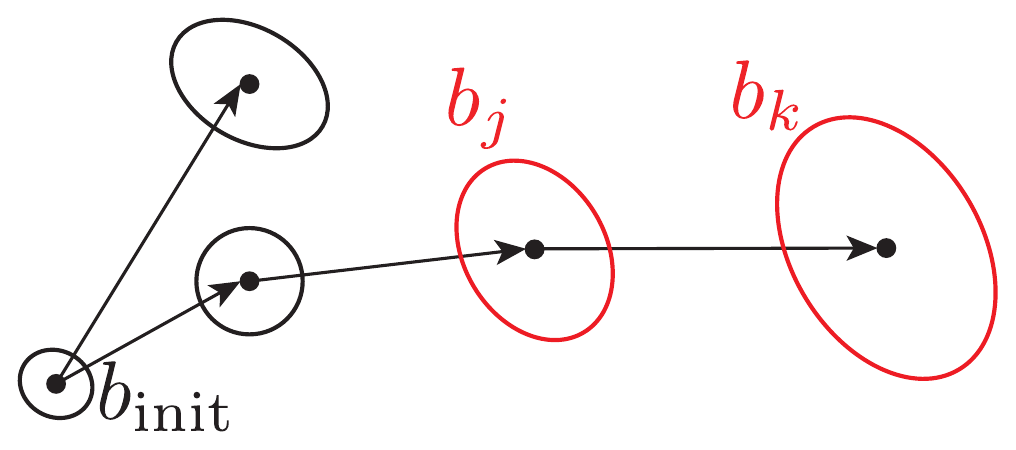}%[$t=6\,$s]
    % {\includegraphics[clip, width=5.2cm]{figure/case3_snapshots/test3_006s.png}
    \label{fig:LossLess4}} 
    \caption{The lossless modification executed by the $\textsc{\fontfamily{cmss}\selectfont LossLess}$ function and its propagation to the descendants. (a) The $\textsc{\fontfamily{cmss}\selectfont Generate}(i)$ function samples the new node $b_{\rm new}$ (a blue ellipse). In the rewired process, the black ellipse $b_j$ located at the center is selected as an element of $B_{\rm nbors}$. (b) The prior covariance after the travel from $x_{\rm new}$ to $x_{j}$ is depicted as a blue ellipse in the center. The $\textsc{\fontfamily{cmss}\selectfont LossLess}(b_{\rm new}, b_j)$ function calculates the optimal solution of \eqref{eq:d_info_general0} to achieve the lossless transition. (c) $\textsc{\fontfamily{cmss}\selectfont LossLess}$ is propageted to the descendant of $b_j$ to create the lossless chain. (d) As the consequence of the rewiring and $\textsc{\fontfamily{cmss}\selectfont LossLess}$, the algorithm generates the path with lower $\mc D(b_{\rm init}, b_k)$  while achieving smaller covariances for $b_j$ and $b_k$.}
    \label{fig:LossLess_mod}
\end{figure*}
\vspace{-0.5cm}
\subsection{Basic Algorithm}
The basic implementation of RRT* in the belief space for the cost function \eqref{eq:def_path_length} is summarized in Algorithm~\ref{algo:1}.  The source code for Algorithm~\ref{algo:1} is available at \href{https://github.com/AlirezaPedram/IG-RRTstar}{https://github.com/AlirezaPedram/IG-RRTstar}. 
Sampling of a new node (Lines~3-7), an addition of a new edge connecting $b_j\in B$ and $b_{\rm new}$ (Lines~8-14), and the rewiring process (Line~15-20) are performed similarly to those of the original RRT* \cite{karaman2011sampling}. However, the new distance function ${\mc D}$ and its directional dependency necessitate the introduction of new functionalities.

 \begin{algorithm}[t]
 \label{algo:1}
    \footnotesize{
    $B \leftarrow \{b_{\text{init}}\}$; $E \leftarrow \emptyset$; cost($b_{\text{init}}$)$\leftarrow 0$;  $G\leftarrow (B,E)$\;
    \For{$i = 2:N$}{
        %$G \leftarrow G'$ \;
        $b_i = (x_i,P_i) \leftarrow \textsc{\fontfamily{cmss}\selectfont Generate}(i)$\;
        %$(Z',E') \leftarrow (Z,E)$\;
        $b_{\text{near}} \leftarrow \textsc{\fontfamily{cmss}\selectfont Nearest}(B,b_i)$\;
        $b_{\text{new}} \leftarrow \textsc{\fontfamily{cmss}\selectfont Scale} (b_{\text{near}},b_i, D_{\text{min}})$\;
        \If{$\textsc{\fontfamily{cmss}\selectfont FeasCheck} (b_{\text{near}},b_{\text{new}}) = \text{ \textup{True}}$}{
            $B \leftarrow B \cup b_{\text{new}}$\;
            $B_{\text{nbors,in}} \leftarrow \textsc{\fontfamily{cmss}\selectfont Neighbor-in} (B,b_{\text{new}}, D_{\text{min}})$\;
            $\text{cost}(b_{\text{new}}) \leftarrow realmax$\;
            \For{$b_j \in B_{\text{nbors,in}}$}{
                \If{$\textsc{\fontfamily{cmss}\selectfont FeasCheck} (b_j,b_{\text{new}}) = \text{\normalfont True}$ \textbf{\textup{ and}}
                $\textup{cost} (b_j) + \mathcal{D}(b_j,b_{\text{new}}) < \textup{cost}(b_{\text{new}}) $}{
                    %$Path_{z_{\text{new},j}} \leftarrow Path_{z_j} + \mathcal{D}(z_j,z_{\text{new}})$\;
                    %\If{$Path_{z_{\text{new},j}} < Path_{z_{\text{new}}}$}{
                        $\text{cost}(b_{\text{new}}) \leftarrow \text{cost} (b_{j}) + \mathcal{D}(b_j,b_{\text{new}}) $\;
                        $b_{\text{nbor}}^* \leftarrow b_j$\;
                    %}
                }
            }
            $E \leftarrow E \cup \left[b_{\text{nbor}}^*,b_{\text{new}} \right] $\;
             $B_{\text{nbors,out} \leftarrow \textsc{\fontfamily{cmss}\selectfont Neighbor-out} (B,b_{\text{new}},{D}_{\text{min}})}$\;
            %$\text{parent}(z_{\text{new}}) \leftarrow z_{\text{nbor}}^*$ \;
            \For{$b_j \in B_{\text{nbors,out}} \: \backslash \: b_{\text{nbor}}^*$}{
                \If{$\textsc{\fontfamily{cmss}\selectfont FeasCheck} (b_{\text{new}}, b_j) = \text{ \normalfont True} $ \textbf{\textup{ and}} $ \textup{cost} (b_{\text{new}}) + \mathcal{D}(b_{\text{new}},b_j) < \textup{cost} (b_j)$}{
                  %$Path_{z_j,\text{rewire}}=Path_{z_{\text{new}}} + \mathcal{D}(z_{\text{new}},z_j)$\;
                    %\If{$Path_{z_{\text{new}}} + \mathcal{D}(z_{\text{new}},z_j) < Path_{z_j}$}{
                        $value \leftarrow \text{cost} (b_{\text{new}}) + \mathcal{D}(b_{\text{new}}, b_j) - \text{cost} (b_j)$\;
                        $E \leftarrow E \cup \left[ b_{\text{new}}, b_j \right] \backslash \left[ \text{parent}(b_j), b_j \right]$\;
                        %$\text{parent}(z_j) \leftarrow z_{\text{new}}$\;
                        $\textsc{\fontfamily{cmss}\selectfont UpdateDes}(G, b_j, value)$\;
                    %}            
                }
            }
        }\Return $G = (B, E)$
    }
    }
\caption{Information-Geometric RRT* Algorithm }
\end{algorithm}

Algorithm~\ref{algo:1} begins with the graph $G$ containing the initial node $b_{\rm init}$ and an empty edge set.
At each iteration, the $\textsc{\fontfamily{cmss}\selectfont Generate}(i)$ function creates a new node $b_i$ in the obstacle-free space $\mathcal{B}_{\text{free}}$ by randomly sampling an obstacle-free spatial location ($x\in \mathbb{R}^d$) and a covariance ($P \in \mathbb{S}^d_{++}$).
The $\textsc{\fontfamily{cmss}\selectfont Nearest}$ function finds the nearest node $b_{\text{near}}$ in the set $B$ from the node $b_i$ , which obtains the minimum $\mathcal{D}(b_{\text{near}},b')$.
%in the metric $\hat{\mathcal{D}}(b,b'):=\|x-x'\|+\|P-P'\|_{F}$.

The $\textsc{\fontfamily{cmss}\selectfont Scale}(b_{\text{near}},b_i, D_{\text{min}})$ function linearly shifts the generated point $b_i$ to a new location as:
\begin{equation*} 
\label{eq:steer}
b_{\text{new}} \!=\!\!
\begin{cases}
    b_{\text{near}} \!+\! \frac{D_{\text{min}}}{\mathcal{D}(b_i,b_\text{near})}\left(b_i-b_{\text{near}}\right)~ \text{if}~\mathcal{D}(b_i,b_{\text{near}}) > D_\text{min}, \\
    b_i \hspace{3.8cm} \text{otherwise,}
\end{cases}
\end{equation*}
where $b_i\pm b_{j}=(x_i \pm x_j, P_i \pm P_j)$, and for scalar $\alpha$, $\alpha b_i= (\alpha x_i, \alpha P_i)$. In addition, $D_\text{min} := \textup{min}\{ED_{\text{min}},\  r \left( \frac{\log n}{n} \right)^{1/d}\}$. $ED_{\text{min}}$ is a user-defined constant, $r$ is the connection radius, and $n$ is the number of nodes  in the graph $G$.
The $\textsc{\fontfamily{cmss}\selectfont Scale}$ function also checks that the $\chi^2$ confidence region of $b_{\rm new}$ does not interfere with any obstacle.

The $\textsc{\fontfamily{cmss}\selectfont FeasCheck}(b_{\text{near}}, b_{\text{new}}) = \textsc{\fontfamily{cmss}\selectfont IsLossless}(b_{i}, b_{j}) \ \textbf{\textup{and}} \ $ $  \textsc{\fontfamily{cmss}\selectfont ObsCheck}(b_{i}, b_{j})$ is a logical function. %as shown in Algorithm~\ref{algo:feas}, 
It ensures the transition $b_{\text{near}} \rightarrow b_{\text{new}}$ is a lossless transition i.e., it checks if 
%$\textsc{\fontfamily{cmss}\selectfont IsLossless}(b_{\text{near}}, b_{\text{new}})=$
$P_{\text{new}} \preceq P_{\text{near}}+ \|x_{\text{new}}-x_{\text{near}}\|W$. It also ensures that the $\chi^2$ confidence bound in this transition does not intersect with any obstacle $\ell\in \{1, \dots, M\}$ by solving problem~\eqref{eq:collision_checker}. The function $\textsc{\fontfamily{cmss}\selectfont Neighbor-in}(B, b_{\text{new}}, D_{\text{min}})$  returns the subset of nodes described as $ B_{\text{nbors,in}} =  \{ b_i \in B: \mathcal{D}(b_i,b_\text{new}) \leq D_{\text{min}} \}$

To find the parent node for the sampled node $b_{\text{new}}$, Lines 11-13 of Algorithm~\ref{algo:1} attempt connections from the neighboring nodes $B_{\text{nbors}}$ to  $b_{\text{new}}$. 
Among the nodes in $B_{\text{nbors}}$ from which there exists a collision-free and lossless path, the  node that results in minimum $\textup{cost}({b_{\text{new}}})$ is selected as the parent of $b_{\text{new}}$, where $\textup{cost}({b})$ denotes the cost of the path from the $b_{init}$ to node $b$. Note that the transitions are lossless and thus $\mathcal{D}(b_j, b_{\text{new}}) =\log\det(P_j+\|x_{\text{new}}-x_j\|W)- \log\det (P_{\text{new}})$. Line 14 establishes a new edge between the sought parent and $b_{\text{new}}$.

In the rewiring step (Lines 15-20), the function $\textsc{\fontfamily{cmss}\selectfont Neighbor-out}(B, b_{\text{new}}, {D}_{\text{min}})$ returns the neighboring nodes of $b_{\text{new}}$ defined  as $ B_{\text{nbors,out}}=\{ b_i \in B: {\mathcal{D}}(b_\text{new},b_i) \leq {D}_{\text{min}}\}$, which is different from $ B_{\text{nbors,in}}$ due to the asymmetric nature of $\mathcal{D}$. 
The algorithm replaces the parents of nodes $b_j$ in $B_{\text{nbors,out}}$ with $b_{\text{new}}$ if it results in lower $\textup{cost}(b_j)$. In line 16, the $\textsc{\fontfamily{cmss}\selectfont FeasCheck}$ function is called to check if $b_{\text{new}}\rightarrow b_j$ is lossless and collision-free.
%again. This is because $\textsc{\fontfamily{cmss}\selectfont FeasCheck}(b_j, b_{\text{new}}) = \textup{True}$ does not necessarily imply  $\textsc{\fontfamily{cmss}\selectfont FeasCheck}(b_{\text{new}}, b_j) = \textup{True}$. 
Finally, for each rewired node $b_j$, its cost (i.e., $ \textup{cost}({b_j})$) and the cost of its descendants are updated via $\textsc{\fontfamily{cmss}\selectfont UpdateDes}(G, b_j, value)$ function in Line 20 as $\textup{cost}(.) \leftarrow \textup{cost}(.) + value $.

\vspace{-0.4cm}
\subsection{Improvement of Algorithm~\ref{algo:1} }
While Algorithm~\ref{algo:1} is simple to implement and easy to analyze, it can be modified in at least  two aspects to improve its computational efficiency. Algorithm~\ref{algo:2} shows the modified algorithm.

\subsubsection{Branch-and-Bound}
As the first modification, we deploy a branch-and-bound technique as detailed in {\color{blue} \cite{karaman2011anytime, ferguson2006anytime, otte2013c,gammell2014informed}}. For a  given tree $G$, let $b_\text{min} $ be the node that has the lowest cost along the nodes of $G$ within $\mathcal{B}_{\text{target}}$. It follows from the triangle inequality  (Theorem~\ref{theo:tria}) that the cost $\mathcal{D}(b,b_\text{goal})$  of traversing from $b$ to the goal region ignoring obstacles is a lower-bound for the cost of transitioning from $b$ to $b_\text{goal}$. The $\textsc{\fontfamily{cmss}\selectfont BranchAndBound(G)}$ function, Line 22 in Algorithm~\ref{algo:2}, periodically deletes the nodes $B'' = \{b\in B: \textup{cost}(b) +\mathcal{D}(b, b_\text{goal}) \geq \textup{cost}({b_\text{min}})\}$. This elimination of the non-optimal nodes speeds up the RRT* algorithm.

\begin{algorithm}[ht]\label{algo:2}
    \footnotesize{
    $B \leftarrow \{b_{\text{init}}\}$; $E \leftarrow \emptyset$; $\textup{cost}(b_{\text{init}}) \leftarrow 0$ $G\leftarrow (B,E)$\;
    \For{$i = 2:N$}{
        %$G \leftarrow G'$ \;
        $b_i = (x_i,P_i) \leftarrow \textsc{\fontfamily{cmss}\selectfont Generate}(i)$\;
        %$(Z',E') \leftarrow (Z,E)$\;
        $b_{\text{near}} \leftarrow \textsc{\fontfamily{cmss}\selectfont Nearest}(B,b_i)$\;
        $b_{\text{new}} \leftarrow \textsc{\fontfamily{cmss}\selectfont Scale} (b_{\text{near}},b_i,\hat{D}_{\text{min}})$\;
        \If{$\textsc{\fontfamily{cmss}\selectfont ObsCheck} (b_{\text{near}},b_{\text{new}}) = \text{ \normalfont True}$}{
            $B_{\text{nbors}} \leftarrow \textsc{\fontfamily{cmss}\selectfont Neighbor} (B,b_{\text{new}},\hat{D}_{\text{min}})$\;
            $\text{cost}(b_{\text{new}}) \leftarrow realmax$\;
            \For{$b_j \in B_{\text{nbors}}$}{
                \If{$\textsc{\fontfamily{cmss}\selectfont ObsCheck} (b_j,b_{\text{new}}) = \text{\normalfont True} $ \textbf{\textup{and}} $  \text{cost}(b_j) + \mathcal{D}(b_j,b_{\text{new}}) < \text{cost}(b_{\text{new}}) $}{
                    %$(z'_{\text{new}}, cost_{j,\text{new}}) = \mathcal{D}(z_j,z_{\text{new}})$\;
                    %\If{$\text{cost}(z_j) + cost_{j,\text{new}} < \text{cost}(z_{\text{new}})$}{
                        $\text{cost}(b_{\text{new}}) \leftarrow \text{cost} (b_{j}) + \mathcal{D}(b_j,b_{\text{new}}) $\;
                        $b_{\text{nbor}}^* \leftarrow b_j$\;
                    %}
                }
            }
            $b_{\text{new}} \leftarrow \textsc{\fontfamily{cmss}\selectfont LossLess} (b_{\text{nbor}}^*, b_{\text{new}})$\;
            $B \leftarrow B \cup B_{\text{new}}$\;
            $E \leftarrow  E \cup \left[b_{\text{nbor}}^*,b_{\text{new}} \right]$\;
            %$\text{parent}(z_{\text{new}}) \leftarrow z_{\text{nbor}}^*$ \;
            \For{$b_j \in B_{\text{nbors}} \: \backslash \: b_{\text{nbor}}^*$}{
                \If{$\textsc{\fontfamily{cmss}\selectfont ObsCheck} (b_{\text{new}},b_j) = \text{\normalfont True} $ \text{\textup{ and}} $ \text{cost} (b_{\text{new}}) + \mathcal{D}(b_{\text{new}},b_j) < \text{cost} (b_j)$}{
                    
                        $b_{j} \leftarrow \textsc{\fontfamily{cmss}\selectfont LossLess} ( b_{\text{new}}, b_j)$\; 
                        $E \leftarrow E \cup \left[ b_{\text{new}}, b_j \right] \backslash \left[ \text{parent}(b_j), b_j \right]$\;
                        $\text{cost}(b_j) \leftarrow \text{cost} (b_{\text{new}}) + \mathcal{D}(b_{\text{new}},b_j) $\;
                        %$\text{parent}(z_j) \leftarrow z_{\text{new}}$\;
                        $\textsc{\fontfamily{cmss}\selectfont UpdateDes}(G, b_j)$\;
                    %}            
                }
            }
        } $\textsc{\fontfamily{cmss}\selectfont BranchAndBound} (G)$\;
        \Return $G = (B, E)$
    }
    }
\caption{Improved Information-Geometric RRT* Algorithm}
\end{algorithm}
\subsubsection{Lossless Modification}
Simulation studies with Algorithm~\ref{algo:1} show that the $\textsc{\fontfamily{cmss}\selectfont IsLossless}(b_{i}, b_{j})$ check often returns $\textup{False}$, meaning that  Lines 11-13 and Lines 16-19 are skipped frequently. Consequently, an extremely large $N$ may be required for Algorithm~\ref{algo:1} to produce meaningful results.
To resolve this issue, we adopt an extra step called \emph{lossless modification} which, as shown graphically in Fig.~\ref{fig:LossLess_mod}, ensures that the existing links are all lossless. By including lossless modification, Algorithm~2 no longer needs  
to call $\textsc{\fontfamily{cmss}\selectfont IsLossless}(b_i,b_j)$ to verify the connected link $b_i \rightarrow b_j$ is lossless. Therefore, in Algorithm~\ref{algo:2},  we employ $\textsc{\fontfamily{cmss}\selectfont ObsCheck}(b_i, b_j)$ instead of $\textsc{\fontfamily{cmss}\selectfont FeasCheck}(b_i, b_j)$. This alteration of functions mitigates the computational burden. 

In Algorithm~\ref{algo:2}, we first find the parent $b_{\text{nbor}}^*$ for $b_{new}$ in Lines 6-12. Then in Line 13, the covariance component in $b_{new}$ is modified so that the transition from its parent becomes lossless.  Specifically, the $\textsc{\fontfamily{cmss}\selectfont LossLess}$ function computes $\textsc{\fontfamily{cmss}\selectfont LossLess} ( b_j, b_{\text{new}})= (x_{\text{new}}, Q^*)$, where $Q^*$ is the minimizer of \eqref{eq:d_info_general0} in computing $\mathcal{D}_{\text{info}} (b_j, b_{\text{new}})$. In rewiring step, a similar lossless modification is performed for the rewired node $b_j$ in Line 18 to assure the transition $b_{\text{new}} \rightarrow b_j$ is lossless. After modifying the rewired node $b_j$, all its descendant belief nodes are modified sequentially so that all transitions  become lossless as detailed in Algorithm~\ref{algo:update_des}.  Modifying the descendant nodes is a common practice, originally introduced in \cite{arslan2013use}, to increase the convergence rate of sampling-based methods. The process of lossless modification is similar to the method introduced in Appendix~\ref{ap:C} for constructing a lossless collision-free chain that has a lower cost from a given collision-free chain.  The source code for Algorithm~\ref{algo:2} is accessible at \href{https://github.com/AlirezaPedram/IG-RRTstar-rapid}{https://github.com/AlirezaPedram/IG-RRTstar-rapid}.

\begin{algorithm}[t]\label{algo:update_des}
    \footnotesize{
    $Parent\_list \leftarrow \{b_{\text{rewired}}\}$; $Child\_list \leftarrow \emptyset $\;  %$E \leftarrow \emptyset$; $G\leftarrow (Z,E)$\;
    \While {$Parent\_list \neq \emptyset$}{
        \For{$b_i \in Parent\_list $}{
        $Child\_list.append(\text{Children}(b_i))$ \;
        }
        \For {$b_j \in Child\_list$}{
        $b_{j} \leftarrow \textsc{\fontfamily{cmss}\selectfont LossLess} ( \text{parent}(b_j), b_j)$\;
        $\text{cost}(b_j) \leftarrow \text{cost}( \text{parent}((b_j)) + \mathcal{D}(\text{parent}((b_j), b_j) $\;
        }
        $Parent\_list \leftarrow Child\_list$\;
        $Child\_list \leftarrow \emptyset$\;
    } 
}
\caption{$\textsc{\fontfamily{cmss}\selectfont UpdateDes}(G, b_{\text{rewired}})$}
\end{algorithm}
\vspace{-0.2cm}

\begin{remark}
%In addition to mentioned modifications,
In Algorithm~\ref{algo:2},
we use
$\hat{\mathcal{D}}(b_1,b_2)= \|x_1-x_2\|+ \|P_1-P_2\|_F$ to define functions $\textsc{\fontfamily{cmss}\selectfont Nearest}$, $\textsc{\fontfamily{cmss}\selectfont Scale}$, and $\textsc{\fontfamily{cmss}\selectfont Neighbor}$, to reduce the execution time for these functions. 
%$(B, b_{\text{new}}, \hat{D}_{\text{min}})=  \{ b_i \in B: \hat{\mathcal{D}}(b_i,b_\text{new}) \leq \hat{D}_{\text{min}} \}$  which omits the computational costs of SVD calculations and speeds up the computations. 
Since $\hat{\mathcal{D}}$  is symmetric, the neighbors of the sampled  node
need to be computed only once (instead of twice as in Algorithm~\ref{algo:1}), which helps increase the speed of the algorithm.
%$b_{\text{new}}$ only once, and not twice like in Algorithm~\ref{algo:1}, which results in a higher speed again. 
However, $\textsc{\fontfamily{cmss}\selectfont ObsCheck}$ is called twice (Lines 10 and 17) because $\textsc{\fontfamily{cmss}\selectfont ObsCheck}(b_j, b_{\text{new}}) = \textup{True}$ does not necessarily imply  $\textsc{\fontfamily{cmss}\selectfont ObsCheck}(b_{\text{new}}, b_j) = \textup{True}$. Note that $\mathcal{D}$ (not the simplified $\hat{\mathcal{D}}$) is used elsewhere in the algorithm, and thus the algorithm still attempts to find the shortest path with respect to the original distance metric $\mathcal{D}$.
\end{remark}

\begin{figure*}[t!]
    \centering
    \subfloat[$\alpha = 0$]
    {\includegraphics[trim = 0.3cm 0cm 1.46cm 0.70cm, clip=true, width=0.55\columnwidth]{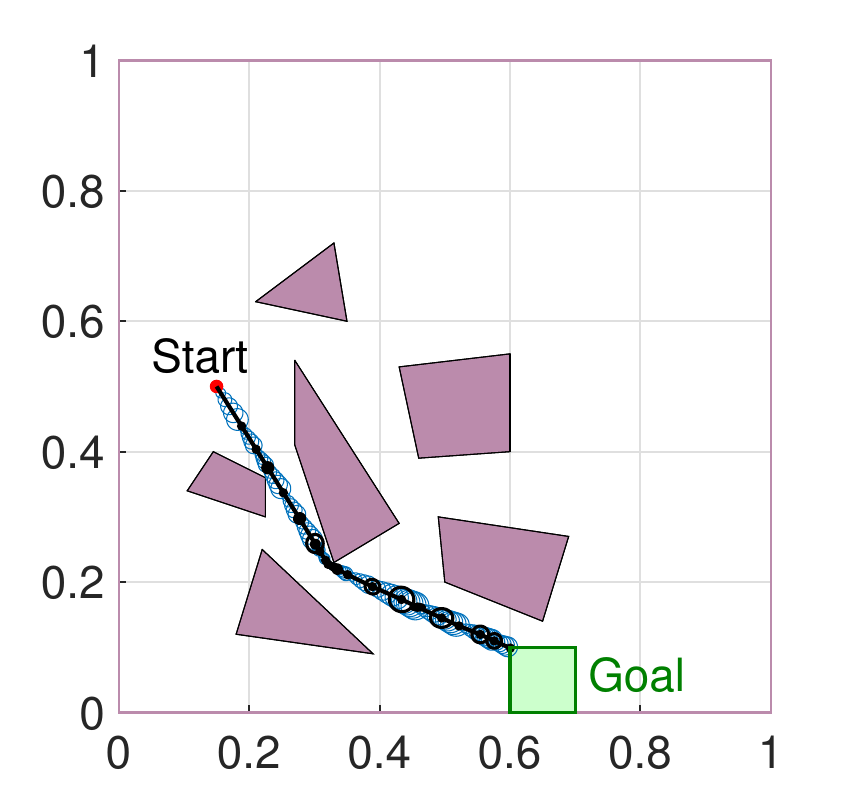}
    \label{fig:sim_snap_nocbf}} \quad 
    \subfloat[$\alpha = 0.3$]
    {\includegraphics[trim = 0.3cm 0cm 1.46cm 0.70cm, clip=true, width=0.55\columnwidth]{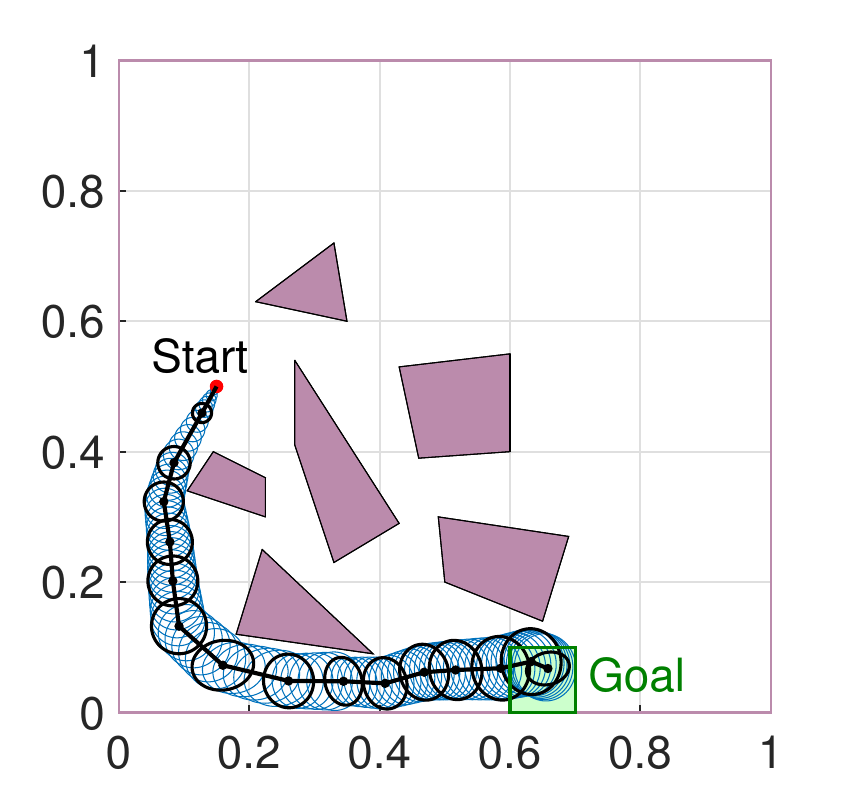}
    \label{fig:sim_snap_cbf19}} \quad
    \subfloat[$\alpha = 0.7$]
    {\includegraphics[trim = 0.3cm 0cm 1.46cm 0.70cm, clip=true, width=0.55\columnwidth]{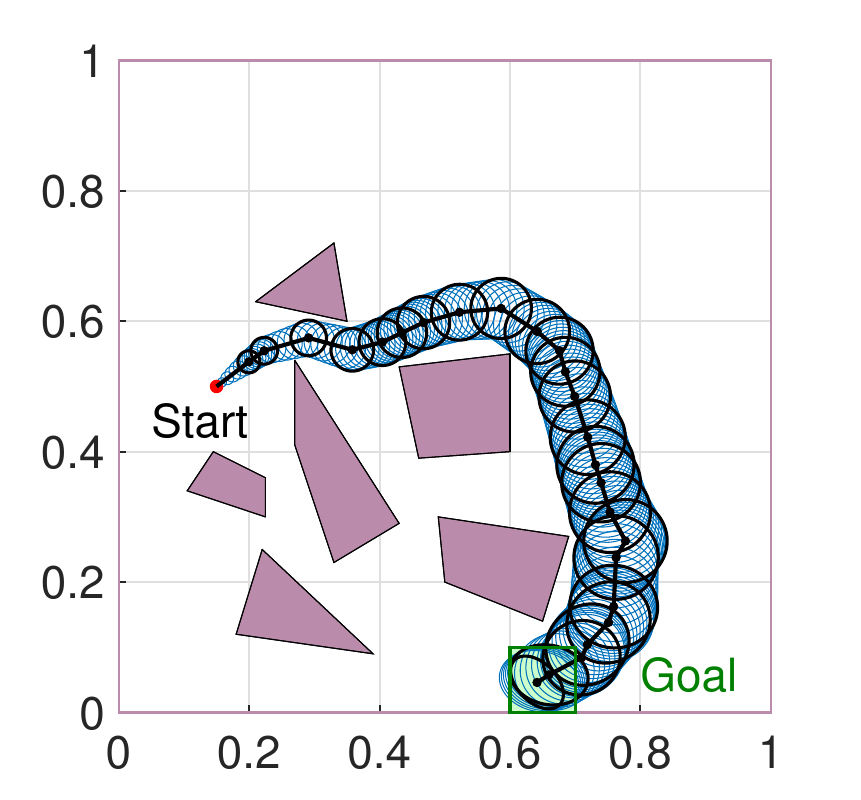}
    \label{fig:sim_snap_cbf19_1}} \quad
  
    \caption{Simulation results with $\alpha = 0, 0.3, 0.7$ under the existence of multiple obstacles. Disturbance noise intensity is set to $W = 10^{-3} I_2$ and confidence ellipses representing ${\textup{Pr}}=90 \%$ certainty regions. The boundaries of the plots are considered as obstacles.}
    \label{fig:2D_mult}
\end{figure*}
\vspace{-0.5cm}
\subsection{Incorporating Sensor Constraints}
\label{sec:sensor_constraint}
Algorithms~\ref{algo:1} and \ref{algo:2}  are purely geometric in the sense that they ignore physical constraints of the robot's hardware, including sensors. While this is an advantage in the sense we discussed in Section~\ref{sec:proposed_approach}, it is also a limitation. 
In particular, a possible glitch of a belief chain synthesized by our algorithms is that it can be ``physically unrealizable'' if the robot is not equipped with a sensor to perform necessary measurements. 
For example, a robot only equipped with a camera with a bounded field of view (FOV) is prohibited from obtaining information outside the FOV, or the situation where sensors provide valid measurements only in a specific range. Drones relying on GPS signals for localization may not have an access to sensor measurements in GPS-denied regions.
Such sensor constraints restrict the way in which belief states are updated. 

Fortunately, there is a simple remedy for this issue. Consider the transition from $b_i=(x_i, P_i)$ to $b_j=(x_j, P_j)$, and assume that the only sensor available at the belief state $b_j$ is $\by_{j}=C_j\bx_{j}+\bv_{j}$ with $\bv_{j}\sim\mathcal{N}(0, V_j)$.
In this transition, the covariance $P_i$ first 
grows into $\hat{P}_j = P_i+\|x_j-x_i\| W $ in the prediction step, which is then reduced to $\tilde{P}_j := (\hat{P}_j^{-1}+ C_j ^\top V_j^{-1} C_j)^{-1}$ in the update step. Thus, the transition from $b_i$ to $b_j$ is clearly feasible if $P_j \succeq \tilde{P}_j$. 
Therefore, to incorporate the sensor constraint, the function $\textsc{\fontfamily{cmss}\selectfont FeasCheck}$ in Algorithm~\ref{algo:1} simply needs to be replaced with $\textsc{\fontfamily{cmss}\selectfont FeasCheck2}(b_i, b_j):= \textsc{\fontfamily{cmss}\selectfont FeasCheck}(b_i,b_j) \  \textbf{\textup{and}} \  (P_j \succeq \tilde{P}_j)$.
\vspace{-0.5cm}
\subsection{Asymptotic Optimality}
The RRT* algorithm \cite{karaman2011sampling} is an improvement of the RRT algorithm \cite{lavalle2001randomized} to achieve asymptotic optimality (the cost of the best path discovered converges to the optimal one almost surely as the number of samples increases).
Since the algorithms we introduced in this section are RRT*-based, their asymptotic optimality can be naturally conjectured. 
Unfortunately, it is not straightforward to prove such a property in our setting due to a number of differences between the problem formulation \eqref{eq:main_problem} and the premises utilized in the original proof of asymptotic optimality \cite{karaman2010incremental,karaman2011sampling} (see also \cite{solovey2020revisiting}).

One of the premises that the original proof critically relies on is the continutity of the path cost function. Thus, to understand the asymptotic optimality of the algorithms we introduced in this section, it is essential to understand the continuity  of the path length function $c(\gamma)$ we introduced by \eqref{eq:def_path_length}.
The next theorem shows that the function $c(\gamma)$ is continuous in the space of finitely lossless paths with respect to the topology of total variation:
\begin{theorem}
\label{theo:continuity2}
Let $\gamma: [0,T]\rightarrow \mathbb{R}^d\times \mathbb{S}_\rho^d$ and $\gamma': [0,T]\rightarrow \mathbb{R}^d\times \mathbb{S}_\rho^d$ be paths. Suppose  $\gamma \in \mathcal{BV}[0, T]$ and $\gamma' \in \mathcal{BV}[0, T]$ and they are both finitely lossless. Then, for each $\epsilon > 0$, there exists $\delta > 0$ such that
\[
|\gamma'-\gamma|_{\text{TV}}\leq \delta \quad \Rightarrow \quad  |c(\gamma')-c(\gamma)| \leq \epsilon.
\]
\end{theorem}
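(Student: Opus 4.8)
The plan is to decompose the path length function into its travel and information parts, $c(\gamma) = c_{\text{travel}}(\gamma) + \alpha\, c_{\text{info}}(\gamma)$, and prove continuity of each separately in the total-variation topology. The travel part is the easy piece: $c_{\text{travel}}(\gamma) = \ell(x[0,T])$ is just the classical total variation of the mean curve $x(\cdot)$, and since $|\gamma'-\gamma|_{\text{TV}}$ dominates the total variation of $x'-x$ (up to the factor $\bar\sigma(W)$), $|c_{\text{travel}}(\gamma')-c_{\text{travel}}(\gamma)| \le \frac{1}{\bar\sigma(W)}|\gamma'-\gamma|_{\text{TV}}$ by the reverse triangle inequality for variation seminorms. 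So the real work is the information term.

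For the information term, the key move is to exploit finite losslessness. By hypothesis there is a \emph{finite} partition $\mathcal{P}$ such that on every refinement, every transition of $\gamma$ is lossless, and likewise for $\gamma'$ with some partition $\mathcal{P}'$; take the common refinement $\mathcal{P}_0 = \mathcal{P}\cup\mathcal{P}'$ and work with it (and its refinements). On lossless transitions, Theorem~\ref{theo:tria} tells us the sum telescopes in a controlled way: $\mathcal{D}_{\text{info}}(\gamma(t_k),\gamma(t_{k+1})) = \tfrac12\log\det\hat P_{k+1} - \tfrac12\log\det P_{k+1}$ with $\hat P_{k+1} = P_k + \|x_{k+1}-x_k\|W$. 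The plan is to show that along any refinement of $\mathcal{P}_0$,
\[
c_{\text{info}}(\gamma) = \sup_{\mathcal{P}\supseteq\mathcal{P}_0}\ \sum_{k=0}^{K-1}\Big[\tfrac12\log\det\big(P(t_k)+\|x(t_{k+1})-x(t_k)\|W\big) - \tfrac12\log\det P(t_{k+1})\Big],
\]
and that this supremum is attained in a limit where the two $\log\det$ terms nearly cancel, leaving an integral-like expression that depends continuously on the data. Concretely, I would write each summand as $\tfrac12\log\det\big(I + \|x(t_{k+1})-x(t_k)\| P(t_k)^{-1/2} W P(t_k)^{-1/2}\big) + \tfrac12\big(\log\det P(t_k)-\log\det P(t_{k+1})\big)$; the first group is $O(\|x(t_{k+1})-x(t_k)\|)$ uniformly (using $P(t)\succeq\rho I$ and $\mathrm{Tr}(P)\le R$ from Assumption~1, which bounds $P^{-1}$ and hence the log via $\log\det(I+A)\le \mathrm{Tr}(A)$), and the second group telescopes to $\tfrac12(\log\det P(0)-\log\det P(T))$ plus correction terms that vanish as the mesh refines precisely because of losslessness.

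The main obstacle I anticipate is controlling the $\log\det$ differences uniformly when the two paths $\gamma$ and $\gamma'$ are close in total variation but their partitions $\mathcal{P}, \mathcal{P}'$ differ — i.e., making the "near-cancellation" estimate robust to a change of path. The clean way around this is: (1) fix $\epsilon>0$; (2) choose a single finite partition $\mathcal{P}_\epsilon \supseteq \mathcal{P}_0$ fine enough that $c_{\text{info}}(\gamma;\mathcal{P}_\epsilon) \ge c_{\text{info}}(\gamma) - \epsilon/3$ and similarly for $\gamma'$ (possible by definition of the supremum, \emph{and} using that both paths are finitely lossless so that refining past $\mathcal{P}_0$ only increases the partial sums monotonically toward $c_{\text{info}}$, which I should verify via Theorem~\ref{theo:tria} applied transition-by-transition); (3) on this \emph{fixed finite} partition, the map $\gamma \mapsto c_{\text{info}}(\gamma;\mathcal{P}_\epsilon)$ is a finite sum of compositions of $\log\det$, matrix inversion, and norms, all of which are Lipschitz on the compact set $\{P : \rho I \preceq P,\ \mathrm{Tr}(P)\le R\}$ intersected with the ball $\|x(t)\|\bar\sigma(W)+\bar\sigma(P(t))\le \|\gamma\|_\infty$; since $\|\cdot\|_\infty \le |\cdot|_{\text{TV}}$, a bound on $|\gamma'-\gamma|_{\text{TV}}$ forces $\gamma'$ into a fixed compact neighborhood and makes $\gamma'(t_k)$ uniformly close to $\gamma(t_k)$ at the finitely many partition points; hence $|c_{\text{info}}(\gamma';\mathcal{P}_\epsilon) - c_{\text{info}}(\gamma;\mathcal{P}_\epsilon)| \le \epsilon/3$ for $\delta$ small. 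Combining the three $\epsilon/3$ bounds with a final triangle inequality gives $|c_{\text{info}}(\gamma')-c_{\text{info}}(\gamma)|\le\epsilon$.

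One technical subtlety to handle carefully: the supremum defining $c_{\text{info}}$ is over \emph{all} partitions, not just refinements of $\mathcal{P}_0$, so in step (2) I need the lemma that for a finitely lossless path, $\sup_{\mathcal{P}} c_{\text{info}}(\gamma;\mathcal{P}) = \sup_{\mathcal{P}\supseteq\mathcal{P}_0} c_{\text{info}}(\gamma;\mathcal{P})$ — this follows because inserting the finitely many points of $\mathcal{P}_0$ into any partition can only increase $c_{\text{info}}(\gamma;\cdot)$ by the triangular inequality (Theorem~\ref{theo:tria}) and does not decrease losslessness on the resulting sub-transitions. Also, I should double-check that $c_{\text{travel}}(\gamma) = \ell(x[0,T])$ is finite and the $\bar\sigma(W)>0$ assumption is implicit (if $W$ is only PSD with a zero eigenvalue the travel estimate degrades, but the paper's setup with $\bn_k\sim\mathcal N(0,\|x_{k+1}-x_k\|W)$ and the use of $\bar\sigma(W)$ in $|\cdot|_{\text{TV}}$ strongly suggests $W\succ 0$, which I will assume). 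With these pieces in place the proof is a standard three-$\epsilon$ argument; the novelty is entirely in using Theorem~\ref{theo:tria} plus finite losslessness to reduce an infinite supremum to a single finite sum before invoking compactness.
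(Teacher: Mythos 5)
There is a genuine gap, and it is exactly where you flag the "technical subtlety": your three-$\epsilon$ scheme requires a finite partition $\mathcal{P}_\epsilon$ that nearly attains the supremum for \emph{both} $\gamma$ and $\gamma'$, and a Lipschitz bound for $\gamma\mapsto c(\gamma;\mathcal{P}_\epsilon)$ whose constant grows with the number of points in $\mathcal{P}_\epsilon$. Both objects therefore depend on $\gamma'$, so the $\delta$ you end up with depends on $\gamma'$ as well. That only proves the theorem under the degenerate reading in which $\gamma'$ is fixed before $\delta$ (a reading under which the implication can be made vacuous anyway); it does not give the uniform statement the paper actually establishes and that any continuity/asymptotic-optimality argument needs, namely $\delta=\epsilon/\bigl(L_\rho(1+|\gamma|_{\text{TV}})\bigr)$ depending only on $\gamma$, $\rho$, $\epsilon$. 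The missing idea is a perturbation estimate that is \emph{uniform over partitions}: the paper's Lemma~\ref{lem:main2} bounds the per-transition discrepancy $|\mathcal{D}(\gamma'(t_k),\gamma'(t_{k+1}))-\mathcal{D}(\gamma(t_k),\gamma(t_{k+1}))|$ (for lossless transitions, where $\mathcal{D}_{\text{info}}$ has the explicit $\log\det$ form) by $L_\rho$ times increments of the \emph{difference} path plus $\delta$ times increments of $\gamma$ itself. Because the bound is in terms of increments, summing over any partition refining the common losslessness partition yields $L_\rho\bigl(V(\gamma'-\gamma;\mathcal{P})+\delta V(\gamma;\mathcal{P})\bigr)\le L_\rho\bigl(|\gamma'-\gamma|_{\text{TV}}+\delta|\gamma|_{\text{TV}}\bigr)$ with no dependence on the cardinality of $\mathcal{P}$, and one can then pass to the suprema via common refinements. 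Your proposal contains no estimate of this kind (it is obtained in the paper through matrix-perturbation lemmas for $\log\det$, the inverse, and the square root on $\mathbb{S}^d_\rho$), and without it the "fixed finite partition plus compactness" step cannot be made uniform: paths within $\delta$ of $\gamma$ in total variation may need arbitrarily fine partitions to nearly attain their own length.

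Two smaller points. First, your reduction "refining past $\mathcal{P}_0$ only increases the partial sums, by Theorem~\ref{theo:tria} transition-by-transition" is not quite right as stated: the triangle inequality gives monotonicity under refinement of the \emph{total} sums $c(\gamma;\mathcal{P})$ (and of the travel sums), but not of the information sums alone; monotonicity of $c_{\text{info}}(\gamma;\cdot)$ under lossless refinement does hold, but it needs a separate argument (insert a point $t^*$, use $P(t^*)\preceq P(t_k)+\|x(t^*)-x(t_k)\|W$ and the monotonicity of $Q\mapsto\log\det(I+cQ^{-1/2}WQ^{-1/2})$), so either supply that or work with the total sums as the paper does, which sidesteps the travel/information split entirely. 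Second, the travel-cost estimate divides by $\bar\sigma(W)$, so you are implicitly assuming $W\neq 0$; the paper's argument keeps $\bar\sigma(W)$ multiplying the $x$-increments throughout and never needs that division.
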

\begin{proof}
See Appendix~\ref{ap:A}, where without loss of generality we assume $T=1$.
\end{proof}
\vspace{-0.1cm}
A complete proof of asymptotic optimality requires much additional work and must be postponed as future work.
\vspace{-0.3cm}
\section{Numerical Experiments}
\label{sec:simulation}

\subsection{Impact of Changing \texorpdfstring{$\alpha$}{a} }
\label{sec:simulation_alpha}
In this experiment, Algorithm~\ref{algo:2} is tested  with $\alpha = 0.0,\ 0.3,$ and $0.7$ in a two-dimensional configuration space containing multiple obstacles shown in Fig.~\ref{fig:2D_mult}.
The paths shown in Fig.~\ref{fig:2D_mult} are generated by sampling 20,000 nodes. Sampled covariance ellipses are shown in black and the propagation between samples are shown in blue.

As shown in Fig.~\ref{fig:2D_mult}~(a), the algorithm yields a path with the shortest Euclidean length when $\alpha = 0$. If the weight is increased to $\alpha=0.7$, the algorithm finds a long path depicted in Fig.~\ref{fig:2D_mult}~(c). 
When $\alpha = 0.3$, the path illustrated in Fig.~\ref{fig:2D_mult}~(b) is obtained.
Numerical experiment shows that the optimal paths are homotopic to Fig.~\ref{fig:2D_mult}~(a), (b), and (c) when $\alpha \leq 0.3$,  $0.3 < \alpha \leq 0.5$, and $0.5 < \alpha$, respectively. In the sequel,  red, purple, and blue colors are used to refer to the paths that are homotopic to the paths shown in Fig.~\ref{fig:2D_mult}~(a), (b), and (c). 
%\ref{fig:2D_mult}~(b), and \ref{fig:2D_mult}(c), respectively. %Similar color coding is used in the plots in the upcoming subsections.
Fig.~\ref{fig:cost}~(a) and (b) display the travel and information costs as  functions of $\alpha \in \{0.1, \dots, 1 \}$ for the environment shown in Fig.~\ref{fig:2D_mult}. 
Fig.~\ref{fig:cost}~(b) %demonstrates the general decreasing nature
shows a decreasing trend of the information cost as a function of $\alpha$. It is not monotonically decreasing (even within the same homotopy class) because of probabilistic nature of RRT*; for each $\alpha$, the generated path itself is a random variable.  

From these simulation results,
it is evident that more clearance around the path tends to imply less perception cost. However, perception cost reduction cannot be achieved merely by adopting the maximum clearance planners (e.g., \cite{sakcak2021complete} and the references therein).
This is because these planners only account for the minimum clearance incurred along the path, whereas the perception cost is a function of the clearance of all way-points along the path.

%Unfortunately, maximum clearance path planners (like the ones proposed in  and the references therein) could not be deployed to find a path with a high clearance. As
\begin{figure}[t!]%\label{fig:exp_snap}
    \vspace{-0.3cm}
    \centering
    \subfloat[Travel Cost]
    {\includegraphics[trim = 0.1cm 0cm 1cm 0.4cm, clip=true, width=0.4\columnwidth]{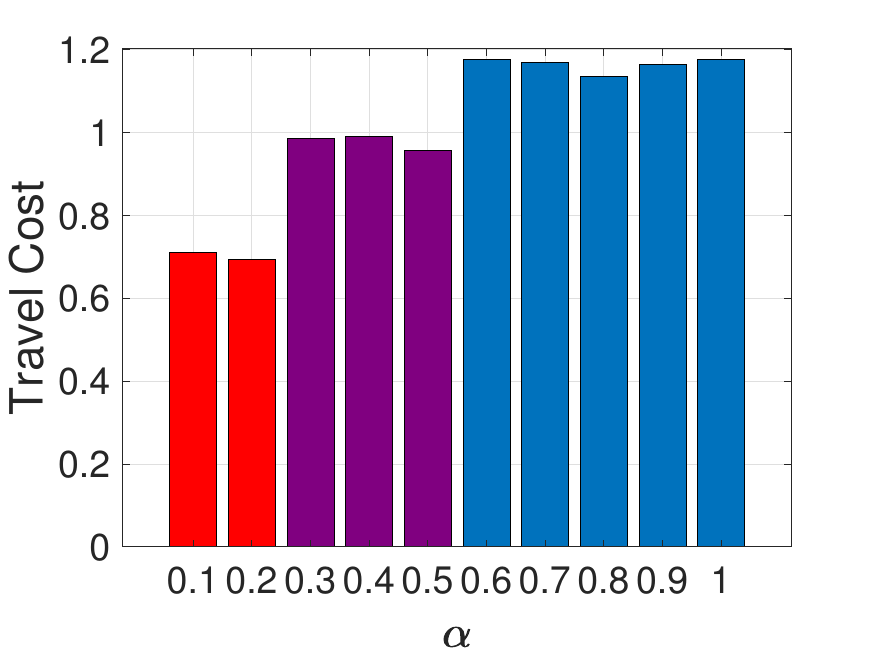}
    \label{fig:travel_cost}} \quad 
    \subfloat[Information cost.]
    {\includegraphics[trim = 0.1cm 0cm 1cm 0.4cm, clip=true, width=0.4\columnwidth]{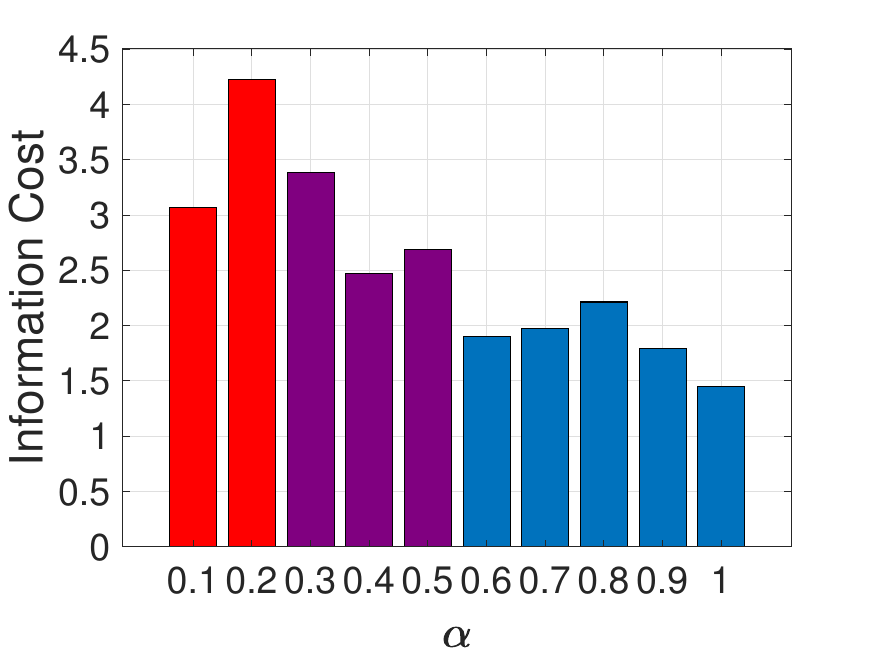}
    \label{fig:info_cost}}
    \caption{Travel and information costs as functions of $\alpha$ for the simulations tested in the environment shown in Fig.~\ref{fig:2D_mult}.}
    \label{fig:cost}
\end{figure}
%the  algorithm  does not take this constrained pathway, and thus requires fewer sensor actions in order to avoid obstacle collisions. As a result, the algorithm deviates from the shortest Euclidean distance path and allows the covariance to propagate safely, as seen in Fig.~\ref{fig:2D_mult}~(c).
%A moderate path, illustrated in Fig.~\ref{fig:2D_mult}~(b), can be also obtained by choosing $\alpha = 0.1$.
\vspace{-0.4cm}
\subsection{Asymmetry of Path Length}
\label{sec:asymmetry}
As seen in Section~\ref{sec:simulation_alpha}, there is a direct connection between the clearance of a path and perception cost required to follow it. However, the path clearance is not the only factor that determines the perception cost. 
To see this fact, consider a ground robot moving into (resp. moving out of) a funnel-shaped safe region shown in Fig.~\ref{fig:Qbot}\subref{fig:Qbot_conv} (resp. Fig.~\ref{fig:Qbot}\subref{fig:Qbot_div}). 
For this ground robot, we assume Quanser's Qbot2e system (see \href{https://www.quanser.com/products/qbot-3/ for full specification }{https://www.quanser.com/products/qbot-3/} for specification of this robot). This differential wheeled mobile platform is equipped with two wheel encoders and an IMU unit. This robot uses the Microsoft Kinect sensor with a $57\,{\rm deg}$ field of view to obtain color image frames (RGB) as well as depth information. 
We use a particle filter for visual-inertial odometry with $5000$ particles. 
For this simulation, we simulate the navigation of the robot along the blue dashed lines shown in the Fig.~\ref{fig:Qbot}, where the robot should avoid entering the light red regions. More precisely, the ${\textup{Pr}}=90 \%$ confidence ellipse should not overlap with the red regions. 
\begin{figure}[t!]%\label{fig:exp_snap}
    \centering
    \subfloat[Qbot navigation through converging passage]
    {\includegraphics[trim = 0.1cm 0.7cm 1cm 2.5cm, clip=true, width=0.75\columnwidth]{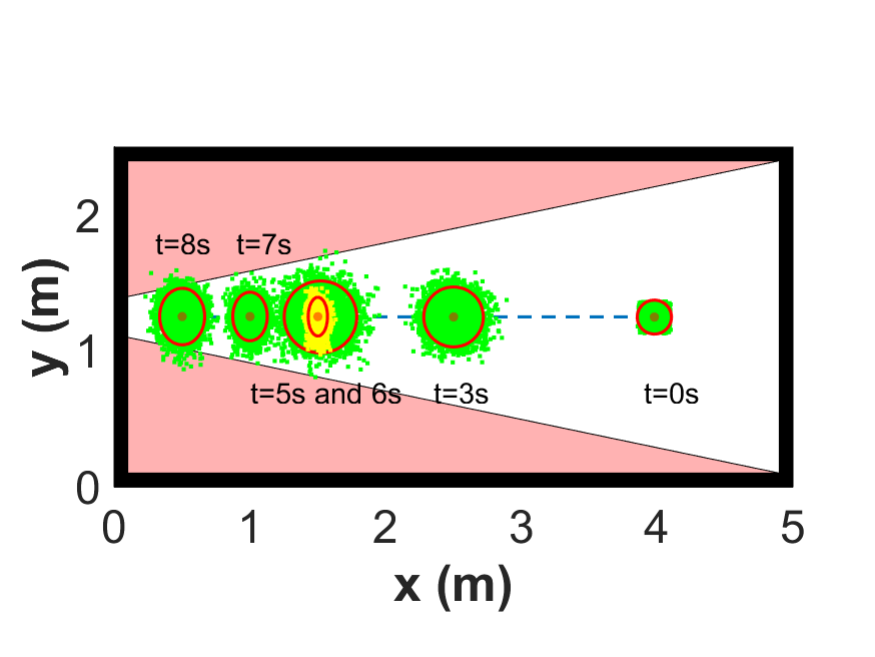}
    \label{fig:Qbot_conv}} \\
    \subfloat[Qbot navigation through diverging passage]
    {\includegraphics[trim = 0.1cm 0.7cm 1cm 2.5cm, clip=true, width=0.75\columnwidth]{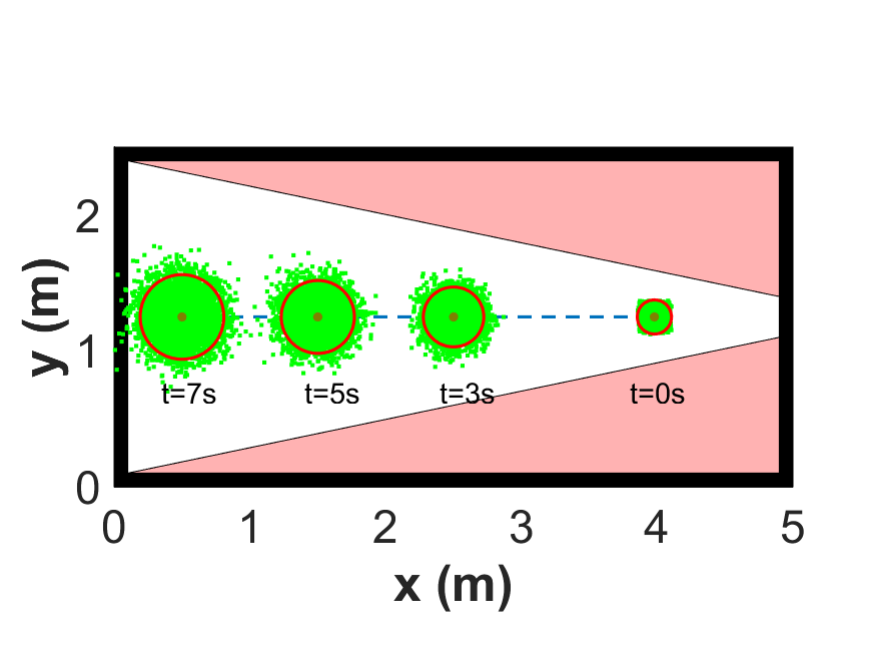}
    \label{fig:Qbot_div}}
    \caption{Navigation of Qbot through converging and diverging passages. The reference trajectory is shown in the blue line. Green dots show the position of particles in the particle filter, where the red ellipses are ${\textup{Pr}}=90 \%$ confidence ellipses. As shown in Fig.~\ref{fig:Qbot}(a), Qbot needs to stop at $t=5$\,s for $1$\,s and rotate to obtain visual information to reduce the uncertainty while going through a converging passage. The position of particles after the stop is shown in yellow. However, Qbot does not need to stop while traveling a diverging passage as shown in Fig.~\ref{fig:Qbot}(b).}
    \label{fig:Qbot}
\end{figure}

As shown in Fig.~\ref{fig:Qbot}, the particles start diverging during the navigation due to the drift in odometry data. As demonstrated in Fig.~\ref{fig:Qbot}\subref{fig:Qbot_conv}, for safe navigation in the converging passage, the robot should temporarily stop following the path at $t=5\,{\rm s}$ for $1\,{\rm s}$, and instead start rotating to localize itself using visual data obtained from the camera. This stop-and-localization is an example of perception effort. However, as shown in Fig.~\ref{fig:Qbot}\subref{fig:Qbot_div}, it is not required to stop for visual localization while navigating through the diverging passage. As a result, following  the path in Fig.~\ref{fig:Qbot}\subref{fig:Qbot_conv} takes $1\,{\rm s}$ longer than the path in Fig.~\ref{fig:Qbot}\subref{fig:Qbot_div}, although the blue paths in both settings have the same clearance from red regions. This example shows that  max-clearance planners (e.g., \cite{sakcak2021complete}) might not be able to find path with low perception effort. However, our proposed path length function $c(\gamma)$ is direction-dependant, and it is able to distinguish the sensing effort incurred by two paths in Fig.~\ref{fig:Qbot}.
Here, we demonstrate this ability using another sample configuration space (Fig.~\ref{fig:2D_asym}) in which there exists two paths having the same Euclidean length but different $c(\gamma)$.
\begin{figure}[t]
\vspace{-0.2cm}
\centering
{\includegraphics[trim = 0.9cm 0.7cm 0cm 0.7cm, clip=true, width=0.9\columnwidth]{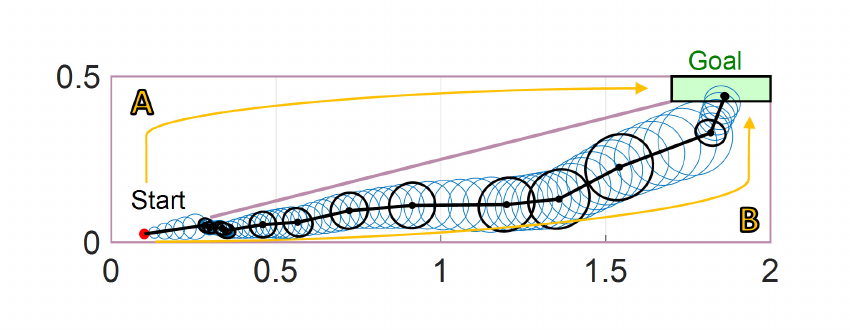}}
\caption{Results of Algorithm~\ref{algo:2} with $10,000$ nodes in the two-dimensional space containing roughly two paths, A and B, separated by a diagonal wall. The black line is the shortest path with the associated covariance ellipses. The blue ellipses illustrate the propagation of covariance between nodes. The simulation was completed with $W = 10^{-3} I_2$ and $\chi^2$ covariance ellipses representing ${\textup{Pr}}=90 \%$ certainty regions. The boundaries of the plots are considered as obstacles.}
\label{fig:2D_asym}
\end{figure}
\begin{figure*}[t!]%\label{fig:exp_snap}
    \centering
    \subfloat[$\alpha = 0.1$]
    {\includegraphics[trim = 0.3cm 0cm 1.46cm 0.70cm, clip=true, width=0.55\columnwidth]{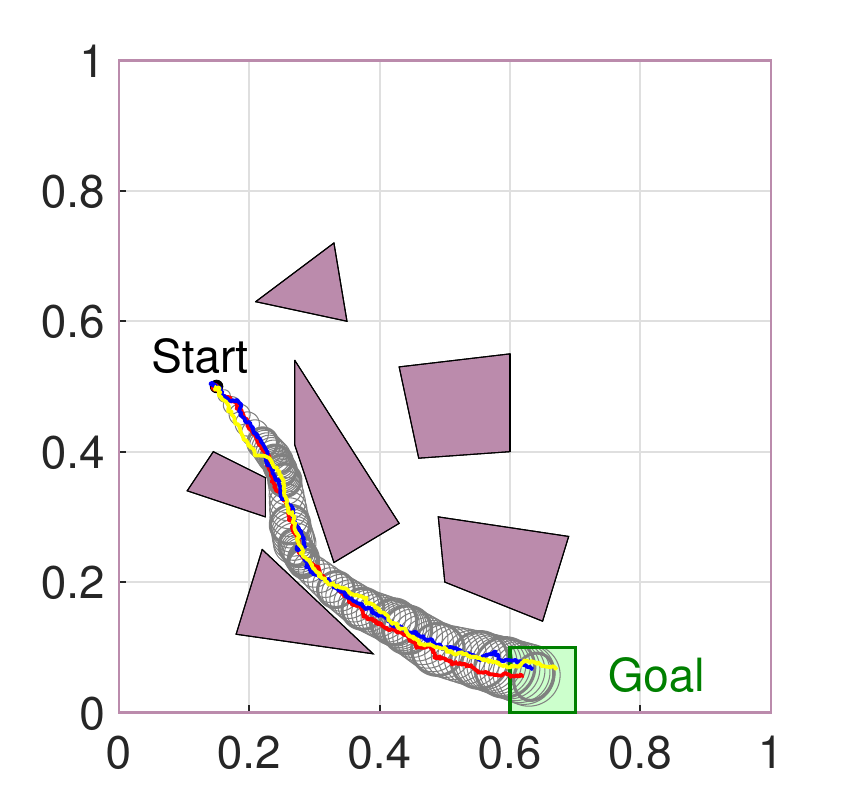}
    % {\includegraphics[clip, width=5.2cm]{figure/case3_snapshots/test3_006s.png}
    \label{fig:EV_alpha_01}} \quad 
    \subfloat[$\alpha = 0.3$]
    {\includegraphics[trim = 0.3cm 0cm 1.46cm 0.70cm, clip=true, width=0.55\columnwidth]{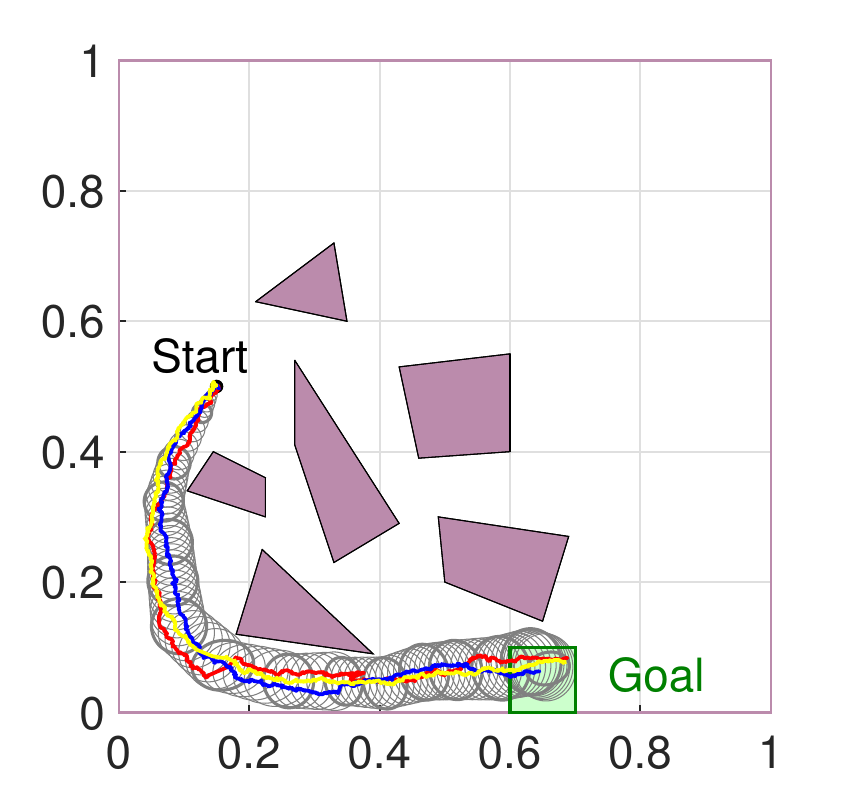}
    % {\includegraphics[clip, width=5.2cm]{figure/case3_snapshots/test3_000s.png
    \label{fig:EV_alpha_03}} \quad
    \subfloat[$\alpha = 0.7$]
    {\includegraphics[trim = 0.3cm 0cm 1.46cm 0.70cm, clip=true, width=0.55\columnwidth]{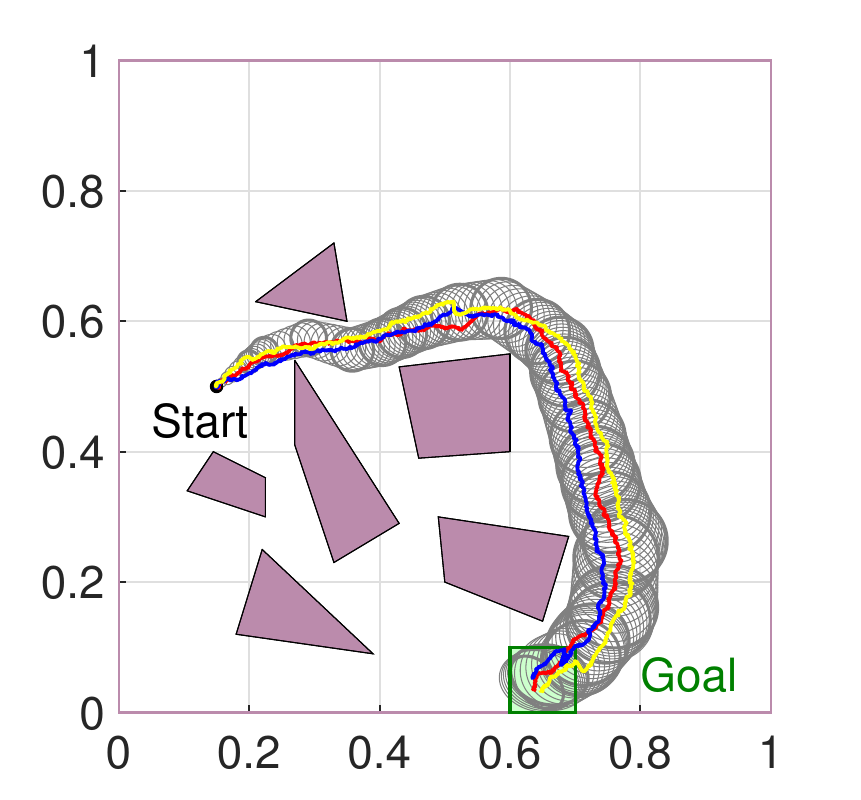}
    % {\includegraphics[clip, width=5.2cm]{figure/case3_snapshots/test3_000s.png
    \label{fig:EV_alpha_07}} \quad
    \caption{
    The reference paths generated with $\alpha=0.1, 0.3$ and $0.7$ are followed by an event-based LQG controller using the high-precision sensor with $Z=10^{-4} I_2$. Three sample trajectories (shown in red, blue, and yellow) are plotted in each case. 
     }
    \label{fig:Event_based}
\end{figure*}

Fig.~\ref{fig:2D_asym} shows a two-dimensional configuration space with a diagonal wall. The initial state of the robot is marked by the red dot and the target region is shown as the green rectangle at the upper-right corner. The path is obtained by running Algorithm~\ref{algo:2} for $N=10,000$ nodes.
Notice that in Fig.~\ref{fig:2D_asym}, there exist two homotopy classes of paths from the initial state to the goal, as shown by A and B; each path resembles the Qbot navigation scenario in Fig.~\ref{fig:Qbot}.

The simulation results reveal that the proposed methods prefer Path B, which has a similar Euclidean length to Path A, but yields less sensing effort as seen in the Qbot simulation. More specifically, Path B allows the covariance to grow freely until the robot approaches the goal region, where a one-time covariance reduction is performed. In contrast, Path A requires covariance reduction multiple times as the passage narrows. 
Note that this strategy can be understood by invoking the optimality of the ``move-and-sense'' strategy for transitioning between two points (Recalling the comment after Theorem~\ref{theo:tria}).

\vspace{-0.5cm}
\subsection{Event-based Control with Acceleration Input}
So far, we have considered $\mathcal{D}_{\text{info}}$ (i.e., the entropy reduction) as the cost of perception without demonstrating the connections between $\mathcal{D}_{\text{info}}$ and more concrete metrics of perception costs (e.g., sensing power or sensing frequency).
In this section, we consider a mobile robot with a noisy location sensor, and demonstrate that the belief path with a small $\mathcal{D}_{\text{info}}$ helps the robot to navigate with less frequent measurements. Once again, consider the environment shown in Fig.~\ref{fig:2D_mult}. As in \eqref{eq:euler3}, assume % To demonstrate the utility of the proposed path planning method for robots whose dynamics are different from 
% \eqref{eq:euler3}, we now consider the same setup as in Subsection~\ref{subsec:event} except that
the robot is a point mass with acceleration input. Denoting by $[x_{1,k} \; x_{2,k}]^\top$ and  $[v_{1,k} \; v_{2,k}]^\top$ the position and the velocity of the robot, the dynamics are described as 
\begin{equation*}
   \begin{bmatrix}
    \bx_{1,k+1} \\ \bx_{2,k+1} \\ {\bv}_{1,k+1} \\ {\bv}_{2,k+1}
    \end{bmatrix} \!=\! 
    \begin{bmatrix}
    I_2 & \Delta t  I_2 \\
    0_2 & I_2
    \end{bmatrix}
    \begin{bmatrix}
    \bx_{1,k} \\ \bx_{2,k} \\  {\bv}_{1,k} \\ {\bv}_{2,k}
    \end{bmatrix} \!+\!
    \begin{bmatrix}
    0 \\ 0 \\ a_{1,k} \\ a_{2,k}
    \end{bmatrix} \Delta t +  \bw_k,~\bw_k \sim \mc N(0, \|\Delta t\| W),
\end{equation*}
where acceleration $a_k = [a_{1,k} \; a_{2,k}]^\top$ is the control input with $\Delta t = \frac{1}{30}$ and  $W = {\rm diag} (10^{-4}, 10^{-4},0,0)$. The robot can observe its position by making noisy measurements of its position as 
\begin{equation}
\label{eq:event_measurement}
\by_k = \begin{bmatrix}
\by_{1,k} \\ \by_{2,k}
\end{bmatrix} = \begin{bmatrix}
\bx_{1,k} \\ \bx_{2,k}
\end{bmatrix} + \bz_t, \quad \bz_t \sim \mathcal{N} (0, Z).
\end{equation}
Two values of $Z=10^{-3} I_2$ and $Z=10^{-4} I_2$ are considered to model  moderate and high precision measurements, respectively.
The robot uses a Kalman filter (KF) to obtain the  estimation $\hat{\bx}_k \sim (\hat{x}_k, P_{k}^{\text{KF}})$. To follow the reference covariance $\{P_{k}^{\text{ref}}\}_{k=0}^{N}$ with infrequent measurements we adopt an event-based sensing strategy. In particular, the robot performs a measurement \eqref{eq:event_measurement} only when the confidence ellipse corresponding to $(\hat{x}_k, P_{k}^{\text{KF}})$ is not contained in the planned ellipse corresponding to $(x_k^{\text{ref}}, P^{\text{ref}}_k)$ under a fixed confidence parameter $\chi^2$.
The reference control input to follow the reference trajectory is generated using a linear quadratic tracker for the nominal speed of 0.1\,m/s. An LQG controller is used in the path following. 
%phase.}
%and the dead-beat controller (i.e.,  $u_k = x_{k+1}^{\text{ref}} - \hat{x}_k$) to follow the reference trajectory $\{x_{k}^{\text{ref}}\}_{k=0}^{N}$.
%where $Z=10^{-3} I$ and $Z=10^{-4} I$ are used. Similarly to the previous subsection, the robot conducts a measurement only when the confidence ellipse is not contained in the planned ellipse. 
Fig.~\ref{fig:Event_based} shows three sample trajectories of $[{x}_{1,k} \; {x}_{2,k}]^\top$ obtained for each of the scenarios with $\alpha = 0.1, 0.3$ and $0.7$. Fig.~\ref{fig:num_meas_di} shows the number of measurements  with different values of $\alpha \in \{0.1, \dots, 1\}$, averaged over $500$ sampled trajectories. Fig!\ref{fig:num_meas_di} illustrates that the robot following the path generated for higher $\alpha$ performs fewer measurements. Note that the non-monotonicity observed in Fig.~\ref{fig:num_meas_di} is primarily due to the stochastic nature of the RRT* algorithm. Fig.~\ref{fig:num_meas_di} also shows that the number of measurements can be reduced by using sensors with a higher precision.
\begin{figure}[t!]%\label{fig:exp_snap}
\vspace{-0.5cm}
    \centering
    \subfloat[The number of measurements using moderate precision sensor $Z=10^{-3} I_2$.   ]
    {\includegraphics[trim = 0.1cm 1cm 1cm 0.0cm, clip=true, width=0.4\columnwidth]{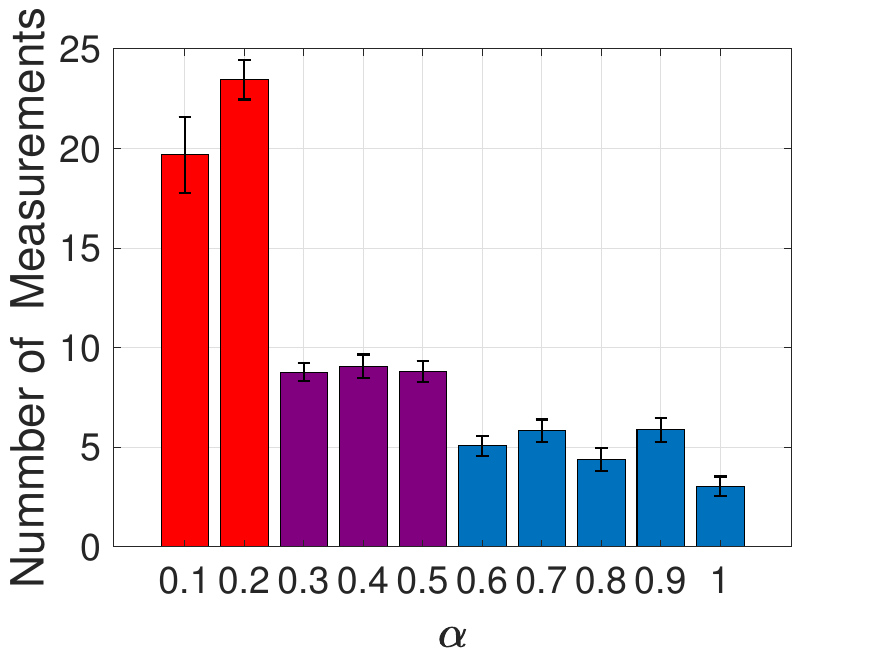}
    \label{fig:low_prec_di}} \quad 
    \subfloat[The number of measurements using high precision sensor $Z=10^{-4} I_2$.]
    {\includegraphics[trim = 0.1cm 1cm 1cm 0.0cm, clip=true, width=0.4\columnwidth]{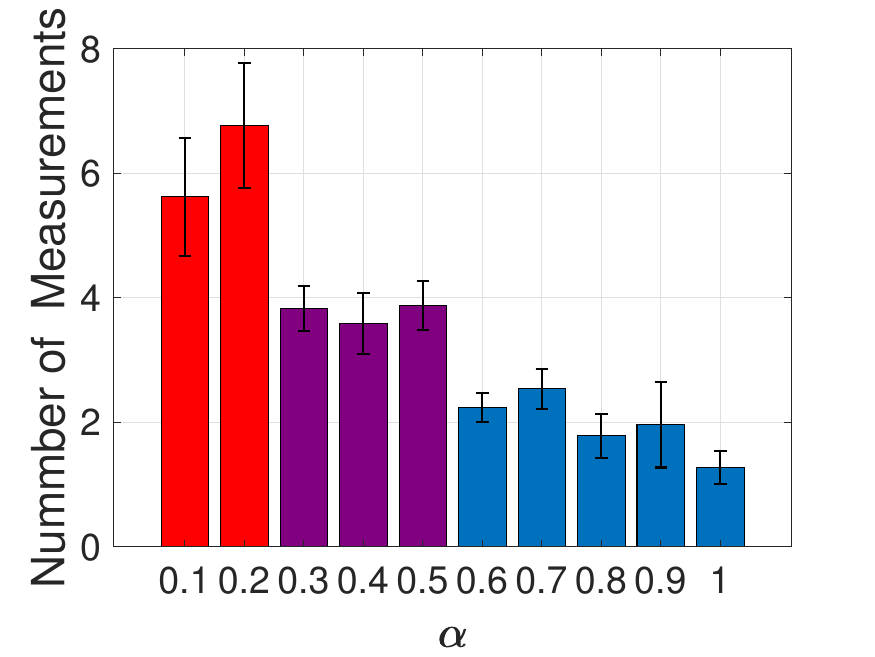}
    \label{fig:high_prec_di}}
    \caption{The  number of required measurements for a double integrator robot using event-based LQG controller. The results are averaged over $500$ randomly generated paths.}
    \label{fig:num_meas_di}
\end{figure}
\vspace{-0.4cm}
\subsection{Path Following with Landmark Selections}
Previous subsection demonstrated a correlation between $\alpha$ and the frequency of sensing actions. In this subsection,  we consider a scenario where multiple sensors are available to the robot. 
We show the proposed planning strategy is effective to reduce the number of sensors that must be activated simultaneously within a single time step.
Specifically, we consider a scenario in which a robot follows a path generated by Algorithm~\ref{algo:2} while localizing its position by an omnidirectional camera that provides the relative angle between itself and obstacles (which also serve as landmarks).
We show the number of measured obstacles during the path following phase decreases by increasing $\alpha$ in the path planning phase.

The state of the robot at time step $k$ comprises the 2-D position $[x_k~y_k]^\top$ and the orientation $\theta_k$.
The dynamics of the robot are governed by the unicycle model perturbed with a Gaussian i.i.d. noise
\begin{equation} \label{eq:uni_cycle}
   \begin{bmatrix}
    \bx_{k+1} \\ \by_{k+1} \\ {\bm{\theta}}_{k+1}
    \end{bmatrix} \!=\! 
    \begin{bmatrix}
    \bx_k \\ \by_k \\  \bm{\theta}_{k}
    \end{bmatrix} \!+\!
    \begin{bmatrix}
    v_k \cos{{\bm \theta}_k} \\ v_k \sin{{\bm{\theta}}_k} \\ \omega_k
    \end{bmatrix} \Delta t +  \bw_k,~\bw_k \sim \mc N(0, W),
\end{equation}
with the velocity and angular velocity input $u_k = [v_k~\omega_k]^\top$.
In this simulation, we set $\Delta t = \frac{1}{30}$s and $W = {\rm diag}(\Delta t\times10^{-4}, \Delta t\times10^{-4}, \frac{1}{100}\times\frac{\pi}{180})$.
The relative angles between the center of obstacles and the robot are extracted via the computer vision techniques \cite{Lowry16_visual_place} and a camera model \cite{Kawai11_panorama_cam}.
The measurement model can be expressed as
\begin{equation} \label{eq:sens_model_sim}
    \bm{y}_{k} =  
    \begin{bmatrix}
    \arctan{\left( \frac{m_{1,y}-\bm{y}_k}{m_{1,x}-\bm{x}_k} \right)} - \bm{\theta}_k
    \\ \vdots \\
    \arctan{\left( \frac{m_{M,y}-\bm{y}_k}{m_{M,x}-\bm{x}_t} \right)} - \bm{\theta}_k
    \end{bmatrix}
    + \bv_k,~ \bv_k \sim \mc N(0, \hat V),
\end{equation}
with positions of known obstacles with known positions $m_j = [m_{j,x}~m_{j,y}]^\top$ for  $j \in \{1,\ldots,M\}$, 
where $\hat V =  {\rm diag}(\{\hat{V}_{j}\}_{j \in \{1,\ldots,M\}})$ is the noise level of the sensors in case the robot decides to measure all obstacles.
$\hat V_i = 0.305$ for all landmarks, for which the standard variance is 10\,deg.
% Figure for Landmark selection simulation
%\vspace{-0.3cm}
\begin{figure}[t]%\label{fig:exp_snap}
\vspace{-0.5cm}
    \centering
    \subfloat[Planned path with $\alpha = 0.4$]
    {\includegraphics[trim = 0.1cm 0cm 1cm 0.3cm, clip=true, width=0.4\columnwidth]{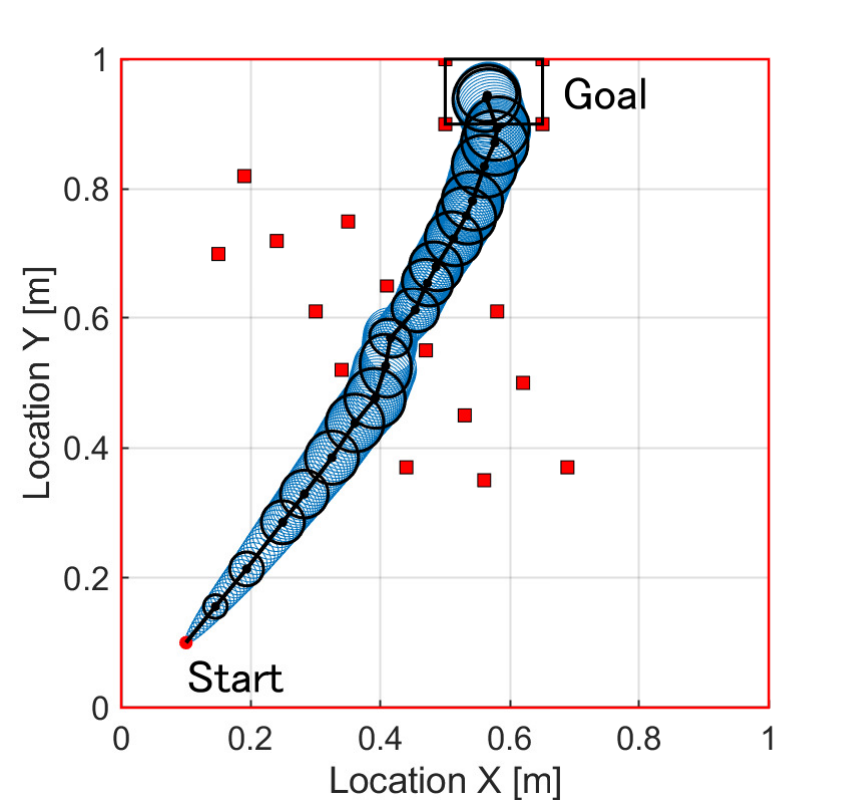}
    \label{fig:plan_al04}} \quad 
    \subfloat[Planned path with $\alpha = 1.6$]
    {\includegraphics[trim = 0.1cm 0cm 1cm 0.3cm, clip=true, width=0.4\columnwidth]{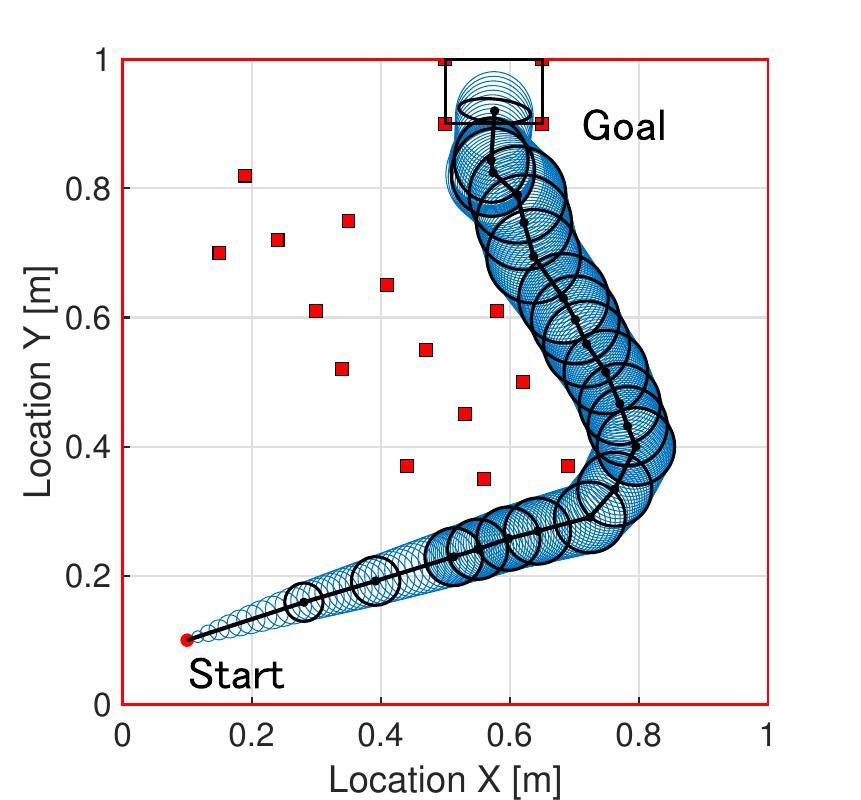}
    \label{fig:plan_al16}} \\
    \subfloat[Average number of measured landmarks when following the path shown in Fig.~\ref{fig:path_follow}(a)]
    {\includegraphics[trim = 0.1cm 0cm 1cm 0.2cm, clip=true, width=0.4\columnwidth]{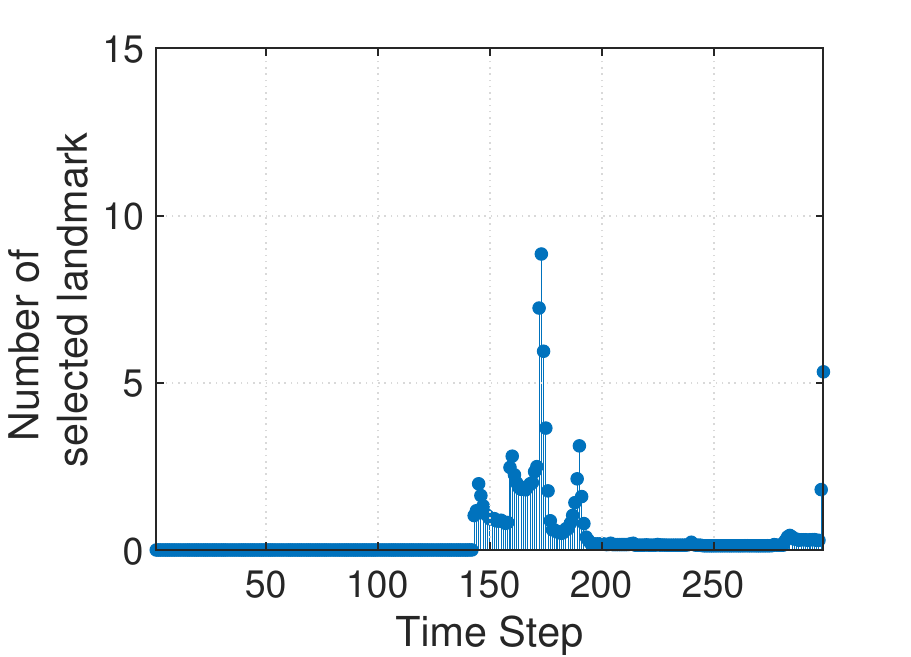}
    \label{fig:num_lm_al04}} \quad 
    \subfloat[Average number of measured landmarks when following the path shown in Fig.~\ref{fig:path_follow}(b)]
    {\includegraphics[trim = 0.1cm 0cm 1cm 0.2cm, clip=true, width=0.4\columnwidth]{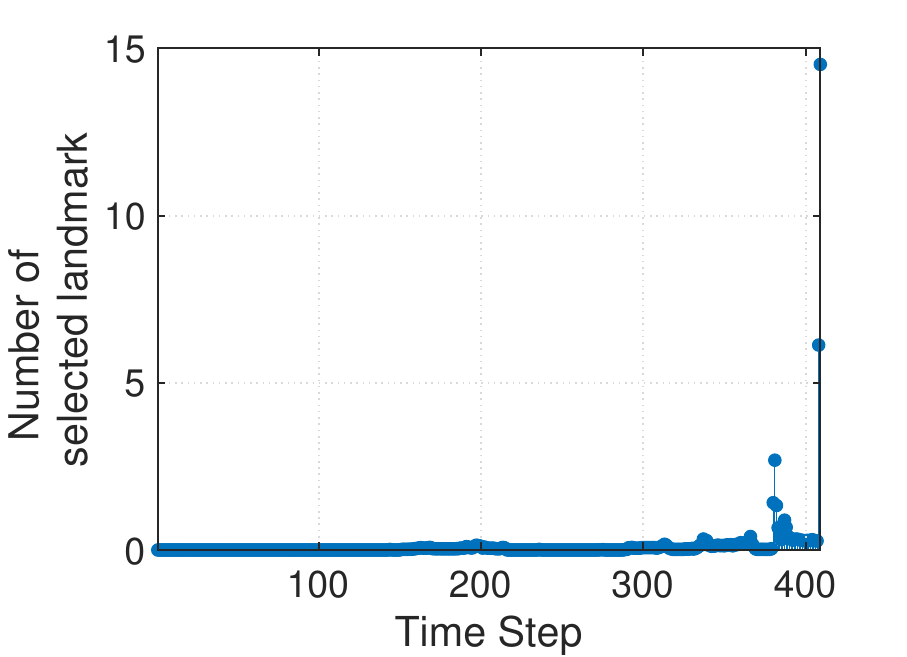}
    \label{fig:num_lm_al16}} 
    \caption{Simulation results in the path following and self-localization scenario. (a) and (b) show the path generated by Algorithm~\ref{algo:2} with $N=30,000$, $W = 10^{-3} I_2$ and $\alpha = 0.4, 1.6$, respectively. The $18$~obstacles, also utilized as landmarks in the self-localization during the path following, are shown in red rectangles. (c) and (d) illustrate the average number of measured landmarks in the self-localization at each time step for 1000 runs.}
    \label{fig:path_follow}
\end{figure}

The reference trajectories generated by Algorithm~\ref{algo:2} with $N = 30,000$, $W=10^{-3}I_2$, and $\alpha = 0.4, 1.6$ for ${\textup{Pr}}=90\%$ safety are depicted in Fig.~\ref{fig:path_follow}(a) and (b), respectively.
Similarly to the results in Section\,\ref{sec:simulation_alpha}, the smaller $\alpha$ forces the robot to shrink the covariance to go through the narrower region surrounded by obstacles shown in red rectangles.

We  assume that measurements are performed only when the covariance ellipse is not contained in the planned one. 
When measurements are needed, the robot is allowed to observe multiple landmarks within a single time step. Landmarks are selected greedily -- the one that reduces the determinant of the covariance matrix the most is selected one after another until the confidence ellipse is contained in the planned one. The robot follows the reference trajectory with the nominal speed of $0.1$\,m/s.
As the control and estimation schemes, an LQG controller and extended Kalman filter are employed to mitigate the deviation from the reference trajectory.

Figure~\ref{fig:path_follow}(c) and (d) show how many obstacles are measured at each time step of the path following averaged over 1,000 runs with the paths in Fig.~\ref{fig:path_follow}(a) and (b), respectively.
Fig.~\ref{fig:path_follow}(c) reveals that a robot measures many obstacles from time step $k\!=\!140$ to $k\!=\!190$, which corresponds with the period that the robot is required to follow a narrow path passing between obstacles.
Namely, a robot is forced to localize its own position accurately to avoid a collision with obstacles, resulting in navigation with many selected landmarks.
In the last few time steps, a robot conducts measurements again to obtain a smaller covariance that can fit in the goal region.
In contrast, Fig.~\ref{fig:path_follow}(d) illustrates the robot following the path with $\alpha \!=\! 1.6$ performs fewer measurements than the one with $\alpha=0.4$. 
%\vspace{-0.3cm}
\begin{figure}[t]
\vspace{-0.2cm}
\centering
\includegraphics[trim = 0cm 0cm 0cm 0cm, clip=true, width = 0.60\columnwidth]{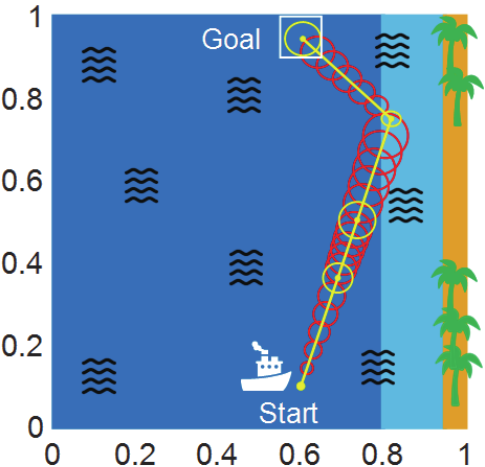}
\caption{Simulation results generated by Algorithm~\ref{algo:2} for $N=10,000$ nodes and $W=5\times 10^{-4} I_2$ for coastal navigation. Measurement noise in dark blue and light blue are $V=10^{-3} I_2$ and $V=10^{-5}I_2$, respectively. Confidence ellipses representing ${\textup{Pr}}=90 \%$ certainty regions.}
\label{fig:sens_cons}
\end{figure}

\vspace{-0.5cm}
\subsection{Coastal Navigation}
As discussed in Section~\ref{sec:sensor_constraint}, sensor constraints can be incorporated into the proposed algorithms. In this subsection, we demonstrate that the proposed algorithm with appropriate sensor constraints can reproduce a solution that is widely known as coastal navigation \cite{roy1999coastal}. Consider a ship that is able to localize itself more accurately when the coastline is observable and less accurately otherwise. A sample environment is depicted in Fig.~\ref{fig:sens_cons}, where the coastline is only observable in the light blue areas, not in the dark blue area. The dynamic and measurement models of the ship are assumed to be %of the form 
\begin{subequations}
\begin{align}
\bx_{k+1} &=\bx_{k}+u_k+ \bw_k, \quad \bw_k\sim \mathcal{N}(0,\|u_k\|W),\\
\by_{k} &= \bx_k+ \bv_k, \quad \bv_k \sim \mathcal{N}(0, V),
\end{align}
\end{subequations}
where the state $\bx$ is the 2-D position of the ship, the control input $u$ is its velocity, and $\by$ is the measured value. Here, $V= 10^{-3} I_2$ in the dark blue region and $V= 10^{-5} I_2$ in the light blue region. The simulations are carried out with $W = 5\times 10^{-4} I_2$, where the confidence ellipses correspond to ${\textup{Pr}}=90\%$ safety.

Fig.~\ref{fig:sens_cons} visualizes the result of the simulation using Algorithm~\ref{algo:1} with function $\textsc{\fontfamily{cmss}\selectfont FeasCheck2}$.
Since accurate location data is unavailable in the dark region, the direct move from the start point to the goal region (which entirely lies in the dark region) results in an unacceptably large covariance at the end. The result plotted in Fig.~\ref{fig:sens_cons} shows a feasible solution; the ship visits the light blue region to make a high-precision measurement shortly before moving toward the goal.
\begin{figure}[t]%\label{fig:exp_snap}
\vspace{-0.5cm}
    \centering
    \subfloat[$\alpha=0.2$.]
    {\includegraphics[trim = 0.1cm 1cm 1cm 0.3cm, clip=true, width=0.4\columnwidth]{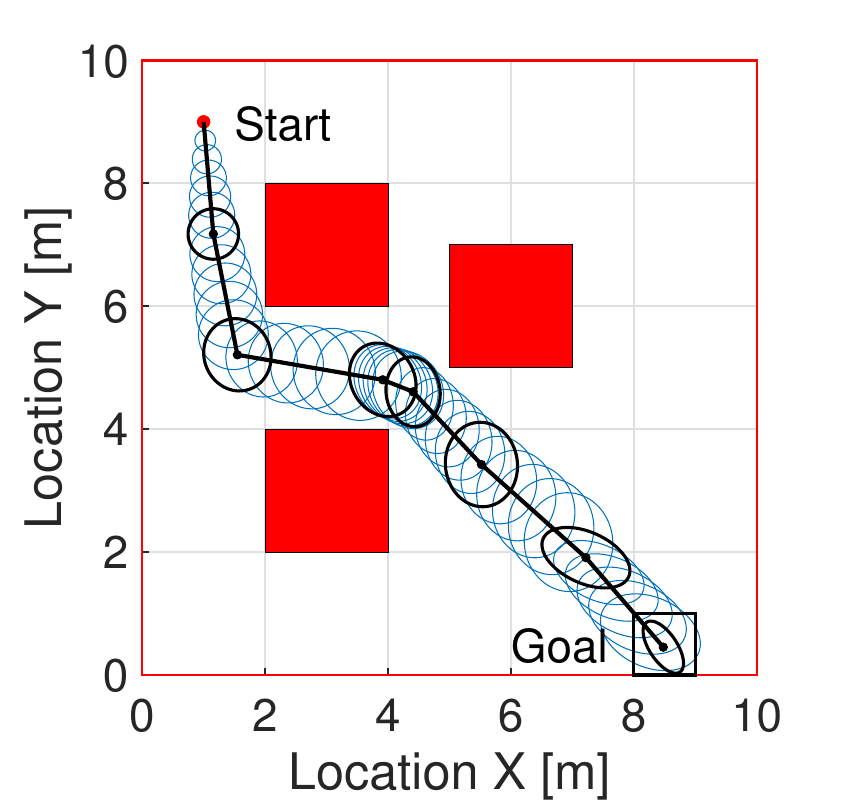}
    \label{fig:quad_traj_low}} \quad 
    \subfloat[ $\alpha=2.0$.]
    {\includegraphics[trim = 0.1cm 1cm 1cm 0.3cm, clip=true, width=0.4\columnwidth]{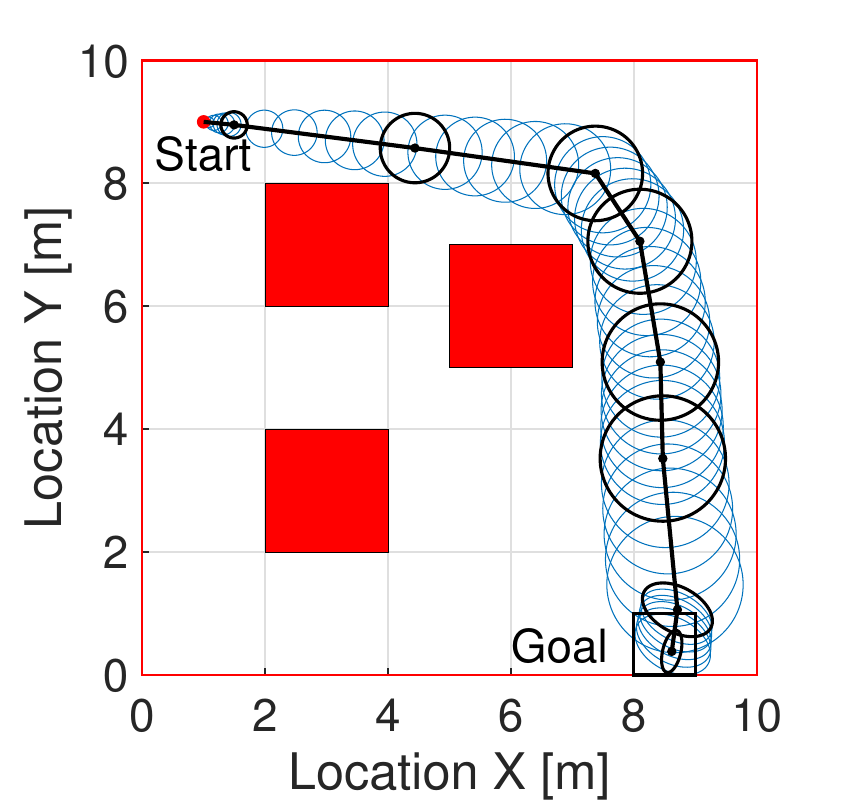}
    \label{fig:quad_traj_high}}
    \caption{ The path generated by Algorithm~\ref{algo:2} for $N=10,000$ and  $W= 0.1I_2$ in an obstacle-cluttered environment used for quadrotor simulation in Section~\ref{sec:quad}. Confidence ellipses representing ${\textup{Pr}}=90 \%$ certainty regions.}
    \label{fig:quad_traj}
\end{figure}
\begin{figure*}[t!]
    \centering
    \subfloat[ Horizontal position of CM (X-Y plane) of the quadrotor following the path generated by $\alpha=0.2$.]
    {\includegraphics[trim = 0.1cm 0cm 0.91cm 0.3cm, clip=true, width=0.55\columnwidth]{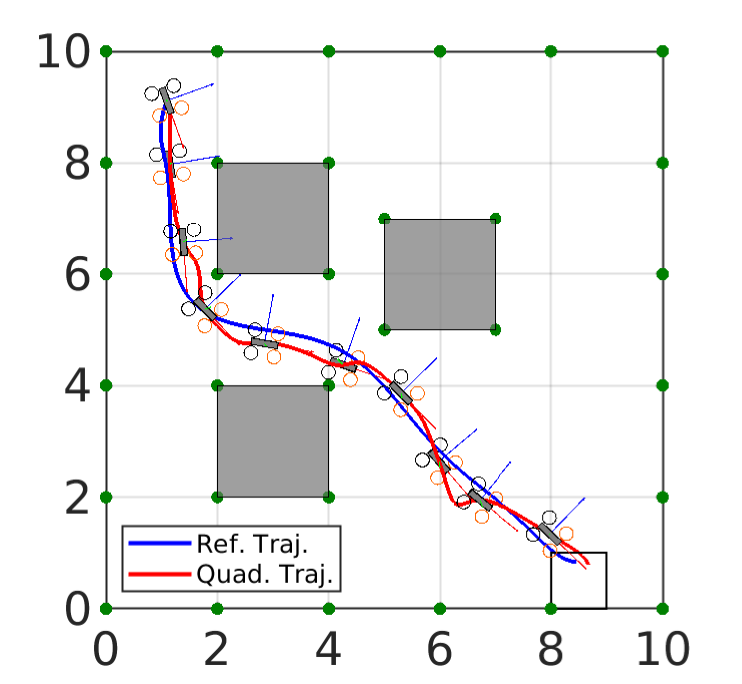}
    \label{fig:sample_XY_02}}  
    \quad
    \subfloat[ Horizontal position of CM (X-Y plane) of the quadrotor following the path generated by $\alpha=2.0$.]
    {\includegraphics[trim = 0.1cm 0cm 0.91cm 0.3cm, clip=true, width=0.55\columnwidth]{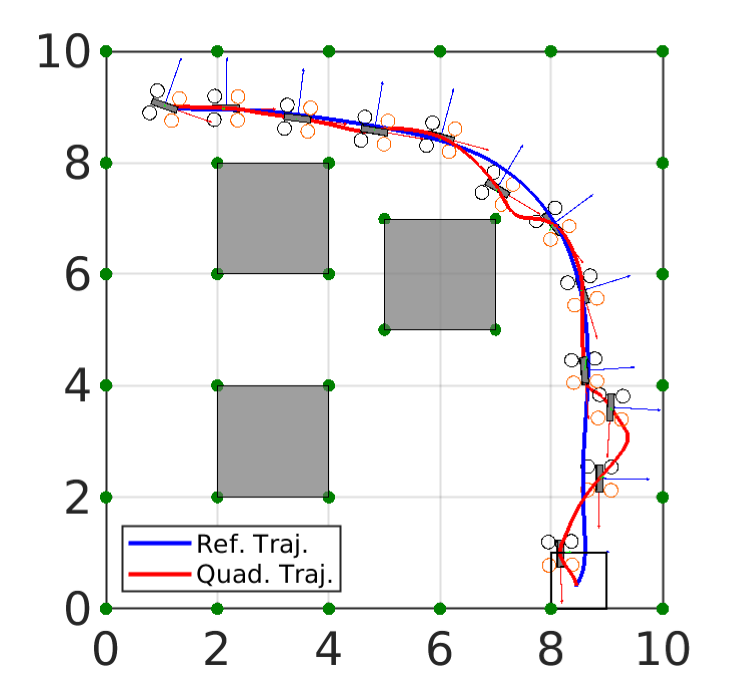}
    \label{fig:sample_XY_2}}
    \quad
    % %A snapshot of the 3D simulation environment in Gazebo.
    \subfloat[Snapshots of the quadrotor simulation conducted in the Gazebo simulator. The small green box shows the goal region.]
    {\raisebox{6ex} 
     {\includegraphics[trim = 0.0cm 0cm 0cm 0cm, clip=true, width=0.6\columnwidth]{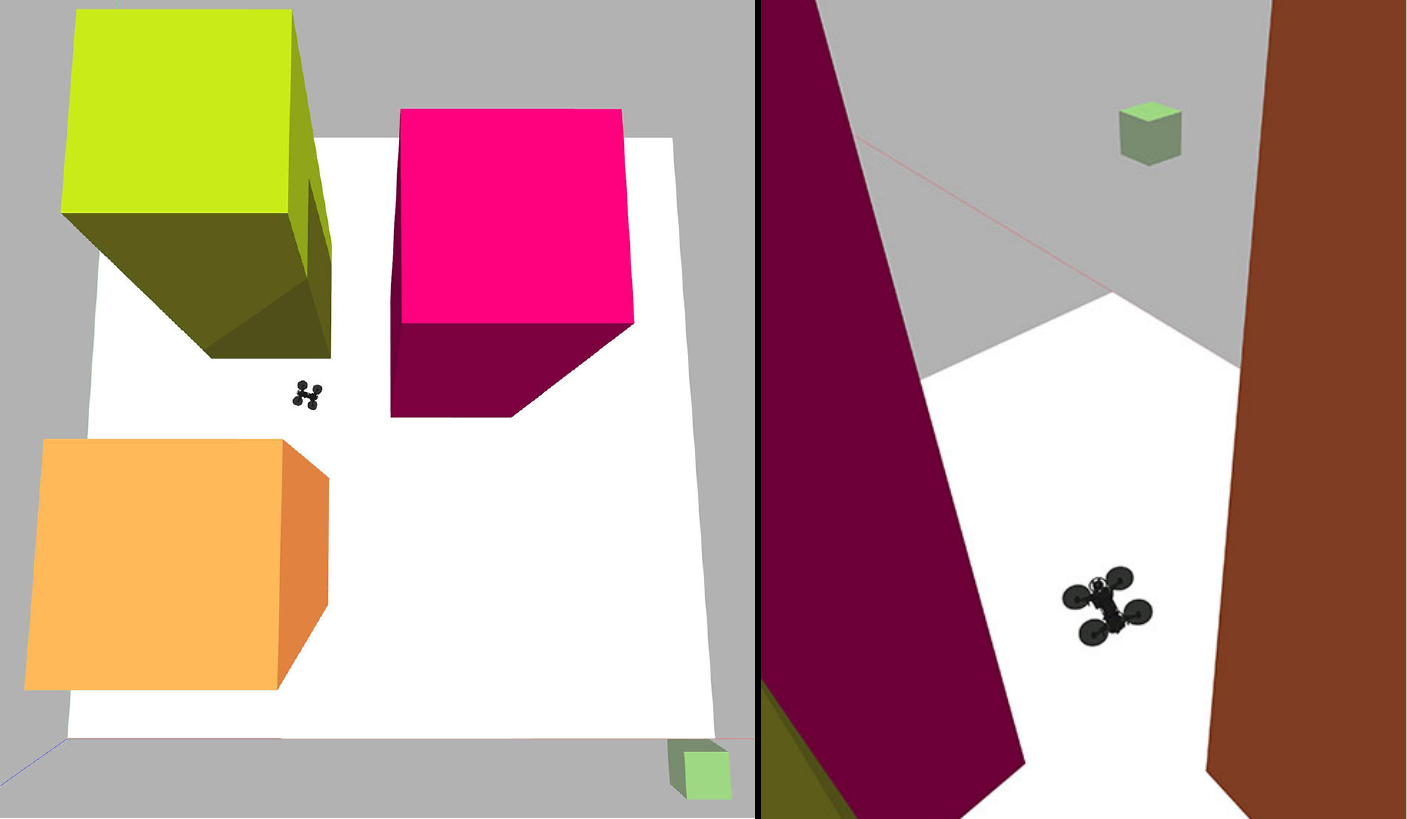}}
    \label{fig:sample_Zt}}
     \caption{ Smoothed reference trajectories (blue), sampled paths (red) of the quadrotor, and snapshots of the simulation in Gazebo.}
    \label{fig:quad_sample}
\end{figure*}
\vspace{-0.3cm}
\subsection{Quadrotor}
\label{sec:quad}
To further demonstrate the effectiveness of the proposed path planning strategy for reducing sensing costs, this subsection considers the problem of navigating a 6 DoF quadrotor. The dynamic models of the quadrotor, rotors' thrusts and response times, motor dynamics, and the aerodynamic effects are adopted from \cite{hoffmann2007quadrotor}. The quadrotor is equipped with an IMU accelerometer with navigation grade that measures acceleration and gyro rate. In addition, a 4K camera mounted on the quadrotor allows it to identify the 3-D bearing of the visual features captured in the camera frame. % with a $25$\,Hz sampling rate.
The unscented Kalman filter is deployed for state estimation. The source codes used for simulating the quadrotor, the IMU unit, the 4K camera, and the state estimation unit are accessible at \href{https://github.com/PeteLealiieeJ/EBSHQSim}{https://github.com/PeteLealiieeJ/EBSHQSim}.

The IMU suffers from a drift (i.e., the estimation error accumulates over time), and hence the quadrotor relies on camera data to successfully follow a path. However, due to the limited memory and computational resources, the quadrotor cannot process the captured image on board. This limitation makes the quadrotor send the visual data to the base station. Then, at the base station, the 3-D bearing information of the visual features is extracted from the transmitted image while utilizing the stored map information and sent back to the quadrotor. The quadrotor seeks to minimize the frequency of communication with the ground station without compromising safety. We demonstrate this objective can be achieved by the proposed framework. Fig.~\ref{fig:quad_traj} shows a $10\, \rm m \times 10\, \rm m$ environment with obstacles and the belief paths generated by Algorithm~\ref{algo:2} for $\alpha =0.2$ and $\alpha=2$, $W=0.1 I_2$, and ${\textup{Pr}}=90\%$ safety level.

To obtain a nominal reference, the paths generated by Algorithm~\ref{algo:2} are smoothed by fitting a 9-th degree polynomial to them. The quadrotor follows the reference with the nominal velocity of $1\,\rm{m/s}$ and it is commanded at $200\,\rm{Hz}$. The quadrotor we consider is an under-actuated system . In this simulation, the 3D position of the center of the mass (CM) and the yaw angle are controlled with the aid of a PID controller. The quadrotor communicates with the ground station if and only if the error covariance does not fully reside inside the confidence ellipse planned by Algorithm~\ref{algo:2}. Fig.~\ref{fig:quad_sample}\subref{fig:sample_XY_02} and \subref{fig:sample_XY_2} demonstrate the smoothed paths and sample trajectories, where the position of the visual features are shown by green circles. Fig.~\ref{fig:quad_sample}\subref{fig:sample_Zt} depicts snapshots of the simulation in Gazebo environment. A video of the simulation is uploaded to \href{https://youtube.com/shorts/Y8VbUfKAElU?feature=share}{https://youtube.com/shorts/Y8VbUfKAElU?feature=share} to provide further details. Fig.~\ref{fig:meas_dist} depicts the distribution of the required number of communications for 500 runs. It shows a significant drop (from approximately $230$ to $120$ on average) in communications during the navigation between $\alpha=2.0$ and $\alpha=0.2$. %Fig.~\ref{fig:landmark_dist} shows the total number of landmarks, whose 3-D bearing information is used and sent to the ground station for localization. This figure shows the required number of selected landmarks on path with $\alpha=0.2$ is roughly 3 times of the ones on path with $\alpha=2.0$.}
% \begin{figure}[t]
% \centering
% \includegraphics[trim = 0cm 0cm 0cm 0cm, clip=true, width =0.85 \columnwidth]{fig/quadrotor/meas_histogram.eps}
% \caption{Histogram of required number of communications with the ground station for the reference trajectories for $\alpha =0.2$ and $\alpha = 2.0$ shown in Fig.~\ref{fig:quad_traj}.}
% \label{fig:meas_dist}
% \end{figure}
\begin{figure}[t!]%\label{fig:exp_snap}
\vspace{-0.7cm}
    \centering
    \subfloat[$\alpha=0.2$]
    {\includegraphics[trim = 0.1cm 0cm 0.7cm 0.4cm, clip=true, width=0.45\columnwidth]{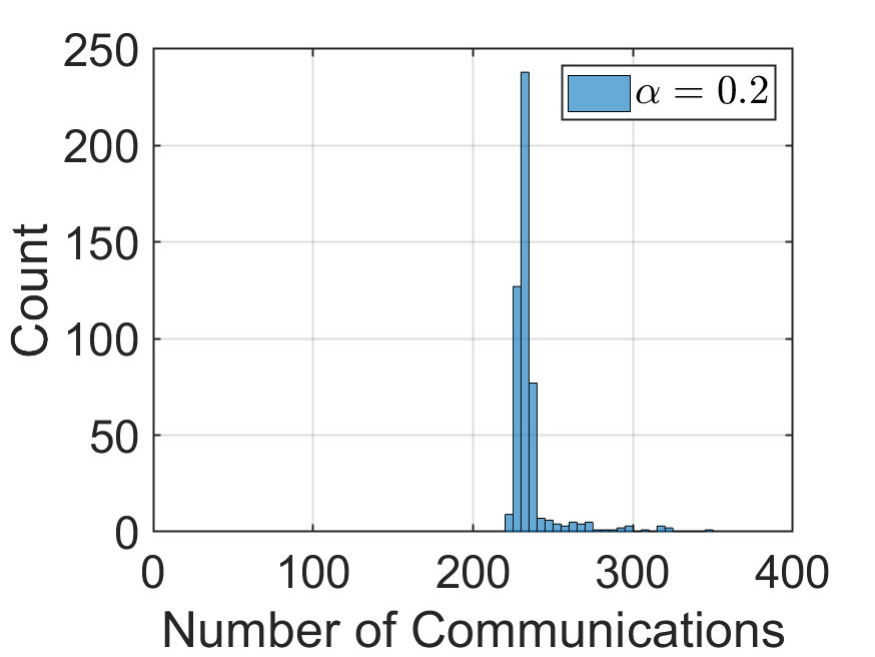}
    \label{fig:meas_dist_02}} \quad 
    \subfloat[$\alpha=2.0$]
    {\includegraphics[trim = 0.1cm 0cm 0.7cm 0.4cm, clip=true, width=0.45\columnwidth]{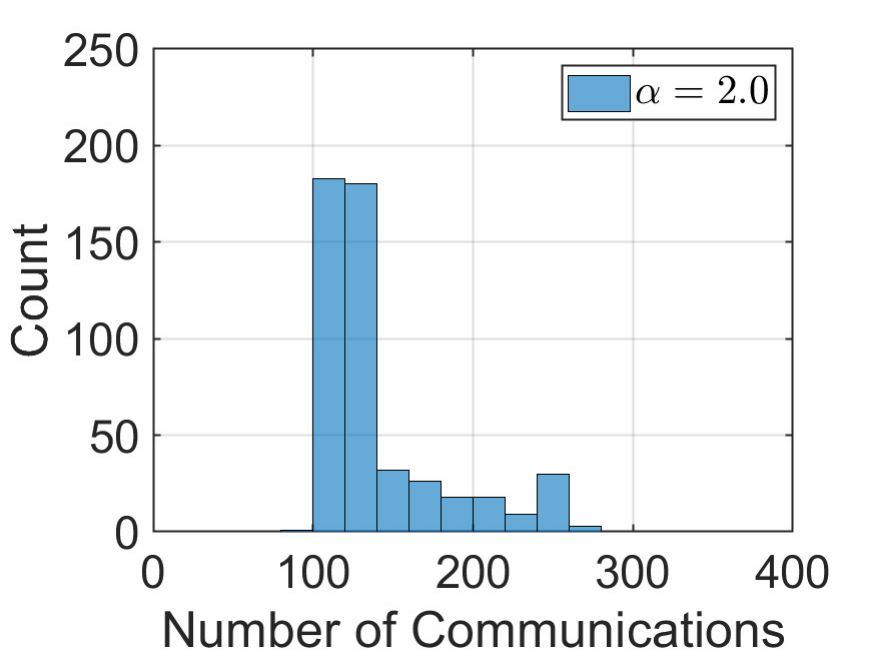}
    \label{fig:meas_dist_2}}
    \caption{Histograms of required number of communications with the ground station for the reference trajectories for $\alpha =0.2$ and $\alpha = 2.0$ shown in Fig.~\ref{fig:quad_traj}.}
    \label{fig:meas_dist}
\end{figure}
% \begin{figure}[t]
% \centering
% \includegraphics[trim = 0cm 0cm 0cm 0cm, clip=true, width = 0.65 \columnwidth]{fig/quadrotor/landmark_histogram.eps}
% \caption{\blue{Histogram of the number of selected landmarks for following the reference trajectories for $\alpha =0.2$ and $\alpha = 2.0$ shown in Fig.~\ref{fig:quad_traj}.}}
% \label{fig:landmark_dist}
% \end{figure}
%\vspace{-0.5cm}

% the sample environment adopted from \href{https://movingai.com/benchmarks/da2/lt_backalley_g.png}{https://movingai.com/benchmarks/da2/lt\_backalley\_g.png}.}
% \blue{
% These results imply other choices of information metrics like Wasserstein distance or Hellinger distance produce  similar results. The precise comparison between different choices of information cost is left for future work.
\begin{figure}[ht!]
\vspace{-0.7cm}
    \centering
    \subfloat[Entropy with $\alpha\!=\!0.05$.]
    {\includegraphics[trim = 0.3cm 0cm 1.20cm 0.30cm, clip=true, width=0.4\columnwidth]{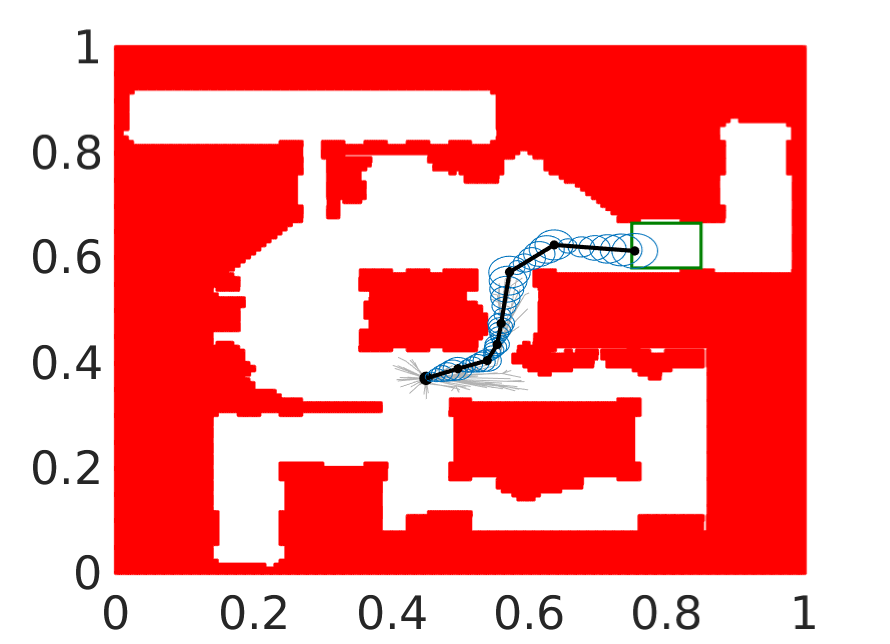}
    \label{fig:Ent_low_alpha}} \quad 
    \subfloat[Entropy with $\alpha\!=\!0.2$.]
    {\includegraphics[trim = 0.3cm 0cm 1.20cm 0.30cm, clip=true, width=0.4\columnwidth]{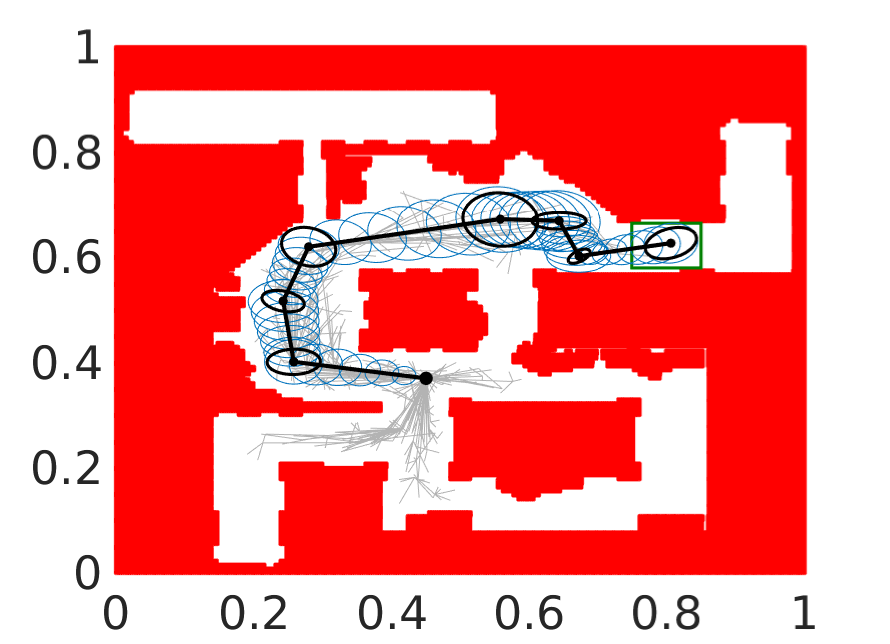}
    \label{fig:Ent_high_alpha}} \quad
    \\
    \subfloat[Wasserstein distance with $\alpha\!=\!200$.]
    {\includegraphics[trim = 0.3cm 0cm 1.20cm 0.30cm, clip=true, width=0.4\columnwidth]{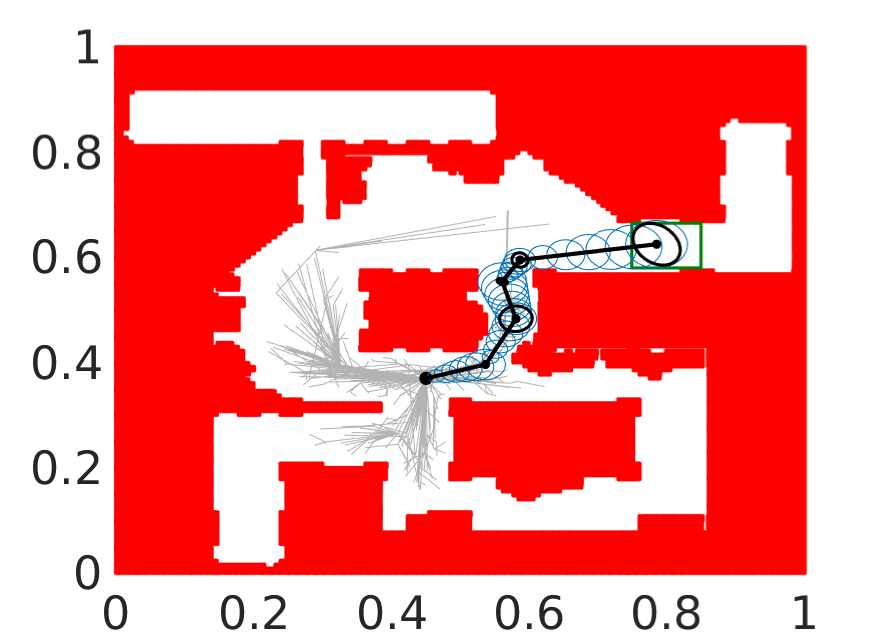}
    \label{fig:Wass_low_alpha}} \quad
    \subfloat[Wasserstein distance with $\alpha\!=
    \!2000$.]
    {\includegraphics[trim = 0.3cm 0cm 1.20cm 0.30cm, clip=true, width=0.4\columnwidth]{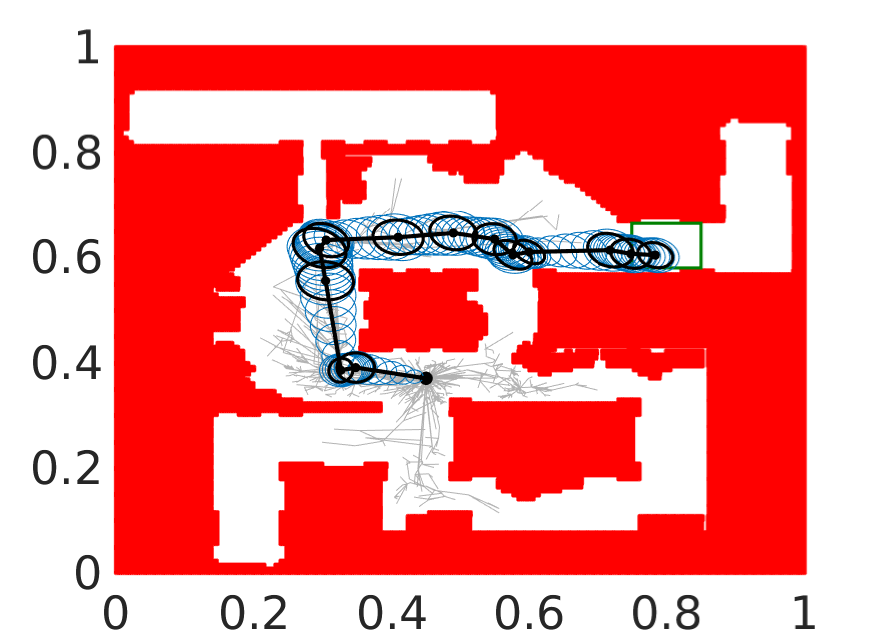}
    \label{fig:Hell_high_alpha}}\\
    \subfloat[Hellinger distance with $\alpha\!=\!0.3$.]
    {\includegraphics[trim = 0.1cm 0cm 1.20cm 0.30cm, clip=true, width=0.4\columnwidth]{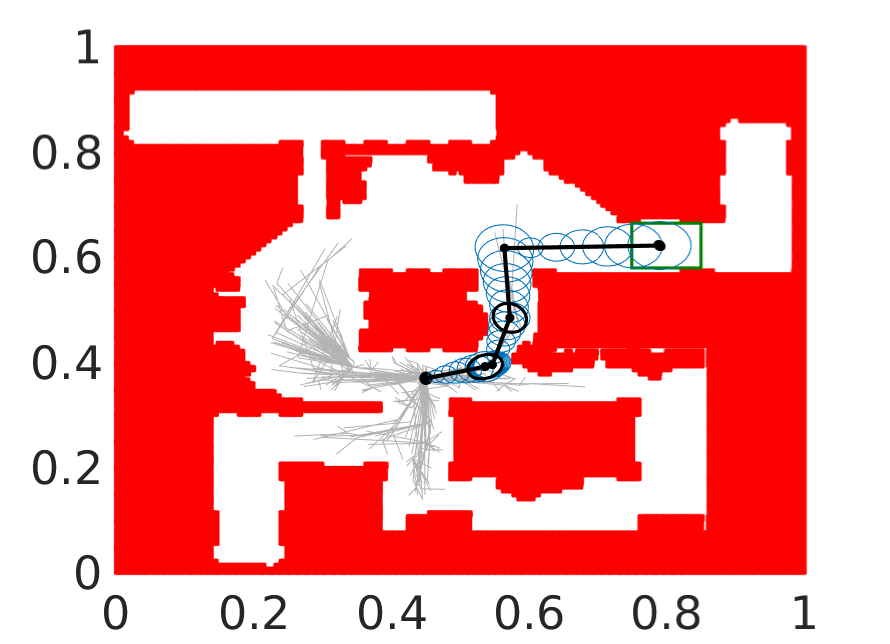}
    \label{fig:Hell_low_alpha}} \quad
    \subfloat[Hellinger distance with $\alpha\!=\!35$.]
    {\includegraphics[trim = 0.1cm 0cm 1.20cm 0.30cm, clip=true, width=0.4\columnwidth]{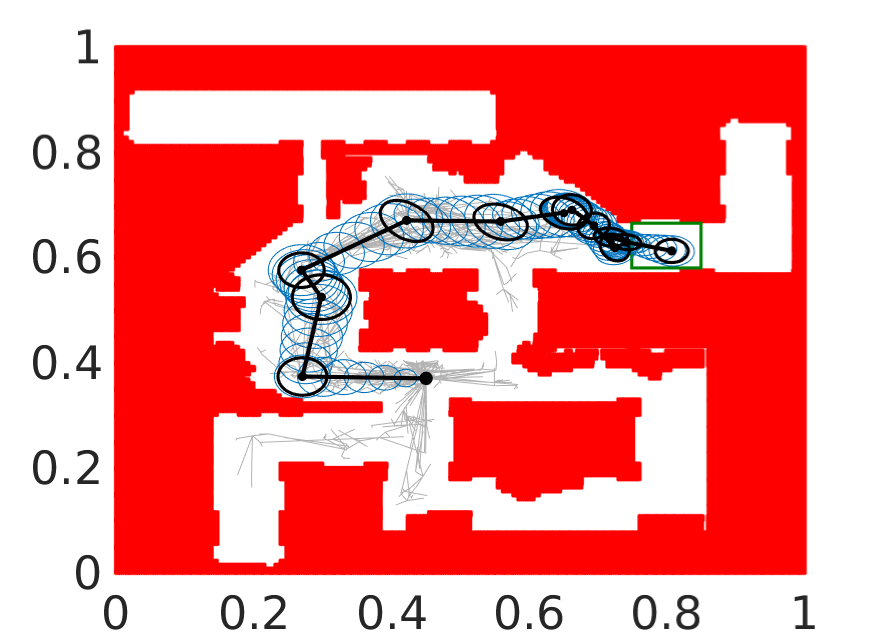}
    \label{fig:Wass_high_alpha}}
    \caption{
    The reference paths generated by different choices of information costs in Algorithm~\ref{algo:1} for different values of $\alpha$ with $N=1000$ and $W= 2.0\times 10^{-3} I_2$. Green rectangle shows the target region, and the ellipses represent $\textup{Pr}=90\%$ certainty regions.} 
\label{fig:metric_compare}
\end{figure}
\vspace{-0.3cm}
\subsection{Comparison between Different Information Metrics}
\label{subsec:compare_metric}
As discussed at the end of Subsection~\ref{subsec:metric}, the proposed methodology and algorithms are capable of incorporating different choices of information metrics. Fig.~\ref{fig:metric_compare} shows the results of using Algorithm~\ref{algo:1} for information cost defined as our proposed metric \eqref{eq:info_gain1}, Wasserstein distance \eqref{eq:info_wass}, and Hellinger distance \eqref{eq:info_Hell} and different values of $\alpha$, in ``It\_backalley\_g" map from DA2 maps
in \cite{sturtevant2012benchmarks}.
In all three cases, we observed that the parameter $\alpha$ plays similar roles, and in this regard we did not observe qualitative differences due to the choice of different metrics in this particular simulation environment. More extensive comparisons (both theoretical and experimental) between these metrics are left for future work.  
\vspace{-0.2cm}
\section{Conclusion and Future Work}
\label{sec:conclusion}
In this paper, we proposed an information-geometric method to generate a reference path that is traceable with moderate sensing costs by a mobile robot navigating through an obstacle-filled environment.
In Section~\ref{sec:prelim}, we introduced a novel distance metric on the Gaussian belief manifold which captures the cost of steering the belief state. Based on this distance concept, in Section~\ref{sec:formulation}, we formulated a shortest path problem that characterizes the desired belief path. In Section~\ref{sec:algorithm}, an RRT*-based algorithm is proposed to solve the shortest path problem. A few variations of the algorithm were also proposed to improve computational efficiency and to accommodate various scenarios.
The continuity of the path length function with respect to the topology of total variation was proved, which is expected to be a key step towards the proof of asymptotic optimality of the proposed RRT*-based algorithms.
Section~\ref{sec:simulation} presented simulation results that confirmed the effectiveness of the proposed planning strategy to mitigate sensing costs for self-navigating mobile robots in several practical scenarios. 

There are several directions to explore in the future:
\begin{itemize}
    \item Computational efficiency of the proposed algorithms can be improved further by incorporating the exiting methodologies, such as informed RRT* or k-d trees.
    \item Asymptotic optimality of Algorithm~\ref{algo:1} is conjectured by the continuity of the path length function (Theorem~\ref{theo:continuity2}) and should be investigated further.
    %\item \blue{The mathematical properties of alternative information cost like Wasserstein and Hellinger distances should be studied, and a detailed comparison between different choices should be examined by extensive simulations. }
    \item  More extensive comparisons (both theoretical and experimental) should be considered between different choices of the information cost function.
    \item Path ``smoothing'' algorithms similar to \cite{cubuktepe2020scalable} need to be developed to fine-tune the path obtained by  RRT*-based algorithms.
    \item Connections between information theory and sensing costs in practical contexts need to be further investigated. Although our simulation studies empirically confirmed that minimizing the information gain is an effective strategy to mitigate the expected sensing cost in some practical scenarios, theoretical reasoning for these results needs to be developed to fully understand the application domain where the proposed method yields practically useful results. 
\end{itemize}
\vspace{-0.6cm}
\appendices
\section{Explicit expression for \texorpdfstring{$\mathcal{D}_{\text{info}}(k)$}{D(k)}}
\label{ap:zero}
\begin{lemma}
\label{lemma:explicit}
Let $[U,\Sigma]$ be the eigen-decomposition of $P_{k+1}^{-1/2}\hat{P}_{k+1} P_{k+1}^{-1/2}$ i.e. $U \Sigma U^\top=P_{k+1}^{-1/2}\hat{P}_{k+1} P_{k+1}^{-{1}/{2}}$, where $\Sigma= \textup{diag} (\sigma_1, \dots, \sigma_n) \succeq 0$  and $U$ is unitary matrix.  Then, $Q^*= P_{k+1}^{{1}/{2}} U S^* U^\top  P_{k+1}^{{1}/{2}}$ is the optimal solution of \eqref{eq:d_info_general}, where $S^*:=\rm diag (\min \{1, \sigma_1\}, \dots, min\{1, \sigma_n\})$.
\end{lemma}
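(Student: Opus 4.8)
The plan is to convert the max-det program \eqref{eq:d_info_general0} into an equivalent \emph{diagonal} problem by two successive changes of variables and then to close it with Hadamard's inequality. Since the term $\tfrac12\log\det\hat{P}_{k+1}$ in \eqref{eq:d_info_general} does not depend on the decision variable, the problem is equivalent to maximizing $\log\det Q_{k+1}$ over the feasible set $\{Q_{k+1}\succeq 0:\ Q_{k+1}\preceq P_{k+1},\ Q_{k+1}\preceq\hat{P}_{k+1}\}$; recall that $P_{k+1},\hat{P}_{k+1}\succ 0$, so all the Loewner data are positive definite and the manipulations below are legitimate.

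First I would substitute $Q_{k+1}=P_{k+1}^{1/2}RP_{k+1}^{1/2}$. Because $P_{k+1}\succ 0$, this congruence is a bijection of the PSD cone onto itself, it shifts the objective only by the constant $\log\det P_{k+1}$ (so $\log\det Q_{k+1}=\log\det R+\log\det P_{k+1}$), and it sends the two constraints to $R\preceq I$ and $R\preceq P_{k+1}^{-1/2}\hat{P}_{k+1}P_{k+1}^{-1/2}=U\Sigma U^\top$. Next, writing $R=UTU^\top$ (again a bijection of the PSD cone, since $U$ is unitary), the problem becomes
\[
\max_{T\succeq 0}\ \log\det T\qquad\text{subject to}\qquad T\preceq I,\quad T\preceq\Sigma ,
\]
and it suffices to show $T^\star=S^\star$ is optimal here; undoing the two substitutions then yields $Q^\star=P_{k+1}^{1/2}US^\star U^\top P_{k+1}^{1/2}$, which is the claimed formula.

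For feasibility of $S^\star$, each entry satisfies $0\le\min\{1,\sigma_i\}\le 1$ and $\min\{1,\sigma_i\}\le\sigma_i$, hence $0\preceq S^\star\preceq I$ and $S^\star\preceq\Sigma$. For optimality, let $T$ be any feasible matrix; testing $T\preceq I$ and $T\preceq\Sigma$ against the standard basis vector $e_i$ gives $T_{ii}\le 1$ and $T_{ii}\le\sigma_i$, while $T\succeq 0$ gives $T_{ii}\ge 0$, so $0\le T_{ii}\le\min\{1,\sigma_i\}=S^\star_{ii}$. Hadamard's inequality for positive semidefinite matrices then gives $\det T\le\prod_i T_{ii}\le\prod_i S^\star_{ii}=\det S^\star$, hence $\log\det T\le\log\det S^\star$, which establishes optimality of $S^\star$ in the reduced problem.

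I do not expect a genuinely hard step here; the one thing to be careful about is the order of the two changes of variables. The two original constraints need not be simultaneously diagonalizable in any convenient fixed basis, but after the first congruence one of them is normalized to the identity, and then the eigenbasis $U$ of the remaining constraint diagonalizes both at once — which is exactly what makes the reduced problem separable over diagonal entries and solvable in closed form. As a sanity check one should also note that a lossless transition corresponds to $\Sigma\succeq I$, whence $S^\star=I$, $Q^\star=P_{k+1}$, and $\mathcal{D}_{\text{info}}$ collapses back to the entropy-reduction formula \eqref{eq:info_gain1}.
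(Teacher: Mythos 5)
Your proposal is correct and follows essentially the same route as the paper's proof: dropping the constant term, the congruence $Q_{k+1}=P_{k+1}^{1/2}RP_{k+1}^{1/2}$, and the unitary change of variables $R=UTU^\top$ reducing to the diagonal problem $\max\{\log\det T:\ 0\preceq T\preceq I,\ T\preceq\Sigma\}$. The only difference is that where the paper declares the optimality of $S^*$ in the reduced problem ``trivial to check,'' you supply the explicit argument via $T_{ii}\le\min\{1,\sigma_i\}$ and Hadamard's inequality, which is a welcome completion of that step.
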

\begin{proof}
The term $(1/2) \log\det \hat{P}_{k+1}$ is a constant in \eqref{eq:d_info_general} and thus $Q^*$, the optimal solution for \eqref{eq:d_info_general0},  can be computed as 
\begin{equation}
\label{eq:explicit_one}
\begin{split}
Q^*=\argmin_{Q_{k+1}\succeq 0} & \quad -(1/2)\log\det Q_{k+1} \\
\text{s.t. } &\quad Q_{k+1} \preceq P_{k+1}, \;\; Q_{k+1} \preceq \hat{P}_{k+1}.
\end{split}
\end{equation}
If we  define a new variable $  R_{k+1} := P_{k+1}^{-{1}/{2}} Q_{k+1} P_{k+1}^{-{1}/{2}}$, problem ~\eqref{eq:explicit_one} can be rewritten as
\begin{equation}
\label{eq:explicit_two}
\begin{split}
R^*=\argmin_{R_{k+1}\succeq 0} & \quad -(1/2)\log\det R_{k+1} \\
\text{s.t. } &\quad R_{k+1} \preceq I, \;\; R_{k+1} \preceq \bar{P}_{k+1},
\end{split}
\end{equation}
where $\bar{P}_{k+1} := P_{k+1}^{-{1}/{2}} \hat{P}_{k+1} P_{k+1}^{-{1}/{2}}$. Using eigen-decomposition, $\bar{P}_{k+1}$ can be written in the canonical form $\bar{P}_{k+1}= U \Sigma U^\top$.
%where   $\Sigma=\rm diag(\sigma_1, \dots, \sigma_n) \succeq 0$ and $U$ is a unitary matrix. 
By defining  $S_{k+1}:= U^\top R_{k+1} U$, problem~\eqref{eq:explicit_two} can be cast as 
\begin{equation}
\label{eq:explicit_three}
\begin{split}
S^*=\argmin_{S_{k+1}\succeq 0} & \quad -(1/2)\log\det S_{k+1} \\
\text{s.t. } &\quad S_{k+1} \preceq I, \;\; S_{k+1} \preceq  \Sigma = \rm diag (\sigma_1, \dots,  \sigma_n).
\end{split}
\end{equation}
It is easy to verify that $S^*=\rm diag(\min\{1, \sigma_1\}, \dots, min\{1, \sigma_n\})$ which completes the proof.
\end{proof}
\vspace{-0.5cm}
\section{Proof of triangle inequality}
\label{ap:B}
%\subsection{Proof Overview}
In what follows, we will establish the following chain of inequalities:
\begin{subequations}
\begin{align}
\nonumber
 &\mathcal{D}(b_1,b_{int})+\mathcal{D}(b_{int},b_{2})
 = \|x_{int}-x_1\|+\|x_2-x_{int}\|\\ \nonumber
         +& \frac{\alpha}{2} \left(\!\begin{array}{cc}
         &\!\!\!\!\!\!\!\!\min_{Q_{1}\succeq 0} \log \det (P_1+\|x_{int}-x_1\|W)-\log \det Q_{1}  \\
              & \text{s.t.} \quad  Q_{1}\preceq P_1+\|x_{int}-x_1\|W, \quad Q_{1} \preceq P_{int} 
         \end{array}\!\!\right)\\ \nonumber %\label{equ:a} 
+& \frac{\alpha}{2} \left(\!\begin{array}{cc}
              &\!\!\!\!\!\!\!\! \min_{Q_{2}\succeq 0} \log \det (P_{int}+\|x_{2}-x_{int}\|W)-\log \det Q_{2}  \\
              & \text{s.t.} \quad  Q_{2}\preceq P_{int}+\|x_2-x_{int}\|W, \quad Q_{2} \preceq P_{2} 
         \end{array}\!\!\!\right) \\ \nonumber
& \overset{(A)}{\geq}  \|x_2-x_1\| \\ \nonumber %\label{equ:b}
         +& \frac{\alpha}{2} \left(\!\begin{array}{cc}
              & \!\!\!\!\!\!\!\!\!\!\!\!\min_{Q\succeq 0} \log \det (P_1+(\|x_{int}-x_1\|+ \|x_2-x_{int}\|)W)\\
              &-\log \det Q \\
              & \!\!\!\!\!\!\!\text{s.t.} \quad  Q\preceq P_1+(\|x_{int}-x_1\|+ \|x_2-x_{int}\|)W, Q \preceq P_2
         \end{array}\!\!\!\right)\\ \nonumber
& \overset{(B)}{\geq}  \|x_2-x_1\| \\ \nonumber %\label{equ:c}
         +& \frac{\alpha}{2}\!\left(\!\begin{array}{cc}
              & \!\!\!\!\!\!\!\!\min_{Q\succeq 0} \log \det (P_1\!+\!(\|x_2-x_1\!\|W)\!-\!\log \det Q \\
              & \text{s.t.} \quad  Q\preceq P_1+\|x_2-x_1\!\|W, \quad Q \preceq P_2
         \end{array}\!\!\!\right)\!
         =\!\mathcal{D}(b_1, b_2)
\end{align}
We will prove the inequalities (B) and (A) in Appendix~\ref{subsec:2} and Appendix~\ref{subsec:3}, respectively.   
\end{subequations}
\vspace{-0.4cm}
\subsection{Proof of Inequality (B)}
\label{subsec:2}
Define a function $f: \mathbb{S}^d_{++} \rightarrow \mathbb{R}$ for  $Y \in \mathbb{S}^d_{++}$  as
\begin{equation}
\label{def:f_1}
     \begin{split}
         &f(X):= \min_{Q\succeq 0} \log \det X -\log \det Q\\
         & \quad \quad \quad \text{s.t.} \quad  Q\preceq X, \quad Q\preceq Y.
     \end{split}
 \end{equation}
 Equivalently, $f(X)$ can also be defined as
\begin{equation}
\label{def:f_2}
     \begin{split}
         & f(X):= \min_{R\succeq 0} \log \det X +\log \det R\\
         & \quad \quad \quad \text{s.t.} \quad  R\succeq X^{-1}, \quad R \succeq Y^{-1}.
     \end{split}
 \end{equation}
\vspace{-0.5cm} 
\begin{lemma}
\label{lem:app}
$f(X)$ is a monotone function meaning that if $0 \preceq X_1 \preceq X_2$, then $f(X_1) \leq f(X_2)$. 
\end{lemma}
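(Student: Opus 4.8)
The plan is to prove Lemma~\ref{lem:app} using the second formulation \eqref{def:f_2}, which turns out to be more convenient than \eqref{def:f_1}. Fix $Y \in \mathbb{S}^d_{++}$ and suppose $0 \preceq X_1 \preceq X_2$. Let $R_1^*$ be an optimal solution of \eqref{def:f_2} with $X = X_1$, so that $R_1^* \succeq X_1^{-1}$, $R_1^* \succeq Y^{-1}$, and $f(X_1) = \log\det X_1 + \log\det R_1^*$. First I would try to exhibit a feasible (not necessarily optimal) candidate for the problem with $X = X_2$ whose objective value is at most $f(X_1)$; since $f(X_2)$ is the minimum over all feasible $R$, this immediately gives $f(X_2) \leq f(X_1)$ --- but wait, that is the wrong direction. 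So instead I would go the other way: take $R_2^*$ optimal for $X = X_2$ and build a feasible point for the $X_1$ problem with no larger objective, yielding $f(X_1) \leq f(X_2)$.

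The natural candidate is $R := R_2^* + (X_1^{-1} - X_2^{-1})$. Since $X_1 \preceq X_2$ implies $X_1^{-1} \succeq X_2^{-1}$ (operator monotonicity of inversion), the added term is positive semidefinite, so $R \succeq R_2^* \succeq Y^{-1}$ and also $R \succeq R_2^*$; moreover $R \succeq (X_2^{-1}) + (X_1^{-1} - X_2^{-1}) = X_1^{-1}$, so $R$ is feasible for the $X_1$ problem. It remains to compare objectives: we need
\[
\log\det X_1 + \log\det R \;\leq\; \log\det X_2 + \log\det R_2^*,
\]
i.e.\ $\log\det R - \log\det R_2^* \leq \log\det X_2 - \log\det X_1$. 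The left side is $\log\det\bigl(R_2^* + (X_1^{-1} - X_2^{-1})\bigr) - \log\det R_2^*$.

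The main obstacle is this determinant comparison, and I expect to handle it via the concavity of $\log\det$ on $\mathbb{S}^d_{++}$, equivalently via the monotonicity of the map $Z \mapsto \det(Z + \Delta)/\det(Z)$. Concretely, concavity of $\log\det$ gives, for the gradient $\nabla \log\det Z = Z^{-1}$, the first-order bound $\log\det(Z+\Delta) - \log\det Z \leq \operatorname{Tr}(Z^{-1}\Delta)$ when $Z, Z+\Delta \succ 0$. Applying this with $Z = R_2^*$ and $\Delta = X_1^{-1} - X_2^{-1} \succeq 0$ yields
\[
\log\det R - \log\det R_2^* \;\leq\; \operatorname{Tr}\!\bigl((R_2^*)^{-1}(X_1^{-1} - X_2^{-1})\bigr),
\]
and since $R_2^* \succeq X_2^{-1}$ we have $(R_2^*)^{-1} \preceq X_2$, so the right side is at most $\operatorname{Tr}\!\bigl(X_2(X_1^{-1} - X_2^{-1})\bigr) = \operatorname{Tr}(X_2 X_1^{-1}) - d$. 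Finally, using concavity of $\log\det$ once more (or the inequality $\log\det(X_1^{-1}X_2) \geq d - \operatorname{Tr}(X_2^{-1}X_1)$ applied appropriately, together with $\operatorname{Tr}(X_2 X_1^{-1}) - d \geq \log\det(X_2 X_1^{-1}) = \log\det X_2 - \log\det X_1$ --- which follows from $\operatorname{Tr}(M) - d \geq \log\det M$ for $M = X_1^{-1/2} X_2 X_1^{-1/2} \succ 0$), we close the chain and obtain $f(X_1) \leq f(X_2)$. The one delicate point to double-check is that $R_2^*$ is indeed positive definite (not merely semidefinite) so that $(R_2^*)^{-1}$ exists; this holds because $R_2^* \succeq X_2^{-1} \succ 0$.
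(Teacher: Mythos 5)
Your setup is exactly the paper's: you work with the dual form \eqref{def:f_2}, take $R_2^*$ optimal for $X=X_2$, and transplant the same ``excess'' to build the candidate $R = R_2^* + (X_1^{-1}-X_2^{-1})$ for the $X_1$-problem (the paper writes this as $R' = X_1^{-1}+F$ with $F=R_2^*-X_2^{-1}$). Your feasibility argument is correct and matches the paper. The gap is in the objective comparison. You need $\log\det R - \log\det R_2^* \leq \log\det X_2 - \log\det X_1$, and your chain gives $\log\det R - \log\det R_2^* \leq \operatorname{Tr}\bigl((R_2^*)^{-1}\Delta\bigr) \leq \operatorname{Tr}(X_2X_1^{-1})-d$, after which you invoke $\operatorname{Tr}(X_2X_1^{-1})-d \geq \log\det X_2 - \log\det X_1$. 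That last inequality is true but points the \emph{wrong way}: you have bounded your quantity by something that sits \emph{above} the target (e.g.\ $d=1$, $X_1=1$, $X_2=2$ gives $\operatorname{Tr}(X_2X_1^{-1})-d = 1 > \log 2$), so the chain does not close. The first-order linearization $\log\det(Z+\Delta)-\log\det Z \leq \operatorname{Tr}(Z^{-1}\Delta)$ combined with $(R_2^*)^{-1}\preceq X_2$ is simply too lossy to recover the multiplicative bound you need, even though the target inequality itself is true.

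The fix is to avoid linearizing and use the monotonicity you mention only parenthetically: for $\Delta := X_1^{-1}-X_2^{-1} \succeq 0$, the map $Z \mapsto \log\det(Z+\Delta)-\log\det Z = \log\det\bigl(I+\Delta^{1/2}Z^{-1}\Delta^{1/2}\bigr)$ is nonincreasing in $Z$ (Loewner order), so $R_2^* \succeq X_2^{-1}$ gives
\begin{equation*}
\log\det(R_2^*+\Delta)-\log\det R_2^* \;\leq\; \log\det(X_2^{-1}+\Delta)-\log\det X_2^{-1} \;=\; \log\det X_2 - \log\det X_1,
\end{equation*}
which closes the argument. This is essentially the paper's route, phrased there via the identity $\log\det X + \log\det(X^{-1}+F) = \log\det\bigl(I+F^{1/2}XF^{1/2}\bigr)$ together with $X_1 \preceq X_2$ and determinant monotonicity. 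With that one step repaired, your proof coincides with the paper's.
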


\begin{proof}
Set $X=X_2$ and let $R^*$ be the minimizer of right-hand-side of (\ref{def:f_2}), i.e., 
$f(X_2)= \log \det X_2 +\log \det R^*$. Introducing $F:=R^*-X_2^{-1}$, we have $R^*=X_2^{-1}+F$ and $f(X_2)=\log \det X_2 +\log \det (X_2^{-1}+F)= \log \det(I+ F^{\frac{1}{2}}X_2F^{\frac{1}{2}})$.
On the other hand, set $X=X_1$  in (\ref{def:f_2}) then $R':= X_1^{-1}+F$ is a feasible point for (\ref{def:f_2}). Namely, $R'= X_1^{-1}+F \succeq X_1^{-1}$ and
\vspace{-0.2cm}
\begin{align*}
  R'& = X_1^{-1}+F
    = X_1^{-1}+R^*-X_2^{-1}
     \overset{(\text{I})}{\geq} R^* %\quad ( \text{since $X_1^{-1} \preceq X_2^{-1}$})\\
      \overset{(\text{II})}{\geq} Y^{-1}, %\ \  ( \text{ $R^*$ is a feasible point for (\ref{def:f_2}) for $X=X_1$}),
\end{align*}
where (I) follows from $X_1^{-1} \preceq X_2^{-1}$, and (II) holds since  $R^*$ is a feasible point for (\ref{def:f_2}) for $X=X_1$.
Therefore, $f(X_1) = \log \det X_1 +\log \det R^* 
= \log \det X_1 +\log \det (X_1^{-1}+F)=\log \det(I+ F^{1/2}X_1F^{1/2})
= \log \det(I+ F^{1/2}X_2F^{1/2}) \!=\! f(X_2)$.
% \begin{align*}
% f(X_1)= & \log \det X_1 +\log \det R^* 
% = \log \det X_1 +\log \det (X_1^{-1}+F)\\
% =& \log \det(I+ F^{\frac{1}{2}}X_1F^{\frac{1}{2}})
% = \log \det(I+ F^{\frac{1}{2}}X_2F^{\frac{1}{2}}) \!=\! f(X_2).
% \end{align*}
\end{proof}
\vspace{-0.1cm}
The inequality (B) is an application of Lemma~\ref{lem:app} with $X_1= P_1+\|x_2-x_1\|W$ and $X_2= P_1+(\|x_{int}-x_1\|+ \|x_2-x_{int}\|)W$, which clearly satisfy $0 \preceq X_1 \preceq X_2$.
\vspace{-0.5cm}
\subsection{Proof of Inequality (A)}
\label{subsec:3}
Let $P_1\succ 0$, $W_1\succeq 0$, $P_2\succ 0$, and $W_2\succeq 0$ be given matrix-valued constants. To complete the proof of (A), we consider
\begin{equation}
\label{def:F_12}
\min_{P_{int}\succeq 0} F_1(P_{int})+F_2(P_{int}),
\end{equation}
where 
\begin{equation}
\label{def:F_1}
     \begin{split}
         &F_1(P_{int}):= \min_{Q_1\succeq 0} \log \det (P_1+W_1) -\log \det Q_1\\
         & \quad \quad \quad \text{s.t.} \quad  Q_1\preceq P_1+W_1, \quad Q_1\preceq P_{int}.
     \end{split}
 \end{equation}
\begin{equation}
\label{def:F_2}
     \begin{split}
         &F_2(P_{int}):= \min_{Q_2\succeq 0} \log \det (P_{int}+W_2) -\log \det Q_2\\
         & \quad \quad \quad \text{s.t.} \quad  Q_2\preceq P_{int}+W_2, \quad Q_2\preceq P_2.
     \end{split}
 \end{equation}
We show optimality is attained by $P^*_{int}= P_1+W_1$. First, we show that the optimality is attained by $P^*_{int} \preceq P_1+W_1$.
\begin{proposition}
\label{prop_a}
There exists an optimal solution for (\ref{def:F_12}) that belongs to the set $\mathbb{P}^{ineq}_{int}:= \{P_{int} \succeq 0: P_{int} \preceq P_1+W_1\}$.
\end{proposition}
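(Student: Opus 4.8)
The plan is to show that any feasible $P_{int}\succeq 0$ for \eqref{def:F_12} can be replaced by a matrix lying in $\mathbb{P}^{ineq}_{int}$ without increasing the objective $F_1(P_{int})+F_2(P_{int})$, and then to invoke compactness of $\mathbb{P}^{ineq}_{int}$ to conclude that the minimum is actually attained there. Concretely, given $P_{int}$, let $Q_1^\star$ be the minimizer of the inner max-det program defining $F_1(P_{int})$ in \eqref{def:F_1} (existence follows from compactness of $\{Q_1: 0\preceq Q_1\preceq P_1+W_1,\ Q_1\preceq P_{int}\}$ and lower semicontinuity of $-\log\det$; an explicit formula is available from Lemma~\ref{lemma:explicit}). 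Set $\tilde P_{int}:=Q_1^\star$. From the constraints of \eqref{def:F_1} we immediately get $\tilde P_{int}\preceq P_1+W_1$, hence $\tilde P_{int}\in\mathbb{P}^{ineq}_{int}$, and also $\tilde P_{int}\preceq P_{int}$.

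Next I would verify two monotonicity facts: $F_1(\tilde P_{int})=F_1(P_{int})$ and $F_2(\tilde P_{int})\le F_2(P_{int})$. For the first, note that in the program defining $F_1(\tilde P_{int})$ the constraints on $Q_1$ are $Q_1\preceq P_1+W_1$ and $Q_1\preceq \tilde P_{int}=Q_1^\star$; since $Q_1^\star\preceq P_1+W_1$, the feasible set reduces to $\{0\preceq Q_1\preceq Q_1^\star\}$, on which $\log\det Q_1$ is maximized at $Q_1=Q_1^\star$ (because $Q_1\preceq Q_1^\star$ forces $\det Q_1\le\det Q_1^\star$). Therefore $F_1(\tilde P_{int})=\log\det(P_1+W_1)-\log\det Q_1^\star=F_1(P_{int})$. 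For the second, observe that $F_2(P_{int})=f(P_{int}+W_2)$ where $f$ is the function of Lemma~\ref{lem:app} with $Y=P_2$; since $\tilde P_{int}+W_2\preceq P_{int}+W_2$, monotonicity of $f$ gives $F_2(\tilde P_{int})\le F_2(P_{int})$. Combining, $F_1(\tilde P_{int})+F_2(\tilde P_{int})\le F_1(P_{int})+F_2(P_{int})$.

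It follows that the infimum of $F_1+F_2$ over $\{P_{int}\succeq 0\}$ equals its infimum over $\mathbb{P}^{ineq}_{int}=\{P_{int}: 0\preceq P_{int}\preceq P_1+W_1\}$. Since the latter set is compact and $F_1+F_2$ is lower semicontinuous on it, this infimum is attained by some $P^\star_{int}\in\mathbb{P}^{ineq}_{int}$, which is then an optimal solution of \eqref{def:F_12}, proving the proposition. I would also flag the points requiring care: Lemma~\ref{lem:app} is stated on $\mathbb{S}^d_{++}$, so one applies it after restricting to $P_{int}\succ 0$ (singular $P_{int}$ yield $F_1(P_{int})=+\infty$ and can be discarded, or the general case is recovered by continuity), and the genuinely substantive step is the exact equality $F_1(\tilde P_{int})=F_1(P_{int})$: it relies on taking the replacement to be precisely the optimal intersection covariance $Q_1^\star$, since $F_1$ is non-increasing in $P_{int}$ and a naive shrinking of $P_{int}$ would in general strictly increase $F_1$. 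I expect this choice of $\tilde P_{int}$, and correctly tracking the directions of monotonicity of $F_1$ and $F_2$, to be the only real obstacle; the remainder is bookkeeping.
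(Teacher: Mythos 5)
Your proof is correct and follows essentially the same route as the paper: replace $P_{int}$ by the optimal intersection covariance $Q_1^{\star}$ of the inner program defining $F_1$, note $Q_1^{\star}\in\mathbb{P}^{ineq}_{int}$ and $Q_1^{\star}\preceq P_{int}$, and invoke the monotonicity of $f$ (Lemma~\ref{lem:app}) to get $F_2(Q_1^{\star})\leq F_2(P_{int})$ while $F_1$ does not increase. Your extra compactness/lower-semicontinuity step to guarantee attainment, and your correct use of $\tilde{P}_{int}\preceq P_{int}$ (the paper's ``$P'_{int}\succeq P^*_{int}$'' is a typo), only make the argument more complete.
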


\begin{proof}
We show for any optimal solution candidate $P_{int}^* \succeq 0$, there exist an element $ P'_{int}\in \mathbb{P}^{ineq}_{int}$ such that
$F_1(P'_{int})+F_2(P'_{int}) \leq F_1(P_{int}^*)+F_2(P_{int}^*)$.
% \[F_1(P'_{int})+F_2(P'_{int}) \leq F_1(P_{int}^*)+F_2(P_{int}^*)\]
Set $P_{int}=P^*_{int}$ in (\ref{def:F_12}), and let $Q^*$ be the unique optimal solution, i.e., $F_1(P^*_{int})= \min_{Q_1\succeq 0} \log \det (P_1+W_1) -\log \det Q^*$. Take $P'_{int}=Q^*$ as a new solution candidate. Note that $Q^* \in \mathbb{P}^{ineq}_{int}$ since it is feasible solution for (\ref{def:F_12}) with $P_{int}=P^*_{int}$. Additionally, $Q_1=Q^*$ is a feasible a point for (\ref{def:F_12}) with $P_{int}=P'_{int}$. Therefore, $F_1(P'_{int}) \leq F_1(P^*_{int})$.
Moreover, since $P'_{int}\succeq P^*_{int}$, by Lemma~\ref{lem:app}, we have
$F_2(P'_{int}) \leq F_2(P^*_{int})$.
Finally, these two inequalities yield $F_1(P'_{int})+F_2(P'_{int}) \leq F_1(P_{int}^*)+F_2(P_{int}^*)$.
% \begin{equation}
% \label{ineq1}
% F_1(P'_{int}) \leq F_1(P^*_{int}). 
% \end{equation}
% Moreover, since $P'_{int}\succeq P^*_{int}$, by Lemma~\ref{lem:app}, we have
% \begin{equation}
% \label{ineq2}
% F_2(P'_{int}) \leq F_2(P^*_{int}). 
% \end{equation}
% Finally, (\ref{ineq1}) and (\ref{ineq2}) yield $F_1(P'_{int})+F_2(P'_{int}) \leq F_1(P_{int}^*)+F_2(P_{int}^*)$.
\end{proof}

\begin{proposition}
\label{prob_b}
$P'_{int}=P_1+W_1$ is the minimizer of (\ref{def:F_12}). 
% The optimality for (\ref{def:F_12}) is attained by $P'_{int}=P_1+W_1$.
\end{proposition}
\begin{proof}
Let $P^*_{int} \in \mathbb{P}^{ineq}_{int}$ (i.e., $P^*_{int} \preceq P_1+W_1$) be any solution candidate from Proposition~\ref{prop_a}. We will show that if we pick a new solution $P'_{int}:=P_1+W_1$, then 
\begin{equation}
\label{equ:appb_final}
 F_1(P'_{int})+F_2(P'_{int}) \leq F_1(P_{int}^*)+F_2(P_{int}^*). 
\end{equation}%Note that 
The constraints in (\ref{def:F_1}) for $P_{int} \!\preceq\!P^*_{init}\!\preceq\! P_1+W_1$ reduces to $Q_1 \preceq P^*_{init}  \preceq P_1+W_1$. Thus, $P^*_{int}$ is the unique minimizer and
\begin{align}
\nonumber
F_1(P^*_{int})&=\log \det (P_1+W_1) -\log \det P^*_{int}\\ \label{equ:appb_final_1}
&= \log \det (P'_{int}) -\log \det P^*_{int}.
\end{align}
% On the other hand, it is easy to verify that
% \begin{align}
% \label{equ:appb_final_2}
%     F_1(P'_{int})=0.
% \end{align}
Let $Q^*_2$ be the unique minimizer of (\ref{def:F_2}) for $P_{int}=P^*_{int}$. Then,
\begin{align}
\label{equ:appb_final_3}
    F_2(P^*_{int})&=\log \det (P^{*}_{int}+W_2) -\log \det Q_2^*
\end{align}
On the other hand, since $P^*_{int} \preceq P'_{int}$, $Q^*_2$ is a feasible point for (\ref{def:F_2}) with $P_{int}=P^*_{int}$. Hence,
\begin{align}
\label{equ:appb_final_4}
    F_2(P'_{init})\leq \log \det (P'_{int}+W_2) -\log \det Q_2^*.
\end{align}
\end{proof}
\vspace{-0.5cm}
Now, from  $F_1(P'_{int})=0$, (\ref{equ:appb_final_1}), %(\ref{equ:appb_final_2}),
(\ref{equ:appb_final_3}), and (\ref{equ:appb_final_4}), we have
\begin{align*}
    &F_1(P^*_{int})+F_2(P^*_{int})- F_1(P'_{int})-F_2(P'_{int})\\
    & \geq \log \det (P'_{int}) -\log \det P^*_{init} \\
    & \quad + \log \det (P^{*}_{int}+W_2) -\log \det Q_2^*\\
    & \quad -\log \det (P'_{int}+W_2) +\log \det Q_2^*\\
    & = \log \det (P^{*}_{int}+W_2)-\log \det P^*_{int}\\
    & \quad  -\log \det (P'_{int}+W_2) + \log \det (P'_{int})\\
    &= \log \det(I+W_2^{\frac{1}{2}} P^{*^{-1}}_{int}W_2^{\frac{1}{2}})- P'^{-1}_{int}W_2^{\frac{1}{2}}) \geq 0,
\end{align*}
since $P^{*^{-1}} \succeq P'^{-1}$. Therefore, (\ref{equ:appb_final}) holds. Inequality (A) is the application of Proposition~\ref{prob_b} with $W_1= \|x_{int}-x_1\|W$ and $W_2=\|x_2-x_{int}\|W$, and standard triangle inequality: $ \|x_2-x_1\|\leq \|x_{int}-x_1\|+\|x_2-x_{int}\|$. 
\vspace{-0.2cm}
\section{Proof of Theorem~\ref{theo:loss-lessmod}}
\label{ap:C}
We prove the existence by  constructing a collision-free lossless chain $\{b'_k\}_{k=0, 1, \dots, K-1}$ from the initial chain $\{b_k\}_{k=0, 1, \dots, K-1}$. The construction is performed in $K-1$ steps, where in the $k$-th step, the belief $b_{k}=(x_k,P_k)$ is shrunk to $b'_k=(x_k,P'_k)$, where $ P'_k \preceq P_k$ and the transition $b_{k-1}$ to $b'_k$ becomes lossless. More precisely, $P'_k$ is selected as the minimizer of  \eqref{eq:def_D} for computing $\mathcal{D}(b_{k-1},b_k)$, where from \eqref{eq:d_info_general1} we have $P'_k \preceq P_k$. The fact that  $P_k$ does not increase after performing a step automatically guarantees that the transitions $b_{k-1}\rightarrow b'_k$ and $b'_{k}\rightarrow b_{k+1}$ are collision-free. (They reside completely inside $b_{k-1}\rightarrow b_k$ and $b_k \rightarrow b_{k+1}$, respectively.) This means that the chain stays collision-free after each step, and in particular the final chain is collision-free.

Next, we show after step $k \in \{1, \dots, K-1\}$, the length of the chain does not increase. Note that at step $k$, the transitions do not change except the transitions to and from the $k$-th belief.
For transition to the $k$-th belief, it is trivial to see $\mathcal{D}(b_{k-1},b_k)=\mathcal{D}(b_{k-1},b'_k)$  as $P'_k$ is the minimizer of \eqref{eq:def_D}. For transition from the $k$-th belief,  we have $\mathcal{D}(b'_k, b_{k+1}) \leq \mathcal{D}(b'_k, b_{k}) + \mathcal{D}(b_k, b_{k+1})$ from the triangle inequality we showed in Theorem~\ref{theo:tria}. From $x'_k=x_k$ and $P'_k \preceq P_k$, it is easy to verify that $\mathcal{D}(b'_k, b_{k})=0$ which yields   $\mathcal{D}(b'_k, b_{k+1}) \leq  \mathcal{D}(b_k, b_{k+1})$. This relation leads to the conclusion that the length of the chain does not increase after step $k$, which completes the proof.
\vspace{-0.4cm}
\section{Proof of Theorem~\ref{theo:continuity2}}
\label{ap:A}
\subsection{Preparation}
\begin{lemma}
\label{lem1}
{For arbitrary $M\in \mathbb{S}^d$ and $N\in \mathbb{S}^d$, if $M \succeq \kappa I$ and  $N \succeq \kappa  I$, then $|\log\det M - \log\det N| \leq (d/\kappa)%\frac{d}{\kappa }
\bar{\sigma}(M-N)$.}
% Let $M$ and $N$ be $d \times d$ symmetric matrices such that $M \succeq \kappa I$ and  $N \succeq \kappa  I$. Then
% \[
% |\log\det M - \log\det N| \leq (d/\kappa)%\frac{d}{\kappa }
% \bar{\sigma}(M-N). 
% \]
\end{lemma}
\begin{proof}
Set $\mathcal{X}=\{X\in \mathbb{S}^d: X \succeq \kappa  I\}$ and define $f: \mathcal{X}\rightarrow \mathbb{R}$ by $f(X)=\log\det X$.
The directional derivative $\nabla_Y f(X)$ of $f(X)$ in the direction $Y$ is given by $\nabla_Y f(X)=\text{Tr}(X^{-1}Y)$.
Suppose $M, N \in \mathcal{X}$ and define $X(t)=tM+(1-t)N, \;\; t \in [0, 1]$.
Since $\nabla_{M-N} f(X(t))=\text{Tr}(X(t)^{-1}(M-N))$, by the mean value theorem, there exists $t\in [0, 1]$ such that $f(M)\!-\!f(N)\!=\!\nabla_{M-N} f(X(t)) \cdot (1-0)
=\text{Tr}(X(t)^{-1}(M-N))$.
% \begin{align*}
% f(M)\!-\!f(N)\!=\!\nabla_{M-N} f(X(t)) \cdot (1-0)
% =\text{Tr}(X(t)^{-1}(M-N)).
% \end{align*}
Now, $|f(M)-f(N)|\!=\!|\text{Tr}(X(t)^{-1}(M-N))|\!\leq\! \|X^{-1}(t)\|_F \|M-N\|_F \leq d  \bar{\sigma}(X^{-1}(t)) \bar{\sigma}(M-N) \leq d  \bar{\sigma}(\frac{1}{\kappa }I) \bar{\sigma}(M-N) =\frac{d}{\kappa } \bar{\sigma}(M-N)$.
% \begin{align*}
% |f(M)-f(N)|&= |\text{Tr}(X(t)^{-1}(M-N))| \\
% &\leq \|X^{-1}(t)\|_F \|M-N\|_F \\
% &\leq d  \bar{\sigma}(X^{-1}(t)) \bar{\sigma}(M-N) \\
% &\leq d  \bar{\sigma}(\frac{1}{\kappa }I) \bar{\sigma}(M-N) =\frac{d}{\kappa } \bar{\sigma}(M-N).
% \end{align*}
%This completes the proof.
\end{proof}

\begin{lemma}
\label{lem2}
Let $X, Y$ and $\Theta$ be symmetric matrices such that $0\preceq X \preceq \frac{1}{\epsilon}I$ and $0\preceq Y \preceq \frac{1}{\epsilon}I$. Then $\bar{\sigma}(X\Theta X-Y\Theta Y)\leq \frac{3\bar{\sigma}(\Theta)}{\epsilon}\bar{\sigma}(X-Y)$. 
% \[
% \bar{\sigma}(X\Theta X-Y\Theta Y)\leq \frac{3\bar{\sigma}(\Theta)}{\epsilon}\bar{\sigma}(X-Y).
% \]
\end{lemma}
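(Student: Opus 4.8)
The plan is to reduce $X\Theta X - Y\Theta Y$ to a sum of matrix products each of which carries an explicit factor $X-Y$, and then to bound every remaining factor using submultiplicativity of the spectral norm together with the hypotheses $0\preceq X\preceq\frac1\epsilon I$ and $0\preceq Y\preceq\frac1\epsilon I$.

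First I would record the two elementary bounds $\bar{\sigma}(X)\leq 1/\epsilon$ and $\bar{\sigma}(Y)\leq 1/\epsilon$. These hold because $X$ and $Y$ are symmetric and positive semidefinite, so their spectral norms equal their largest eigenvalues, and the Loewner bound $X\preceq\frac1\epsilon I$ forces $\lambda_{\max}(X)\leq 1/\epsilon$ (and similarly for $Y$). I would also invoke the standard facts $\bar{\sigma}(AB)\leq\bar{\sigma}(A)\bar{\sigma}(B)$ and $\bar{\sigma}(A+B)\leq\bar{\sigma}(A)+\bar{\sigma}(B)$.

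The key algebraic step is the ``add and subtract'' identity
\[
X\Theta X - Y\Theta Y = (X-Y)\,\Theta\, X + Y\,\Theta\,(X-Y).
\]
Applying the triangle inequality and submultiplicativity to the right-hand side, and then inserting $\bar{\sigma}(X),\bar{\sigma}(Y)\leq 1/\epsilon$, gives
\[
\bar{\sigma}(X\Theta X - Y\Theta Y)\leq \bar{\sigma}(X-Y)\,\bar{\sigma}(\Theta)\,\bar{\sigma}(X)+\bar{\sigma}(Y)\,\bar{\sigma}(\Theta)\,\bar{\sigma}(X-Y)\leq \frac{2\,\bar{\sigma}(\Theta)}{\epsilon}\,\bar{\sigma}(X-Y),
\]
which is in particular bounded by $\frac{3\,\bar{\sigma}(\Theta)}{\epsilon}\,\bar{\sigma}(X-Y)$, as claimed.

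This argument is essentially an exercise in submultiplicativity and there is no genuine obstacle. The only points that deserve a moment of care are (i) justifying $\bar{\sigma}(X)\leq 1/\epsilon$ from the operator-inequality hypothesis, which relies on $X$ being positive semidefinite so that $\bar{\sigma}(X)=\lambda_{\max}(X)$, and (ii) the bookkeeping of the numerical constant: the decomposition above already yields the sharper value $2$, so any splitting that avoids a quadratic cross term of the form $(X-Y)\Theta(X-Y)$ comfortably respects the stated bound $3$.
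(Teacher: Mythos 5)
Your proof is correct, and it takes a slightly different (and in fact sharper) decomposition than the paper. The paper expands around $Y$, writing $X\Theta X-Y\Theta Y=(X-Y)\Theta Y+Y\Theta(X-Y)+(X-Y)\Theta(X-Y)$, so it must also bound the quadratic cross term; this uses the extra observation $\bar{\sigma}(X-Y)\leq\frac{1}{\epsilon}$ (which follows from $0\preceq X,Y\preceq\frac{1}{\epsilon}I$) and yields the constant $3$ in the statement. Your telescoping identity $X\Theta X-Y\Theta Y=(X-Y)\Theta X+Y\Theta(X-Y)$ avoids the quadratic term entirely, needs only $\bar{\sigma}(X)\leq\frac{1}{\epsilon}$ and $\bar{\sigma}(Y)\leq\frac{1}{\epsilon}$, and gives the constant $2$, which trivially implies the stated bound with $3$. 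Both arguments are otherwise the same exercise in the triangle inequality and submultiplicativity of the spectral norm; yours buys a marginally cleaner proof and a better constant, while the paper's symmetric expansion costs one more term but changes nothing downstream, since the constant is absorbed into $L_\rho$ in the continuity argument.
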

\begin{proof}
By assumption, we have $\bar{\sigma}(X-Y)\leq \frac{1}{\epsilon}$. Notice that
\begin{align*}
X\Theta X-Y\Theta Y&=(Y+X-Y)\Theta (Y+X-Y) -Y\Theta Y \\
&=(X-Y)\Theta Y + Y \Theta (X-Y) + (X-Y) \Theta (X-Y).
\end{align*}
Thus, $\bar{\sigma}(X\Theta X\!-\!Y\Theta Y) \!\!\leq\!\! 2\bar{\sigma}(X-Y)\bar{\sigma}(\Theta)\bar{\sigma}(Y)+\bar{\sigma}(X-Y)^2 \bar{\sigma}(\Theta) \!\leq\! \frac{2}{\epsilon} \bar{\sigma}(X-Y)\bar{\sigma}(\Theta)+\frac{1}{\epsilon}\bar{\sigma}(X-Y)\bar{\sigma}(\Theta) \!=\!\frac{3\bar{\sigma}(\Theta)}{\epsilon} \bar{\sigma}(X-Y)$.
% \begin{align*}
% \bar{\sigma}(X\Theta X-Y\Theta Y)
% &\leq 2\bar{\sigma}(X-Y)\bar{\sigma}(\Theta)\bar{\sigma}(Y)+\bar{\sigma}(X-Y)^2 \bar{\sigma}(\Theta) \\
% &\leq \frac{2}{\epsilon} \bar{\sigma}(X-Y)\bar{\sigma}(\Theta)+\frac{1}{\epsilon} \bar{\sigma}(X-Y)\bar{\sigma}(\Theta) \\
% &=\frac{3\bar{\sigma}(\Theta)}{\epsilon} \bar{\sigma}(X-Y).
% \end{align*}
\end{proof}
\begin{lemma}
\label{lem3}
Let $X\in \mathbb{S}^d_{\epsilon}$ and $Y\in \mathbb{S}^d_{\epsilon}$ %be symmetric matrices and suppose $X\!\succeq\!\epsilon I$ and $Y\!\succeq  \!\epsilon I$ 
for some $\epsilon\!>\!0$, then $\bar{\sigma}(X^{-1}\!-\!Y^{-1})\leq \!\frac{1}{\epsilon^2} \bar{\sigma}(X\!-\!Y)$. 
% \[
% \bar{\sigma}(X^{-1}-Y^{-1})\leq \frac{1}{\epsilon^2} \bar{\sigma}(X-Y).
% \]
\end{lemma}
\begin{proof}
By assumption, we have $\bar{\sigma}(X^{-1})\leq \frac{1}{\epsilon}$ and $\bar{\sigma}(Y^{-1})\leq \frac{1}{\epsilon}$. Therefore, $\bar{\sigma}(X^{-1}-Y^{-1})=\bar{\sigma}(X^{-1}(Y-X)Y^{-1}) \leq \bar{\sigma}(X^{-1})\bar{\sigma}(Y^{-1})\bar{\sigma}(X-Y) \leq \frac{1}{\epsilon^2}\bar{\sigma}(X-Y)$.
% \begin{align*}
% \bar{\sigma}(X^{-1}-Y^{-1})&=\bar{\sigma}(X^{-1}(Y-X)Y^{-1}) \\
% &\leq \bar{\sigma}(X^{-1})\bar{\sigma}(Y^{-1})\bar{\sigma}(X-Y) \leq \frac{1}{\epsilon^2}\bar{\sigma}(X-Y).
% \end{align*}
\end{proof}
\vspace{-0.4cm}
\begin{lemma}
\label{lem4}
Let $X$ and $Y$ be symmetric matrices. Suppose $X \succeq \epsilon_1 I$ and $Y \succeq \epsilon_2 I$ hold for some $\epsilon_1>0$ and $\epsilon_2>0$. Then, $\bar{\sigma}(X^{\frac{1}{2}}-X^{\frac{1}{2}})\leq \frac{1}{\sqrt{\epsilon_1}+\sqrt{\epsilon_2}} \bar{\sigma}(X-Y)$.
% \[
% \bar{\sigma}(X^{\frac{1}{2}}-X^{\frac{1}{2}})\leq \frac{1}{\sqrt{\epsilon_1}+\sqrt{\epsilon_2}} \bar{\sigma}(X-Y).
% \]
\end{lemma}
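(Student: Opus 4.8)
The plan is to use the classical fact that the matrix square root is Lipschitz once its arguments are bounded below by a positive multiple of the identity, extracting the sharp constant from the lower bounds $\epsilon_1,\epsilon_2$. Throughout I read the left-hand side as $\bar{\sigma}(X^{\frac{1}{2}}-Y^{\frac{1}{2}})$. Write $A:=X^{\frac{1}{2}}$ and $B:=Y^{\frac{1}{2}}$ for the symmetric positive-definite square roots. First I would record that $X\succeq\epsilon_1 I$ forces every eigenvalue of $X$ to be at least $\epsilon_1$, hence every eigenvalue of $A$ to be at least $\sqrt{\epsilon_1}$, i.e.\ $A\succeq\sqrt{\epsilon_1}\,I$; similarly $B\succeq\sqrt{\epsilon_2}\,I$.

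Next, since $A-B$ is symmetric, there exist a unit vector $v$ and a real number $\lambda$ with $(A-B)v=\lambda v$ and $|\lambda|=\bar{\sigma}(A-B)$. The key step is the factorization
\[
X-Y=A^2-B^2=A(A-B)+(A-B)B,
\]
which, after pairing with $v$ and using the symmetry of $A$ and $B$ together with $(A-B)v=\lambda v$, gives
\[
\langle (X-Y)v,\,v\rangle=\langle (A-B)v,\,Av\rangle+\langle (A-B)v,\,Bv\rangle=\lambda\bigl(\langle Av,v\rangle+\langle Bv,v\rangle\bigr).
\]
Because $v$ is a unit vector, $\langle Av,v\rangle\geq\sqrt{\epsilon_1}$ and $\langle Bv,v\rangle\geq\sqrt{\epsilon_2}$, so taking absolute values yields $|\lambda|\bigl(\sqrt{\epsilon_1}+\sqrt{\epsilon_2}\bigr)\leq\bigl|\langle (X-Y)v,v\rangle\bigr|\leq\bar{\sigma}(X-Y)$, the last inequality being the variational characterization of $\bar{\sigma}$ for symmetric matrices (equivalently Cauchy--Schwarz). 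Since $|\lambda|=\bar{\sigma}(A-B)=\bar{\sigma}(X^{\frac{1}{2}}-Y^{\frac{1}{2}})$, dividing by $\sqrt{\epsilon_1}+\sqrt{\epsilon_2}>0$ is exactly the claim.

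I do not anticipate a genuine obstacle: the argument is a handful of lines. The only points deserving a sentence of care are (i) passing from $X\succeq\epsilon_1 I$ to $A\succeq\sqrt{\epsilon_1}\,I$, which is the spectral mapping theorem applied to $t\mapsto\sqrt{t}$ on the spectrum of $X$, and (ii) handling both signs of $\lambda$ uniformly, which the absolute-value bookkeeping above already does. This estimate is then meant to be used alongside Lemmas~\ref{lem1}--\ref{lem3} in Appendix~\ref{ap:A} to control the variation of the square-root factors entering the explicit formula for $\mathcal{D}_{\text{info}}$ (cf.\ Lemma~\ref{lemma:explicit}), which is where the continuity of $c(\gamma)$ is ultimately reduced to matrix-perturbation bounds.
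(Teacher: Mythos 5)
Your proof is correct, and your reading of the obvious typo in the statement (the left-hand side should be $\bar{\sigma}(X^{\frac{1}{2}}-Y^{\frac{1}{2}})$) matches the intended claim. Note, however, that the paper does not actually prove this lemma: its entire ``proof'' is a citation to Lemma~2.2 of Schmitt's 1992 perturbation paper. So your argument is necessarily a different route, namely a self-contained one. What you give is the standard elementary derivation: writing $A=X^{\frac{1}{2}}$, $B=Y^{\frac{1}{2}}$, the identity $X-Y=A(A-B)+(A-B)B$ exhibits $\Delta=A-B$ as the solution of a Sylvester equation, and because $\Delta$ is symmetric you can test against its extremal unit eigenvector $v$ (for which $|\lambda|=\bar{\sigma}(\Delta)$, valid precisely because the spectral norm of a symmetric matrix is its largest absolute eigenvalue) to get $|\lambda|\bigl(\langle Av,v\rangle+\langle Bv,v\rangle\bigr)=|\langle (X-Y)v,v\rangle|\leq\bar{\sigma}(X-Y)$, and then $\langle Av,v\rangle\geq\sqrt{\epsilon_1}$, $\langle Bv,v\rangle\geq\sqrt{\epsilon_2}$ via the spectral mapping step you flag. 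All steps check out, and the constant $(\sqrt{\epsilon_1}+\sqrt{\epsilon_2})^{-1}$ comes out exactly as in the cited result. The benefit of your version is that it keeps the paper self-contained and makes transparent where the lower bounds $\epsilon_1,\epsilon_2$ enter; the citation route buys brevity and also covers the more general (non-commuting, nonsymmetric-perturbation) settings treated by Schmitt, which are not needed here. One small correction to your closing remark: in Appendix~\ref{ap:A} the lemma is used to bound $\bar{\sigma}(\hat{P}_k^{\frac{1}{2}}-P_{k+1}^{\frac{1}{2}})$ in the continuity estimate for the lossless form of $\mathcal{D}$, with $\epsilon_1=\epsilon_2=\rho$; it is not tied to the explicit minimizer formula of Lemma~\ref{lemma:explicit}.
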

\begin{proof}
See \cite[Lemma 2.2]{schmitt1992perturbation}.
\end{proof}
\vspace{-0.5cm}
\subsection{Continuity of \texorpdfstring{$c(\gamma)$}{c(g)}  with respect to the topology of total variation}
Consider transitions from $(x_k, P_k)$ to $(x_{k+1}, P_{k+1})$ and from $(x'_k, P'_k)$ to $(x'_{k+1}, P'_{k+1})$.
Assume the following:
\begin{itemize}
\item There exists a positive constant $\rho$ such that $\rho I \preceq P_k,\; \rho I \preceq P_{k+1}, \;  \rho I \preceq P'_k, \  $, and $\rho I \preceq P'_{k+1}$.
% \[
% \rho I \preceq P_k,\quad  \rho I \preceq P_{k+1},\quad   \rho I \preceq P'_k, \quad  \rho I \preceq P'_{k+1}.
% \]
\item Perturbations $\Delta x_k:= x'_k-x_k$,  $\Delta x_{k+1}:= x'_{k+1}-x_{k+1}$, $\Delta P_k:= P'_k-P_k$,  $\Delta P_{k+1}:= P'_{k+1}-P_{k+1}$ are bounded by a constant $\delta < \frac{\rho}{4}$ as
\begin{align}
&\|\Delta x_{j}\|\leq \frac{\delta}{\bar{\sigma}(W)}, \; \bar{\sigma}(\Delta P_{j})\leq \delta, \; j\in\{k,k+1\} . \label{eq:deviation_delta}
\end{align}
% \begin{align}
% &\|\Delta x_{k}\|\leq \frac{\delta}{\bar{\sigma}(W)}, \quad \bar{\sigma}(\Delta P_{k})\leq \delta, \nonumber \\
% & \|\Delta x_{k+1}\|\leq \frac{\delta}{\bar{\sigma}(W)},   \quad \bar{\sigma}(\Delta P_{k+1})\leq \delta. \label{eq:deviation_delta}
% \end{align}
\item Transition from $(x_k, P_k)$ to $(x_{k+1}, P_{k+1})$ is lossless.
\item Transition from $(x'_k, P'_k)$ to $(x'_{k+1}, P'_{k+1})$ is lossless.
\end{itemize} 
Based on these assumptions, we have 
\begin{align}
&\Big|\mathcal{D}(x'_k, x'_{k+1}, P'_k, P'_{k+1})-\mathcal{D}(x_k, x_{k+1}, P_k, P_{k+1})\Big| \nonumber \\
&\leq \Big|\|x'_{k+1}-x'_k\|\bar{\sigma}(W) - \|x_{k+1}-x_k\|\bar{\sigma}(W) \nonumber \\
&\qquad  +\frac{1}{2}\log \det (P'_k+ \|x'_{k+1}-x'_k\|W) - \frac{1}{2}\log \det P'_{k+1} \nonumber\\
&\qquad  -\frac{1}{2}\log \det (P_k+ \|x_{k+1}-x_k\|W) + \frac{1}{2}\log \det P_{k+1} \Big| \nonumber\\
&\leq \big|  \|x'_{k+1}-x'_k\|- \|x_{k+1}-x_k\| \big| \bar{\sigma}(W) \nonumber\\
&\quad +\frac{1}{2}\Big| \log \det(P'_k+\|x'_{k+1}-x'_k\|W)- \log \det P'_{k+1} \nonumber \\
&\quad\qquad - \log \det(P_k+\|x_{k+1}-x_k\|W)+ \log \det P_{k+1} \Big| \label{eq:continuous1}
\end{align}
Using the triangle inequality $\big| \|x_{k+1}+\Delta x_{k+1}\!-\!x_k-\!\Delta x_k \| - \|x_{k+1}-x_k \| \big| \leq \| \Delta x_{k+1}-\Delta x_k \| \leq \frac{2\delta}{\bar{\sigma}(W)},$
% \begin{align}
% \big| \|x_{k+1}+\Delta x_{k+1}-x_k-\Delta x_k \| - \|x_{k+1}-x_k \| \big| &\leq \| \Delta x_{k+1}-\Delta x_k \| \nonumber \\ &\leq \frac{2\delta}{\bar{\sigma}(W)}, \label{eq:mu_bound}
% \end{align}
the first term of \eqref{eq:continuous1} can be upper bounded by $\|\Delta x_{k+1}-\Delta x_k\|\bar{\sigma}(W)$.
Writing $\hat{P}_k=P_k+\|x_{k+1}-x_k\|W$, the second term of \eqref{eq:continuous1} can be expressed as
\begin{subequations}
\begin{align}
&\frac{1}{2}\big|\log \det \left(\hat{P}_k+\Delta P_k+(\|x'_{k+1}-x'_k\|-\|x_{k+1}-x_k\|)W\right) \nonumber  \\
&\qquad -\log \det (P_{k+1}+\Delta P_{k+1}) -  \log \det  \hat{P}_k + \log \det  P_{k+1} \big| \nonumber \\
&=\frac{1}{2}\big| \log \det \big(I+\hat{P}_k^{-1/2}\Delta P_k \hat{P}_k^{-1/2}  \nonumber \\
&\hspace{10ex}+(\|x'_{k+1}-x'_k\|-\|x_{k+1}-x_k\|)\hat{P}_k^{-1/2}W\hat{P}_k^{-1/2}\big) \nonumber  \\
&\qquad  - \log \det (I+P_{k+1}^{-1/2}\Delta P_{k+1} P_{k+1}^{-1/2}) \big| \nonumber \\
&\leq \frac{1}{2}\cdot 4d \bar{\sigma}
\big( \hat{P}_k^{-1/2}\Delta P_k \hat{P}_k^{-1/2}-P_{k+1}^{-1/2}\Delta P_{k+1} P_{k+1}^{-1/2} \nonumber \\
&\hspace{8ex}+\left(\|x'_{k+1}-x'_k\|-\|x_{k+1}-x_k\|\right)\hat{P}_k^{-1/2}W\hat{P}_k^{-1/2} \big)  \label{eq:continuous2_a}\\
&\leq 2d \bar{\sigma}\big(\hat{P}_k^{-1/2}\Delta P_k \hat{P}_k^{-1/2}-P_{k+1}^{-1/2}\Delta P_{k+1} P_{k+1}^{-1/2}\big) \nonumber \\
&\qquad +2d \|\Delta x_{k+1}-\Delta x_k\|\bar{\sigma}(\hat{P}_k^{-1/2}W\hat{P}_k^{-1/2}) \nonumber \\
&\leq 2d \bar{\sigma}\big( \hat{P}_k^{-1/2}(\Delta P_k-\Delta P_{k+1}) \hat{P}_k^{-1/2} \nonumber \\
&\hspace{10ex}+\hat{P}_{k}^{-1/2}\Delta P_{k+1} \hat{P}_{k}^{-1/2}-P_{k+1}^{-1/2}\Delta P_{k+1} P_{k+1}^{-1/2}\big) \nonumber \\
&\qquad +2d \|\Delta x_{k+1}-\Delta x_k\|\bar{\sigma}(\hat{P}_k^{-1/2}W\hat{P}_k^{-1/2}) \nonumber \\
&\leq \frac{2d}{\rho}\|\Delta x_{k+1}-\Delta x_k\|\bar{\sigma}(W) + \frac{2d}{\rho}\bar{\sigma}(\Delta P_k-\Delta P_{k+1}) \nonumber \\
& \qquad  +2d \bar{\sigma}\big(\hat{P}_{k}^{-1/2}\Delta P_{k+1} \hat{P}_{k}^{-1/2}-P_{k+1}^{-1/2}\Delta P_{k+1} P_{k+1}^{-1/2}\big).  \label{eq:continuous2_b}
\end{align}
\end{subequations}
To see \eqref{eq:continuous2_a}, notice the following inequalities hold from \eqref{eq:deviation_delta}
 %and \eqref{eq:mu_bound}:
a) $\bar{\sigma}(\hat{P}_k^{-1/2}\Delta P_k \hat{P}_k^{-1/2})
\leq \bar{\sigma}(\hat{P}_k^{-1/2})^2 \bar{\sigma}(\Delta P_k) \leq \frac{\delta}{\rho} < \frac{1}{4}$, b) $\bar{\sigma}(P_{k+1}^{-1/2}\Delta P_{k+1} P_{k+1}^{-1/2})
\leq \bar{\sigma}(P_{k+1}^{-1/2})^2 \bar{\sigma}(\Delta P_{k+1}) \leq \frac{\delta}{\rho} < \frac{1}{4}$, and c) $\bar{\sigma}\big((\|x'_{k+1}-x'_k\|-\|x_{k+1}-x_k\|) \hat{P}_k^{-1/2} W \hat{P}_k^{-1/2}\big) \leq \| \Delta x_{k+1}-\Delta x_k \| \bar{\sigma}(P_{k+1}^{-1/2})^2 \bar{\sigma}(W) \leq \frac{2\delta}{\rho}<\frac{1}{2}$. 
% \begin{align*}
% &\bar{\sigma}(\hat{P}_k^{-\frac{1}{2}}\Delta P_k \hat{P}_k^{-\frac{1}{2}})
% \leq \bar{\sigma}(\hat{P}_k^{-\frac{1}{2}})^2 \bar{\sigma}(\Delta P_k) \leq \frac{\delta}{\rho} < \frac{1}{4}\\
% &\bar{\sigma}(P_{k+1}^{-\frac{1}{2}}\Delta P_{k+1} P_{k+1}^{-\frac{1}{2}})
% \leq \bar{\sigma}(P_{k+1}^{-\frac{1}{2}})^2 \bar{\sigma}(\Delta P_{k+1}) \leq \frac{\delta}{\rho} < \frac{1}{4}\\
% &\bar{\sigma}\left((\|x'_{k+1}-x'_k\|-\|x_{k+1}-x_k\|) \hat{P}_k^{-\frac{1}{2}} W \hat{P}_k^{-\frac{1}{2}}\right) \\
% &\qquad \leq \| \Delta x_{k+1}-\Delta x_k \| \bar{\sigma}(P_{k+1}^{-\frac{1}{2}})^2 \bar{\sigma}(W) \leq \frac{2\delta}{\rho}<\frac{1}{2}.
% \end{align*}
Therefore, we have $I\!+\!\hat{P}_k^{-1/2}\Delta P_k \hat{P}_k^{-1/2} \!+(\|x'_{k+1}\!-\!x'_k\|\!-\!\|x_{k+1}\!-\!x_k\|) \hat{P}_k^{-1/2} W \hat{P}_k^{-1/2} \succeq \frac{1}{4}I$ and $I+P_{k+1}^{-1/2}\Delta P_{k+1} P_{k+1}^{-1/2} \succeq \frac{1}{4}I$,
% \begin{align*}
% I\!+\!\hat{P}_k^{-\frac{1}{2}}\Delta P_k \hat{P}_k^{-\frac{1}{2}} \!+(\|x'_{k+1}\!-\!x'_k\|\!-\!\|x_{k+1}\!-\!x_k\|) \hat{P}_k^{-\frac{1}{2}} W \hat{P}_k^{-\frac{1}{2}}& \succeq \frac{1}{4}I  \\
% I+P_{k+1}^{-\frac{1}{2}}\Delta P_{k+1} P_{k+1}^{-\frac{1}{2}} &\succeq \frac{1}{4}I
% \end{align*}
and thus Lemma~\ref{lem1} with $\kappa=\frac{1}{4}$ is applicable.
The last term in \eqref{eq:continuous2_b} is upper bounded as 
%follows:
%\vspace{-0.5cm}
\begin{subequations}
\begin{align}
&2d  \bar{\sigma}\big( \hat{P}_k^{-1/2}\Delta P_{k+1} \hat{P}_k^{-1/2} - P_{k+1}^{-1/2} \Delta P_{k+1} P_{k+1}^{-1/2}\big) \nonumber \\
&\leq 2d  \cdot \frac{3\bar{\sigma}(\Delta P_{k+1})}{\sqrt{\rho}}
\bar{\sigma}\big(\hat{P}_k^{-1/2}-P_{k+1}^{-1/2} \big) \label{eq:continuous3_a} \\
&\leq 2d \cdot \frac{3\delta}{\sqrt{\rho}}\cdot \frac{1}{\rho}\bar{\sigma}\big(  \hat{P}_k^{1/2}-P_{k+1}^{1/2} \big)  \label{eq:continuous3_b} \\
&\leq 2d \cdot \frac{3\delta}{\sqrt{\rho}}\cdot \frac{1}{\rho} \cdot \frac{1}{2\sqrt{\rho}}\bar{\sigma}\left(  \hat{P}_k-P_{k+1} \right) \label{eq:continuous3_c}  \\
&= \frac{3\delta d }{\rho^2}\bar{\sigma}\left(P_k-P_{k+1}+\|x_{k+1}-x_k\|W\right) \nonumber \\
&\leq \frac{3\delta d}{\rho^2} \big\{ \bar{\sigma}(P_k-P_{k+1})+\|x_{k+1}-x_k\|\bar{\sigma}(W) \big\}. \nonumber
\end{align}
\end{subequations}
Lemmas \ref{lem2}, \ref{lem3} and \ref{lem4} were used in steps \eqref{eq:continuous3_a}, \eqref{eq:continuous3_b} and \eqref{eq:continuous3_c}.
Combining the results so far, we obtain an upper bound for $\big|\mathcal{D}(x'_k, x'_{k+1}, P'_k, P'_{k+1})-\mathcal{D}(x_k, x_{k+1}, P_k, P_{k+1})\big|$ as follows:
\begin{align}
&\big|\mathcal{D}(x'_k, x'_{k+1}, P'_k, P'_{k+1})-\mathcal{D}(x_k, x_{k+1}, P_k, P_{k+1}) \big| \nonumber \\
& \leq (1+\frac{2n}{\rho})\|\Delta x_{k+1}-\Delta x_k\|\bar{\sigma}(W)
+\frac{2n}{\rho}\bar{\sigma}(\Delta P_{k+1}-\Delta P_k)  \nonumber \\
& \qquad +\frac{3\delta n }{\rho^2} \big\{\bar{\sigma}(P_{k+1}-P_k) +\|x_{k+1}-x_k\|\bar{\sigma}(W) \big\}. \label{continuous4}
\end{align}
The result in this subsection is summarized as follows:
\begin{lemma}
\label{lem:main2}
Let $\delta$ and $\rho$ be any constants satisfying $0< \delta <\frac{\rho}{4}$.
Suppose that $(x_k, P_k)$, $(x_{k+1}, P_{k+1})$,  $(x'_k, P'_k)=(x_k+\Delta x_k, P_k+\Delta P_k)$ and $(x'_{k+1}, P'_{k+1})=(x_{k+1}+\Delta x_{k+1}, P_{k+1}+\Delta P_{k+1})$ are points in 
%the uncertain configuration space 
$\mathbb{R}^d\times \mathbb{S}_\rho^d$.
If the transitions from $(x_k, P_k)$ to $(x_{k+1}, P_{k+1})$ and from $(x'_k, P'_k)$ to $(x'_{k+1}, P'_{k+1})$ are both lossless, then there exists a positive constant $L_\rho$ such that 
\begin{align*}
&|\mathcal{D}(x'_k, x'_{k+1}, P'_k, P'_{k+1})-\mathcal{D}(x_k, x_{k+1}, P_k, P_{k+1})| \\
& \leq L_\rho \big[ \|\Delta x_{k+1}-\Delta x_k\|\bar{\sigma}(W) + \bar{\sigma}(\Delta P_{k+1}-\Delta P_k) \\
&\qquad + \delta \big\{ \|x_{k+1}-x_k\|\bar{\sigma}(W)+\bar{\sigma}(P_{k+1}-P_k)  \big\}
\big].
\end{align*}
\end{lemma}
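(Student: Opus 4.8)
The plan is to use the losslessness hypothesis to reduce $\mathcal{D}$ to an explicit log-determinant expression and then push the four perturbations $\Delta x_k,\Delta x_{k+1},\Delta P_k,\Delta P_{k+1}$ through that expression with the matrix-perturbation estimates of Lemmas~\ref{lem1}--\ref{lem4}. Losslessness is essential: by \eqref{eq:info_gain1} it lets us bypass the min-det program \eqref{eq:d_info_general0} entirely and write, with $\hat{P}_k=P_k+\|x_{k+1}-x_k\|W$,
\[
\mathcal{D}(x_k,x_{k+1},P_k,P_{k+1})=\|x_{k+1}-x_k\|\bar{\sigma}(W)+\tfrac{1}{2}\log\det\hat{P}_k-\tfrac{1}{2}\log\det P_{k+1},
\]
and similarly for the primed data. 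First I would split $|\mathcal{D}'-\mathcal{D}|$ by the triangle inequality into a travel term and a log-determinant term. The travel term is immediate from the reverse triangle inequality $\big|\,\|x'_{k+1}-x'_k\|-\|x_{k+1}-x_k\|\,\big|\le\|\Delta x_{k+1}-\Delta x_k\|$, which already accounts for the first summand of the claimed bound.

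For the log-determinant term I would factor out $\hat{P}_k$ and $P_{k+1}$, rewriting each perturbed determinant as $\log\det\big(I+\hat{P}_k^{-1/2}(\cdot)\hat{P}_k^{-1/2}\big)$ (resp.\ with $P_{k+1}$), where the sandwiched perturbation is $\Delta P_k+(\|x'_{k+1}-x'_k\|-\|x_{k+1}-x_k\|)W$ for the prior and $\Delta P_{k+1}$ for the posterior. The standing assumptions $\delta<\rho/4$ and \eqref{eq:deviation_delta} guarantee that all these sandwiched terms have spectral norm below $\tfrac12$, so the two arguments of the logarithms dominate $\tfrac14 I$; Lemma~\ref{lem1} with $\kappa=\tfrac14$ then linearizes the difference into a single $\bar{\sigma}$ of a matrix. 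The part of that matrix coming from the travel change contributes another $\|\Delta x_{k+1}-\Delta x_k\|$ term (via $\bar{\sigma}(\hat{P}_k^{-1/2}W\hat{P}_k^{-1/2})\le\bar{\sigma}(W)/\rho$), and after adding and subtracting $\hat{P}_k^{-1/2}\Delta P_{k+1}\hat{P}_k^{-1/2}$ the covariance part splits into the clean term $\hat{P}_k^{-1/2}(\Delta P_k-\Delta P_{k+1})\hat{P}_k^{-1/2}$, which produces the $\bar{\sigma}(\Delta P_{k+1}-\Delta P_k)$ summand, plus a residual cross term.

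The cross term $\hat{P}_k^{-1/2}\Delta P_{k+1}\hat{P}_k^{-1/2}-P_{k+1}^{-1/2}\Delta P_{k+1}P_{k+1}^{-1/2}$ is where the real work lies, and I expect it to be the main obstacle. It sandwiches one and the same matrix $\Delta P_{k+1}$ by two \emph{different} factors, so it has the form $X\Theta X-Y\Theta Y$ and Lemma~\ref{lem2} bounds it by $\tfrac{3\bar{\sigma}(\Delta P_{k+1})}{\sqrt{\rho}}\,\bar{\sigma}(\hat{P}_k^{-1/2}-P_{k+1}^{-1/2})$. The delicate point is that $\bar{\sigma}(\Delta P_{k+1})\le\delta$ is a genuine perturbation quantity, but $\bar{\sigma}(\hat{P}_k^{-1/2}-P_{k+1}^{-1/2})$ is \emph{not}: it is comparable to the edge length $\hat{P}_k-P_{k+1}$ itself. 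This is exactly what forces the final $\delta\cdot(\text{edge length})$ structure of the bound, and the bookkeeping must retain precisely one factor of $\delta$ and one of the edge length. To expose that structure I would chain Lemma~\ref{lem3} (passing from inverse square roots to square roots) and then Lemma~\ref{lem4} (passing from square roots to the matrices), converting $\bar{\sigma}(\hat{P}_k^{-1/2}-P_{k+1}^{-1/2})$ into $\tfrac{1}{2\rho^{3/2}}\bar{\sigma}(\hat{P}_k-P_{k+1})$. Finally the identity $\hat{P}_k-P_{k+1}=P_k-P_{k+1}+\|x_{k+1}-x_k\|W$ gives $\bar{\sigma}(\hat{P}_k-P_{k+1})\le\bar{\sigma}(P_{k+1}-P_k)+\|x_{k+1}-x_k\|\bar{\sigma}(W)$, which are exactly the bracketed edge-length terms. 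Collecting the travel, clean-covariance, and cross-term contributions and absorbing all $\rho$- and $d$-dependent constants (the largest being of order $d/\rho^{2}$) into a single $L_\rho$ completes the estimate.
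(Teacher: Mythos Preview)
Your proposal is correct and follows essentially the same approach as the paper: both use losslessness to reduce $\mathcal{D}$ to the explicit log-det form, split off the travel term via the reverse triangle inequality, factor to $\log\det(I+\cdot)$ and invoke Lemma~\ref{lem1} with $\kappa=\tfrac14$, add and subtract $\hat{P}_k^{-1/2}\Delta P_{k+1}\hat{P}_k^{-1/2}$, and handle the cross term by chaining Lemmas~\ref{lem2}, \ref{lem3}, \ref{lem4} before expanding $\hat{P}_k-P_{k+1}$. The paper's explicit constant is $L_\rho=\max\{1+2d/\rho,\,3d/\rho^{2}\}$, consistent with your ``of order $d/\rho^{2}$'' remark.
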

\begin{proof}
The result follows from \eqref{continuous4} by setting $L_\rho=\max \left\{1+\frac{2d}{\rho},  \frac{3d}{\rho^2} \right\}$.
\end{proof}
\vspace{-0.6cm}
\subsection{Proof of Theorem~\ref{theo:continuity2}}
We prove that the choice $\delta = \frac{\epsilon}{L_\rho (1+|\gamma|_{\text{TV}})}$ suffices, where $L_\rho$ is defined in Lemma~\ref{lem:main2}.
Suppose $\gamma(t)=(x(t), P(t))$, $\gamma'(t)=(x'(t), P'(t))$ and both $\gamma$ and $\gamma'$ are finitely lossless with respect to a partition $\mathcal{P}_0$.
Let $\mathcal{P}=(0=t_0<t_1<\cdots < t_K=1)$ be any partition such that $\mathcal{P}\supseteq \mathcal{P}_0$. Then the following chain of inequalities holds:
\begin{subequations}
\label{eq:chain}
\begin{align}
&\big|c(\gamma'; \mathcal{P})-c(\gamma; \mathcal{P})\big| \nonumber \\
&\leq \sum_{k=0}^{K-1}\big|\mathcal{D}(x'(t_k), x'(t_{k+1}), P'(t_k), P'(t_{k+1})) \nonumber \\
&\qquad \qquad -\mathcal{D}(x(t_k), x(t_{k+1}), P(t_k), P(t_{k+1})) \big| \\
&\leq L_\rho  \sum_{k=0}^{K-1} \Big[
\|x'(t_{k+1})-x(t_{k+1})-x'(t_k)+x(t_k)\|\bar{\sigma}(W) \nonumber \\
&\quad +
\bar{\sigma} \big(P'(t_{k+1})-P(t_{k+1})-P'(t_k)+P(t_k)\big) \nonumber \\
&\quad + \delta \big\{ \|x(t_{k+1})\!-\!x(t_k)\| \bar{\sigma}(W)+\bar{\sigma}\big(P(t_{k+1})-P(t_k)\big) \big\}
\Big] \label{eq:15d} \\ \nonumber
&= L_\rho \big( V(\gamma'-\gamma; \mathcal{P}) + \delta V(\gamma; \mathcal{P})\big) \leq L_\rho \big(|\gamma'-\gamma|_{\text{TV}} + \delta |\gamma|_{\text{TV}} \big)\\ \nonumber
&\leq L_\rho (1 + |\gamma|_{\text{TV}}) \delta =\epsilon
\end{align}
\end{subequations}
The inequality \eqref{eq:15d} follows from Lemma~\ref{lem:main2}, noticing that both $\gamma$ and $\gamma'$ are finitely lossless with respect to $\mathcal{P}$.

Let $\{\mathcal{P}_i\}_{i\in\mathbb{N}}$ and $\{\mathcal{P}'_i\}_{i\in\mathbb{N}}$ be sequences of partitions such that $\mathcal{P}_i \supseteq \mathcal{P}_0$ and $\mathcal{P}'_i \supseteq \mathcal{P}_0$ for each  $i\in\mathbb{N}$, and
\begin{equation}
\label{eq:p_sequence}
\lim_{i \rightarrow \infty} c(\gamma; \mathcal{P}_i) =c(\gamma), \;\;
\lim_{i \rightarrow \infty} c(\gamma'; \mathcal{P}'_i) =c(\gamma').
\end{equation}
Let $\{\mathcal{P}''_i\}_{i\in\mathbb{N}}$ be the sequence of partitions such that for each $i\in\mathbb{N}$, $\mathcal{P}''_i$ is a common refinement of $\mathcal{P}_i$ and $\mathcal{P}'_i$. Since
$c(\gamma;\mathcal{P}_i) \leq c(\gamma;\mathcal{P}''_i)  \leq c(\gamma)$ and 
$c(\gamma';\mathcal{P}'_i) \leq c(\gamma';\mathcal{P}''_i)  \leq c(\gamma')$
hold for each $i\in\mathbb{N}$, \eqref{eq:p_sequence} implies $\lim_{i \rightarrow \infty} c(\gamma; \mathcal{P}''_i) =c(\gamma)$ and $\lim_{i \rightarrow \infty} c(\gamma'; \mathcal{P}''_i) =c(\gamma')$.
% \begin{equation}
% \label{eq:p_sequence2}
% \lim_{i \rightarrow \infty} c(\gamma; \mathcal{P}''_i) =c(\gamma), \;\;
% \lim_{i \rightarrow \infty} c(\gamma'; \mathcal{P}''_i) =c(\gamma').
% \end{equation}
Now, since the chain of inequalities \eqref{eq:chain} holds for any partition $\mathcal{P}\supseteq \mathcal{P}_0$ and since $\mathcal{P}''_i \supseteq \mathcal{P}_0$ for each $i\in\mathbb{N}$, $\big| c(\gamma; \mathcal{P}''_i) - c(\gamma'; \mathcal{P}''_i) \big|   \leq \epsilon$
% \[
% \big| c(\gamma; \mathcal{P}''_i) - c(\gamma'; \mathcal{P}''_i) \big|   \leq \epsilon
% \]
holds for all $i\in\mathbb{N}$. Therefore, we obtain
$|c(\gamma)-c(\gamma')|=\lim_{i\rightarrow \infty} |c(\gamma; \mathcal{P}''_i) - c(\gamma'; \mathcal{P}''_i)|   \leq \epsilon$.

%\section{Specification of the Quadrotor}
%\label{ap:e}

% you can choose not to have a title for an appendix
% if you want by leaving the argument blank

% use section* for acknowledgment
%\section*{Acknowledgment}

%The authors would like to thank...

% Can use something like this to put references on a page
% by themselves when using endfloat and the captionsoff option.
\ifCLASSOPTIONcaptionsoff
  \newpage
\fi

% trigger a \newpage just before the given reference
% number - used to balance the columns on the last page
% adjust value as needed - may need to be readjusted if
% the document is modified later
%\IEEEtriggeratref{8}
% The "triggered" command can be changed if desired:
%\IEEEtriggercmd{\enlargethispage{-5in}}

% references section

% can use a bibliography generated by BibTeX as a .bbl file
% BibTeX documentation can be easily obtained at:
% http://mirror.ctan.org/biblio/bibtex/contrib/doc/
% The IEEEtran BibTeX style support page is at:
% http://www.michaelshell.org/tex/ieeetran/bibtex/
%\bibliographystyle{IEEEtran}
% argument is your BibTeX string definitions and bibliography database(s)
%\bibliography{IEEEabrv,../bib/paper}
%
% <OR> manually copy in the resultant .bbl file
% set second argument of \begin to the number of references
% (used to reserve space for the reference number labels box)

\bibliographystyle{IEEEtran}
%\bibliography{IEEEabrv,ref}
\bibliography{IEEEabrv,main.bib}

% Generated by IEEEtran.bst, version: 1.12 (2007/01/11)
\begin{thebibliography}{10}
\providecommand{\url}[1]{#1}
\csname url@samestyle\endcsname
\providecommand{\newblock}{\relax}
\providecommand{\bibinfo}[2]{#2}
\providecommand{\BIBentrySTDinterwordspacing}{\spaceskip=0pt\relax}
\providecommand{\BIBentryALTinterwordstretchfactor}{4}
\providecommand{\BIBentryALTinterwordspacing}{\spaceskip=\fontdimen2\font plus
\BIBentryALTinterwordstretchfactor\fontdimen3\font minus
  \fontdimen4\font\relax}
\providecommand{\BIBforeignlanguage}[2]{{%
\expandafter\ifx\csname l@#1\endcsname\relax
\typeout{** WARNING: IEEEtran.bst: No hyphenation pattern has been}%
\typeout{** loaded for the language `#1'. Using the pattern for}%
\typeout{** the default language instead.}%
\else
\language=\csname l@#1\endcsname
\fi
#2}}
\providecommand{\BIBdecl}{\relax}
\BIBdecl

\bibitem{hart1968formal}
P.~E. Hart, N.~J. Nilsson, and B.~Raphael, ``A formal basis for the heuristic
  determination of minimum cost paths,'' \emph{IEEE Trans. Systems Science and
  Cybernetics}, vol.~4, no.~2, pp. 100--107, 1968.

\bibitem{kavraki1996probabilistic}
L.~E. Kavraki, P.~Svestka, J.-C. Latombe, and M.~H. Overmars, ``Probabilistic
  roadmaps for path planning in high-dimensional configuration spaces,''
  \emph{IEEE Trans. Robot. Autom.}, vol.~12, no.~4, pp. 566--580, 1996.

\bibitem{lavalle2001rapidly}
S.~M. LaValle, J.~J. Kuffner, B.~Donald \emph{et~al.}, ``Rapidly-exploring
  random trees: Progress and prospects,'' \emph{Algorithmic and computational
  robotics: new directions}, vol.~5, pp. 293--308, 2001.

\bibitem{platt2010belief}
R.~Platt~Jr \emph{et~al.}, ``Belief space planning assuming maximum likelihood
  observations,'' in \emph{Proc. the Robotics: Science and Systems Conf.},
  2010.

\bibitem{van2011lqg}
J.~Van Den~Berg, P.~Abbeel, and K.~Goldberg, ``{LQG-MP}: Optimized path
  planning for robots with motion uncertainty and imperfect state
  information,'' \emph{The Int. J. Robotics Research}, vol.~30, no.~7, pp.
  895--913, 2011.

\bibitem{kilic2021slip}
C.~Kilic, N.~Ohi, Y.~Gu, and J.~N. Gross, ``Slip-based autonomous {ZUPT}
  through {G}aussian process to improve planetary rover localization,''
  \emph{IEEE Robotics and Automation Letters}, vol.~6, no.~3, pp. 4782--4789,
  2021.

\bibitem{toupet2020terrain}
O.~Toupet, J.~Biesiadecki, A.~Rankin, A.~Steffy, G.~Meirion-Griffith,
  D.~Levine, M.~Schadegg, and M.~Maimone, ``Terrain-adaptive wheel speed
  control on the curiosity mars rover: Algorithm and flight results,'' \emph{J.
  Field Robotics}, vol.~37, no.~5, pp. 699--728, 2020.

\bibitem{li2008characterization}
R.~Li, B.~Wu, K.~Di, A.~Angelova, R.~E. Arvidson, I.-C. Lee, M.~Maimone, L.~H.
  Matthies, L.~Richer, R.~Sullivan \emph{et~al.}, ``Characterization of
  traverse slippage experienced by spirit rover on husband hill at gusev
  crater,'' \emph{J. Geophysical Research: Planets}, vol. 113, no. E12, 2008.

\bibitem{grotzinger2012mars}
J.~P. Grotzinger, J.~Crisp, A.~R. Vasavada, R.~C. Anderson, C.~J. Baker,
  R.~Barry, D.~F. Blake, P.~Conrad, K.~S. Edgett, B.~Ferdowski \emph{et~al.},
  ``Mars science laboratory mission and science investigation,'' \emph{Space
  science reviews}, vol. 170, no.~1, pp. 5--56, 2012.

\bibitem{blosch2010vision}
M.~Bl{\"o}sch, S.~Weiss, D.~Scaramuzza, and R.~Siegwart, ``Vision based {MAV}
  navigation in unknown and unstructured environments,'' in \emph{IEEE Int.
  Conf. Robotics and Automation}.\hskip 1em plus 0.5em minus 0.4em\relax IEEE,
  2010, pp. 21--28.

\bibitem{shen2011autonomous}
S.~Shen, N.~Michael, and V.~Kumar, ``Autonomous multi-floor indoor navigation
  with a computationally constrained {MAV},'' in \emph{IEEE Int. Conf. Robotics
  and Automation}.\hskip 1em plus 0.5em minus 0.4em\relax IEEE, 2011, pp.
  20--25.

\bibitem{lavalle2006planning}
S.~M. LaValle, \emph{Planning algorithms}.\hskip 1em plus 0.5em minus
  0.4em\relax Cambridge university press, 2006.

\bibitem{alterovitz2007stochastic}
R.~Alterovitz, T.~Sim{\'e}on, and K.~Goldberg, ``The stochastic motion roadmap:
  A sampling framework for planning with {M}arkov motion uncertainty,'' in
  \emph{Robotics: Science and systems}, 2007.

\bibitem{agha2014firm}
A.-A. Agha-Mohammadi, S.~Chakravorty, and N.~M. Amato, ``{FIRM}: Sampling-based
  feedback motion-planning under motion uncertainty and imperfect
  measurements,'' \emph{The Int. J. Robotics Research}, vol.~33, no.~2, pp.
  268--304, 2014.

\bibitem{prentice2009belief}
S.~Prentice and N.~Roy, ``The belief roadmap: Efficient planning in belief
  space by factoring the covariance,'' \emph{The Int. J. Robotics Research},
  vol.~28, no. 11-12, pp. 1448--1465, 2009.

\bibitem{roy1999coastal}
N.~Roy, W.~Burgard, D.~Fox, and S.~Thrun, ``Coastal navigation-mobile robot
  navigation with uncertainty in dynamic environments,'' in \emph{Proc. IEEE
  Int. Conf. Robot. and Autom.}, vol.~1.\hskip 1em plus 0.5em minus 0.4em\relax
  IEEE, 1999, pp. 35--40.

\bibitem{lambert2003safe}
A.~Lambert and D.~Gruyer, ``Safe path planning in an uncertain-configuration
  space,'' in \emph{Proc. IEEE Int. Conf. Robotics and Automation},
  vol.~3.\hskip 1em plus 0.5em minus 0.4em\relax IEEE, 2003, pp. 4185--4190.

\bibitem{pepy2006safe}
R.~Pepy and A.~Lambert, ``Safe path planning in an uncertain-configuration
  space using {RRT},'' in \emph{Proc. IEEE/RSJ Int. Conf. Intelligent Robots
  and Systems}.\hskip 1em plus 0.5em minus 0.4em\relax IEEE, 2006, pp.
  5376--5381.

\bibitem{bry2011rapidly}
A.~Bry and N.~Roy, ``Rapidly-exploring random belief trees for motion planning
  under uncertainty,'' in \emph{Proc. IEEE Int. Conf. Robotics and
  Automation}.\hskip 1em plus 0.5em minus 0.4em\relax IEEE, 2011, pp. 723--730.

\bibitem{van2012motion}
J.~Van Den~Berg, S.~Patil, and R.~Alterovitz, ``Motion planning under
  uncertainty using iterative local optimization in belief space,'' \emph{The
  Int. J. Robotics Research}, vol.~31, no.~11, pp. 1263--1278, 2012.

\bibitem{van2017motion}
------, ``Motion planning under uncertainty using differential dynamic
  programming in belief space,'' in \emph{Robotics Research}.\hskip 1em plus
  0.5em minus 0.4em\relax Springer, 2017, pp. 473--490.

\bibitem{sun2016stochastic}
W.~Sun, J.~V.~D. Berg, and R.~Alterovitz, ``Stochastic extended {LQR} for
  optimization-based motion planning under uncertainty,'' \emph{IEEE Trans.
  Automation Science and Engineering}, vol.~13, no.~2, pp. 437--447, 2016.

\bibitem{patil2014gaussian}
S.~Patil, Y.~Duan, J.~Schulman, K.~Goldberg, and P.~Abbeel, ``Gaussian belief
  space planning with discontinuities in sensing domains,'' in \emph{IEEE Int.
  Conf. Robotics and Automation}.\hskip 1em plus 0.5em minus 0.4em\relax IEEE,
  2014, pp. 6483--6490.

\bibitem{blackmore2006probabilistic}
L.~Blackmore, H.~Li, and B.~Williams, ``A probabilistic approach to optimal
  robust path planning with obstacles,'' in \emph{Proc. American Control
  Conf.}\hskip 1em plus 0.5em minus 0.4em\relax IEEE, 2006, pp. 2831--2837.

\bibitem{blackmore2011chance}
L.~Blackmore, M.~Ono, and B.~C. Williams, ``Chance-constrained optimal path
  planning with obstacles,'' \emph{IEEE Trans. Robotics}, vol.~27, no.~6, pp.
  1080--1094, 2011.

\bibitem{vitus2011closed}
M.~P. Vitus and C.~J. Tomlin, ``Closed-loop belief space planning for linear,
  {G}aussian systems,'' in \emph{IEEE Int. Conf. Robotics and
  Automation}.\hskip 1em plus 0.5em minus 0.4em\relax IEEE, 2011, pp.
  2152--2159.

\bibitem{ono2008iterative}
M.~Ono and B.~C. Williams, ``Iterative risk allocation: A new approach to
  robust model predictive control with a joint chance constraint,'' in
  \emph{Proc. IEEE Conf. Decision and Control}.\hskip 1em plus 0.5em minus
  0.4em\relax IEEE, 2008, pp. 3427--3432.

\bibitem{jasour2015semidefinite}
A.~Jasour, N.~S. Aybat, and C.~M. Lagoa, ``Semidefinite programming for chance
  constrained optimization over semialgebraic sets,'' \emph{SIAM J.
  Optimization}, vol.~25, no.~3, pp. 1411--1440, 2015.

\bibitem{blackmore2010probabilistic}
L.~Blackmore, M.~Ono, A.~Bektassov, and B.~C. Williams, ``A probabilistic
  particle-control approximation of chance-constrained stochastic predictive
  control,'' \emph{IEEE Trans. Robotics}, vol.~26, no.~3, pp. 502--517, 2010.

\bibitem{wang2020non}
A.~Wang, A.~Jasour, and B.~C. Williams, ``Non-{G}aussian chance-constrained
  trajectory planning for autonomous vehicles under agent uncertainty,''
  \emph{IEEE Robotics and Automation Letters}, vol.~5, no.~4, pp. 6041--6048,
  2020.

\bibitem{ono2015chance}
M.~Ono, M.~Pavone, Y.~Kuwata, and J.~Balaram, ``Chance-constrained dynamic
  programming with application to risk-aware robotic space exploration,''
  \emph{Autonomous Robots}, vol.~39, no.~4, pp. 555--571, 2015.

\bibitem{aoude2013probabilistically}
G.~S. Aoude, B.~D. Luders, J.~M. Joseph, N.~Roy, and J.~P. How,
  ``Probabilistically safe motion planning to avoid dynamic obstacles with
  uncertain motion patterns,'' \emph{Autonomous Robots}, vol.~35, no.~1, pp.
  51--76, 2013.

\bibitem{dai2019chance}
S.~Dai, S.~Schaffert, A.~Jasour, A.~Hofmann, and B.~Williams, ``Chance
  constrained motion planning for high-dimensional robots,'' in \emph{Proc.
  Int. Conf. Robotics and Automation}.\hskip 1em plus 0.5em minus 0.4em\relax
  IEEE, 2019, pp. 8805--8811.

\bibitem{da2019collision}
M.~da~Silva~Arantes, C.~F.~M. Toledo, B.~C. Williams, and M.~Ono,
  ``Collision-free encoding for chance-constrained nonconvex path planning,''
  \emph{IEEE Trans. Robotics}, vol.~35, no.~2, pp. 433--448, 2019.

\bibitem{luders2010chance}
B.~Luders, M.~Kothari, and J.~How, ``Chance constrained {RRT} for probabilistic
  robustness to environmental uncertainty,'' in \emph{AIAA Guidance,
  Navigation, and Control Conf.}, 2010, p. 8160.

\bibitem{du2011robot}
N.~E. Du~Toit and J.~W. Burdick, ``Robot motion planning in dynamic, uncertain
  environments,'' \emph{IEEE Trans. Robotics}, vol.~28, no.~1, pp. 101--115,
  2011.

\bibitem{kothari2013probabilistically}
M.~Kothari and I.~Postlethwaite, ``A probabilistically robust path planning
  algorithm for {UAV}s using rapidly-exploring random trees,'' \emph{J.
  Intelligent \& Robotic Systems}, vol.~71, no.~2, pp. 231--253, 2013.

\bibitem{luders2013robust}
B.~D. Luders, S.~Karaman, and J.~P. How, ``Robust sampling-based motion
  planning with asymptotic optimality guarantees,'' in \emph{AIAA Guidance,
  Navigation, and Control Conf.}, 2013, p. 5097.

\bibitem{liu2014incremental}
W.~Liu and M.~H. Ang, ``Incremental sampling-based algorithm for risk-aware
  planning under motion uncertainty,'' in \emph{Proc. IEEE Int. Conf. Robotics
  and Automation}.\hskip 1em plus 0.5em minus 0.4em\relax IEEE, 2014, pp.
  2051--2058.

\bibitem{he2008planning}
R.~He, S.~Prentice, and N.~Roy, ``Planning in information space for a quadrotor
  helicopter in a {GPS}-denied environment,'' in \emph{Proc. IEEE Int. Conf.
  Robotics and Automation}.\hskip 1em plus 0.5em minus 0.4em\relax IEEE, 2008,
  pp. 1814--1820.

\bibitem{levine2013information}
D.~Levine, B.~Luders, and J.~P. How, ``Information-theoretic motion planning
  for constrained sensor networks,'' \emph{J. Aerospace Information Systems},
  vol.~10, no.~10, pp. 476--496, 2013.

\bibitem{folsom2021scalable}
L.~Folsom, M.~Ono, K.~Otsu, and H.~Park, ``Scalable information-theoretic path
  planning for a rover-helicopter team in uncertain environments,'' \emph{Int.
  J. Advanced Robotic Systems}, vol.~18, no.~2, p. 1729881421999587, 2021.

\bibitem{larsson2020information}
D.~T. Larsson, D.~Maity, and P.~Tsiotras, ``Information-theoretic abstractions
  for planning in agents with computational constraints,'' \emph{arXiv preprint
  arXiv:2005.09611}, 2020.

\bibitem{krishnamurthy2016partially}
V.~Krishnamurthy, \emph{Partially Observed {M}arkov Decision Processes}.\hskip
  1em plus 0.5em minus 0.4em\relax Cambridge University Press, 2016.

\bibitem{carlone2018attention}
L.~Carlone and S.~Karaman, ``Attention and anticipation in fast visual-inertial
  navigation,'' \emph{IEEE Trans. Robot.}, vol.~35, no.~1, pp. 1--20, 2018.

\bibitem{tzoumas2020lqg}
V.~Tzoumas, L.~Carlone, G.~J. Pappas, and A.~Jadbabaie, ``{LQG} control and
  sensing co-design,'' \emph{IEEE Trans. Automatic Control}, vol.~66, no.~4,
  pp. 1468--1483, 2020.

\bibitem{hashemi2020randomized}
A.~Hashemi, M.~Ghasemi, H.~Vikalo, and U.~Topcu, ``Randomized greedy sensor
  selection: Leveraging weak submodularity,'' \emph{IEEE Trans. Automatic
  Control}, vol.~66, no.~1, pp. 199--212, 2020.

\bibitem{karaman2011sampling}
S.~Karaman and E.~Frazzoli, ``Sampling-based algorithms for optimal motion
  planning,'' \emph{The Int. J. Robotics Research}, vol.~30, no.~7, pp.
  846--894, 2011.

\bibitem{miskowicz2018event}
M.~Miskowicz, \emph{Event-based control and signal processing}.\hskip 1em plus
  0.5em minus 0.4em\relax CRC press, 2018.

\bibitem{pedram2021rationally}
A.~R. Pedram, J.~Stefan, R.~Funada, and T.~Tanaka, ``Rationally inattentive
  path-planning via {RRT*},'' in \emph{Proc. American Control
  Conference}.\hskip 1em plus 0.5em minus 0.4em\relax IEEE, 2021, pp.
  3440--3446.

\bibitem{vandenberghe1998determinant}
L.~Vandenberghe, S.~Boyd, and S.-P. Wu, ``Determinant maximization with linear
  matrix inequality constraints,'' \emph{SIAM J. Matrix Analysis and
  Applications}, vol.~19, no.~2, pp. 499--533, 1998.

\bibitem{chen2015optimal}
Y.~Chen, T.~T. Georgiou, and M.~Pavon, ``Optimal steering of a linear
  stochastic system to a final probability distribution, part {I},'' \emph{IEEE
  Trans. Automatic Control}, vol.~61, no.~5, pp. 1158--1169, 2015.

\bibitem{okamoto2018optimal}
K.~Okamoto, M.~Goldshtein, and P.~Tsiotras, ``Optimal covariance control for
  stochastic systems under chance constraints,'' \emph{IEEE Control Systems
  Letters}, vol.~2, no.~2, pp. 266--271, 2018.

\bibitem{zhang2021properties}
Y.~Zhang, W.~Liu, Z.~Chen, K.~Li, and J.~Wang, ``On the properties of
  {K}ullback-{L}eibler divergence between {G}aussians,'' \emph{arXiv preprint
  arXiv:2102.05485}, 2021.

\bibitem{pinele2020fisher}
J.~Pinele, J.~E. Strapasson, and S.~I. Costa, ``The {F}isher--{R}ao distance
  between multivariate normal distributions: Special cases, bounds and
  applications,'' \emph{Entropy}, vol.~22, no.~4, p. 404, 2020.

\bibitem{nielsen2020generalization}
F.~Nielsen, ``On a generalization of the {J}ensen--{S}hannon divergence and the
  {J}ensen--{S}hannon centroid,'' \emph{Entropy}, vol.~22, no.~2, p. 221, 2020.

\bibitem{pardo2018statistical}
L.~Pardo, \emph{Statistical inference based on divergence measures}.\hskip 1em
  plus 0.5em minus 0.4em\relax Chapman and Hall/CRC, 2018.

\bibitem{karaman2010incremental}
S.~Karaman and E.~Frazzoli, ``Incremental sampling-based algorithms for optimal
  motion planning,'' \emph{arXiv preprint arXiv:1005.0416}, 2010.

\bibitem{carothers2000real}
N.~L. Carothers, \emph{Real analysis}.\hskip 1em plus 0.5em minus 0.4em\relax
  Cambridge University Press, 2000.

\bibitem{tanaka2015sdp}
T.~Tanaka and H.~Sandberg, ``{SDP}-based joint sensor and controller design for
  information-regularized optimal {LQG} control,'' in \emph{Proc. 54th IEEE
  Conf. Decision and Control}.\hskip 1em plus 0.5em minus 0.4em\relax IEEE,
  2015, pp. 4486--4491.

\bibitem{tanaka2016semidefinite}
T.~Tanaka, K.-K.~K. Kim, P.~A. Parrilo, and S.~K. Mitter, ``Semidefinite
  programming approach to {G}aussian sequential rate-distortion trade-offs,''
  \emph{IEEE Trans. Automatic Control}, vol.~62, no.~4, pp. 1896--1910, 2016.

\bibitem{tanaka2017lqg}
T.~Tanaka, P.~M. Esfahani, and S.~K. Mitter, ``{LQG} control with minimum
  directed information: Semidefinite programming approach,'' \emph{IEEE Trans.
  Automatic Control}, vol.~63, no.~1, pp. 37--52, 2017.

\bibitem{karaman2011anytime}
S.~Karaman, M.~R. Walter, A.~Perez, E.~Frazzoli, and S.~Teller, ``Anytime
  motion planning using the {RRT},'' in \emph{Proc. IEEE Int. Conf. Robot.
  Autom.}, 2011, pp. 1478--1483.

\bibitem{ferguson2006anytime}
D.~Ferguson and A.~Stentz, ``Anytime {RRT}s,'' in \emph{Proc. IEEE/RSJ Int.
  Conf. Intelligent Robots and Systems}.\hskip 1em plus 0.5em minus 0.4em\relax
  IEEE, 2006, pp. 5369--5375.

\bibitem{otte2013c}
M.~Otte and N.~Correll, ``C-forest: Parallel shortest path planning with
  superlinear speedup,'' \emph{IEEE Trans. Robotics}, vol.~29, no.~3, pp.
  798--806, 2013.

\bibitem{gammell2014informed}
J.~D. Gammell, S.~S. Srinivasa, and T.~D. Barfoot, ``Informed {RRT}*: Optimal
  sampling-based path planning focused via direct sampling of an admissible
  ellipsoidal heuristic,'' in \emph{Proc. IEEE/RSJ Int. Conf. Intelligent
  Robots and Systems}.\hskip 1em plus 0.5em minus 0.4em\relax IEEE, 2014, pp.
  2997--3004.

\bibitem{arslan2013use}
O.~Arslan and P.~Tsiotras, ``Use of relaxation methods in sampling-based
  algorithms for optimal motion planning,'' in \emph{2013 IEEE International
  Conference on Robotics and Automation}.\hskip 1em plus 0.5em minus
  0.4em\relax IEEE, 2013, pp. 2421--2428.

\bibitem{lavalle2001randomized}
S.~M. LaValle and J.~J. Kuffner~Jr, ``Randomized kinodynamic planning,''
  \emph{The Int. J. Robotics Research}, vol.~20, no.~5, pp. 378--400, 2001.

\bibitem{solovey2020revisiting}
K.~Solovey, L.~Janson, E.~Schmerling, E.~Frazzoli, and M.~Pavone, ``Revisiting
  the asymptotic optimality of {RRT},'' in \emph{Proc. IEEE Int. Conf. Robotics
  and Automation}.\hskip 1em plus 0.5em minus 0.4em\relax IEEE, 2020, pp.
  2189--2195.

\bibitem{sakcak2021complete}
B.~Sakcak and S.~M. LaValle, ``Complete path planning that simultaneously
  optimizes length and clearance,'' in \emph{Proc. IEEE Int. Conf. Robotics and
  Automation}.\hskip 1em plus 0.5em minus 0.4em\relax IEEE, 2021, pp.
  10\,100--10\,106.

\bibitem{Lowry16_visual_place}
S.~{Lowry}, N.~{Sünderhauf}, P.~{Newman}, J.~J. {Leonard}, D.~{Cox},
  P.~{Corke}, and M.~J. {Milford}, ``Visual place recognition: A survey,''
  \emph{IEEE Trans. Robotics}, vol.~32, no.~1, pp. 1--19, 2016.

\bibitem{Kawai11_panorama_cam}
H.~Kawai, T.~Murao, and M.~Fujita, ``Passivity-based visual motion observer
  with panoramic camera for pose control,'' \emph{J. Intelligent and Robotic
  Systems}, vol.~64, pp. 561--583, 2011.

\bibitem{hoffmann2007quadrotor}
G.~Hoffmann, H.~Huang, S.~Waslander, and C.~Tomlin, ``Quadrotor helicopter
  flight dynamics and control: Theory and experiment,'' in \emph{AIAA guidance,
  navigation and control conf. and exhibit}, 2007, p. 6461.

\bibitem{sturtevant2012benchmarks}
\BIBentryALTinterwordspacing
N.~Sturtevant, ``Benchmarks for grid-based pathfinding,'' \emph{Transactions on
  Computational Intelligence and AI in Games}, vol.~4, no.~2, pp. 144 -- 148,
  2012. [Online]. Available:
  \url{http://web.cs.du.edu/~sturtevant/papers/benchmarks.pdf}
\BIBentrySTDinterwordspacing

\bibitem{cubuktepe2020scalable}
M.~Cubuktepe, T.~Tanaka, and U.~Topcu, ``Scalable synthesis of
  minimum-information {L}inear-{G}aussian control by distributed
  optimization,'' in \emph{2020 59th IEEE Conference on Decision and Control
  (CDC)}.\hskip 1em plus 0.5em minus 0.4em\relax IEEE, 2020, pp. 1750--1757.

\bibitem{schmitt1992perturbation}
B.~A. Schmitt, ``Perturbation bounds for matrix square roots and {P}ythagorean
  sums,'' \emph{Linear algebra and its applications}, vol. 174, pp. 215--227,
  1992.

\end{thebibliography}
\end{document}